\newtheorem{thm}{Theorem}%[section]
\newtheorem{theorem}[thm]{Theorem}%[section]
\newtheorem{lemma}[thm]{Lemma}
\newtheorem{cor}[thm]{Corollary}
\newtheorem{definition}[thm]{Definition}
\newtheorem{remark}[thm]{Remark}
\newtheorem{example}[thm]{Example}
\newtheorem*{exercise}{Exercise}
\newcommand\R{\mathbb{R}}
\DeclareFontFamily{U}{fsy}{}
\DeclareFontShape{U}{fsy}{m}{n}{<->s*[.9]psyr}{}
\DeclareSymbolFont{der@m}{U}{fsy}{m}{n}
\DeclareMathSymbol{\der}{\mathord}{der@m}{182}
\DeclareSymbolFont{der@m}{U}{fsy}{m}{n}
\DeclareMathSymbol{\derdelta}{\mathord}{der@m}{100}
\DeclareFontFamily{OMS}{smallo}{}
\DeclareFontShape{OMS}{smallo}{m}{n}{<->s*[.65]cmsy10}{}
\DeclareSymbolFont{smallo@m}{OMS}{smallo}{m}{n}
\DeclareMathSymbol{\smallo}{\mathord}{smallo@m}{79}
\DeclareSymbolFont{imag@m}{OT1}{cmr}{m}{ui}
\DeclareMathSymbol{\imag}{\mathord}{imag@m}{105}
\DeclareFontFamily{U}{mathb}{\hyphenchar\font45}
\DeclareFontShape{U}{mathb}{m}{n}{
      <5> <6> <7> <8> <9> <10> gen * mathb
      <10.95> matha10 <12> <14.4> <17.28> <20.74> <24.88> matha12
      }{}
\DeclareSymbolFont{mathb}{U}{mathb}{m}{n}
\DeclareMathSymbol{\monus}{2}{mathb}{"01}
\title{Stochastic Sample Approximations \\ of (Local) Moduli of Continuity}
\author{%
  % David S.~Hippocampus\thanks{Use footnote for providing further information
  %   about author (webpage, alternative address)---\emph{not} for acknowledging
  %   funding agencies.} \\
  % Department of Computer Science\\
  % Cranberry-Lemon University\\
  % Pittsburgh, PA 15213 \\
  % \texttt{hippo@cs.cranberry-lemon.edu} \\
  Rodion Nazarov, 
  Allen Gehret, 
  Robert Noel Shorten, 
  Jakub Mareček
  % examples of more authors
  % \And
  % Coauthor \\
  % Affiliation \\
  % Address \\
  % \texttt{email} \\
  % \AND
  % Coauthor \\
  % Affiliation \\
  % Address \\
  % \texttt{email} \\
  % \And
  % Coauthor \\
  % Affiliation \\
  % Address \\
  % \texttt{email} \\
  % \And
  % Coauthor \\
  % Affiliation \\
  % Address \\
  % \texttt{email} \\
}
\begin{document}

\maketitle

\begin{abstract}
Modulus of local continuity is used to evaluate the robustness of neural networks and fairness of their repeated uses in closed-loop models. Here, we revisit a connection between generalized derivatives and moduli of local continuity, and present a non-uniform stochastic sample approximation for moduli of local continuity. This is of importance in studying robustness of neural networks and fairness of their repeated uses.    
\end{abstract}

\section{Introduction}

There is a substantial interest in \emph{scalable} verification of properties of neural networks.  
On one hand, some \emph{qualitative} properties of a neural network can be established  
by straightforward inspection of its compute graph.
For example, a ReLU-based network \cite{ReLU} will not be differentiable at points given by the break-points of the ReLU activation function, but will be locally Lipschitz. 
On the other hand, establishing \emph{quantitative} properties of the neural network, such as the precise Lipschitz constant and related metrics such as Consistent Robustness Analysis (CRA), is often non-trivial. 

The modulus of local continuity (e.g., the Lipschitz constant), a key quantitative property of a neural network, is important both for evaluating the robustness \cite{NIPS2016_980ecd05,limitations,yang2022closer,zuhlke2024adversarial} of neural networks and for evaluating properties  \cite{Marecek2021,10555099,nazarov2025humancompatibleinterconnecttestingpropertiesrepeated} of their repeated uses in closed-loop models. 
Recently, much of the research has focussed on two classes of methods. 
``LipMIP'' and its variants  \cite[e.g.]{tjeng2017evaluating,anderson2020strong,zhang2022general,schwan2023stability} utilized mixed-integer programming formulations of the neural network, either directly for ReLU-based networks, or using some piece-wise linear approximation \cite{GurobiML}. 
``LipSDP'' and its variants 
\cite[e.g.]{9301422,chen2020semialgebraic,latorre2020lipschitz,chen2021semialgebraic,pmlr-v115-dvijotham20a,chen2022robustness,9993136,wang2024scalability,yang2024verifying,pauli2024lipschitz,xu2024eclipse,azuma2025tight,syed2025improvedscalablelipschitzbounds}
utilize relaxations in the form of semidefinite programming (SDP). 
Substantial progress has been made recently in scaling these approaches. 
State-of the-art implementations \cite{wang2024scalability,yang2024verifying,azuma2025tight} can scale to shallow network with each of the hidden layers having hundreds of neurons. 
Still, this is substantially less than the neural networks utilized in many practical applications. 

Here, we revisit a connection between generalized derivatives and moduli of local continuity, and present a non-uniform stochastic sample approximation for these. 
At its simplest, the connection is well known 
\cite{weng2018evaluating,goodfellow2018gradientmaskingcausesclever,weng2018extensions}. Sampling values from the Clarke subdifferential at inputs sampled uniformly at random is a benchmark traditionally used for comparison of both LipSDP and LipMIP. For instance, \cite{wang2024scalability} sampled 500,000 points uniformly at random from the input space and computed the maximum norm, lower bound.
We propose algorithms for estimating the modulus of continuity using non-uniform sampling from the input space utilizing upper-confidence-bound (UCB) policies.
The resulting estimator is consistent, asymptotically unbiased, and asymptotically optimal within a class of sampling policies.

% pip install gurobipy

\section{Related Work}

\iffalse
We provide a brief overview in the supplementary material, including the treatment of 
Clarke, Bouligand, Gâteaux, Fréchet, and Hadamard generalized derivatives, sometimes also known as subdifferentials, whose elements are known as subgradients.
\fi

In this section we recall the basic theoretical setup for estimating Lipschitz constants, and review various ways this estimation has been done in practice.
We roughly follow the notation and definitions as in~\cite{jordan2020exactly}.
In particular, we assume $\|\cdot\|_{\alpha}$ and $\|\cdot\|_{\beta}$ are norms on $\R^n$ and $\R^m$ respectively.

\begin{definition}\cite[Definition 1]{jordan2020exactly}
The \textbf{local $(\alpha,\beta)$-Lipschitz constant} of a function $f:\R^n\to\R^m$ over a set $X\subseteq\R^n$ is defined as the following quantity:
\[
L^{(\alpha,\beta)}(f,X) \ := \ \sup_{x,y\in X}\frac{\|f(y)-f(x)\|_{\beta}}{\|x-y\|_{\alpha}} \quad (x\neq y)
\]
Moreover, if $L^{(\alpha,\beta)}(f,X)$ is finite, we say that $f$ is \textbf{$(\alpha,\beta)$-Lipschitz} over $X$.
\end{definition}

\noindent
The starting point for estimating Lipschitz constants in terms of Jacobians $J_f(x)$ is the following well-known fact for smooth functions defined on convex\footnote{The paper~\cite{jordan2020exactly} omits the assumption that $X$ is \emph{convex}; although this assumption gets used in the proof of~\cite[Theorem 1]{jordan2020exactly} when considering the function ``$f(x+t(y-x))$'' for arbitrary $x,y\in X$.} 
domains:
\begin{lemma}\label{lipschitz_constant_formula_smooth}
%\cite[(3)]{jordan2020exactly}
If $f:\R^n\to\R^m$ is $C^1$ and $(\alpha,\beta)$-Lipschitz over an open convex set $X$, then:
\[
L^{(\alpha,\beta)}(f,X) \ = \ \sup_{x\in X}\| J_f(x)\|_{\alpha,\beta}
\]
where $\|\cdot\|_{\alpha,\beta}$ is the induced matrix norm on $\R^{m\times n}$
\end{lemma}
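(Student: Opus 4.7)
The plan is to prove the equality by establishing both inequalities. For the upper bound $L^{(\alpha,\beta)}(f,X) \le \sup_{x\in X}\|J_f(x)\|_{\alpha,\beta}$, I would fix arbitrary $x,y\in X$ with $x\neq y$ and use the convexity of $X$ to ensure that the segment $\{x+t(y-x):t\in[0,1]\}$ lies in $X$. Applying the fundamental theorem of calculus componentwise to $g(t)=f(x+t(y-x))$ yields $f(y)-f(x)=\int_0^1 J_f(x+t(y-x))(y-x)\,dt$. Taking the $\beta$-norm, pulling it inside the integral via the triangle inequality, and applying the defining inequality of the induced matrix norm $\|Mv\|_\beta \le \|M\|_{\alpha,\beta}\|v\|_\alpha$ pointwise in $t$ gives $\|f(y)-f(x)\|_\beta \le \sup_{z\in X}\|J_f(z)\|_{\alpha,\beta}\cdot\|y-x\|_\alpha$. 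Dividing by $\|y-x\|_\alpha$ and taking the supremum over admissible $x,y$ closes this direction.

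For the reverse inequality, I would fix $x\in X$ and a unit direction $v\in\R^n$ with $\|v\|_\alpha=1$. Since $X$ is open, there exists $t_0>0$ with $x+tv\in X$ for all $t\in(0,t_0)$. The $C^1$ hypothesis gives the expansion $f(x+tv)-f(x)=tJ_f(x)v+o(t)$ as $t\to 0^+$, so the ratio $\|f(x+tv)-f(x)\|_\beta / t$ tends to $\|J_f(x)v\|_\beta$. Every such ratio is bounded by $L^{(\alpha,\beta)}(f,X)$ (because $\|tv\|_\alpha=t$), so the limit is too. Taking the supremum over unit $v$ recovers $\|J_f(x)\|_{\alpha,\beta}\le L^{(\alpha,\beta)}(f,X)$, and then a supremum over $x\in X$ concludes the argument.

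The main subtlety, and the reason the footnote about convexity in the excerpt matters, lies in the upper-bound direction: without convexity, the segment from $x$ to $y$ need not be contained in $X$, so the values of $J_f$ along that segment cannot be controlled by a supremum taken only over $X$, and the clean integral representation of $f(y)-f(x)$ breaks down. Openness is used only mildly, for the lower bound, to guarantee that $x+tv\in X$ for sufficiently small $t>0$ in every direction $v$; without openness one would have to restrict to directions pointing into $X$ and argue with one-sided derivatives. Everything else reduces to routine applications of the fundamental theorem of calculus for vector-valued functions, the triangle inequality for integrals, and the definition of the induced operator norm, so I do not foresee any genuine obstacle beyond being careful about these two topological hypotheses.
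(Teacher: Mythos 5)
Your proposal is correct and takes essentially the same route as the paper: the lower bound via directional derivatives (Lemmas~\ref{lipschitz_constant_bounds_dir_der} and~\ref{Easy_Lemma_Lipschitz_constant_bounds_Jacobian} in the appendix) and the upper bound via the fundamental theorem of calculus along line segments combined with the continuous triangle inequality and the matrix-norm Hölder bound (Lemmas~\ref{basic_lemma_Jacobian_bounds_diff_quot}, \ref{norm_integral_triangle_inequality}, \ref{Jordan_Prop_A_2}). You correctly identify where convexity and openness enter; the paper's only additional care is in arranging the argument around Riemann integrability for its later weakening to merely Fréchet-differentiable definable functions, which is unnecessary in the $C^1$ case you treat.
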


In practice, many functions, including neural networks involving ReLU activations, are not smooth but are still Lipschitz.
At the points where such functions are non-smooth, one does not have a well-defined Jacobian; instead, one must resort to one of several (possibly set-valued) \emph{generalized derivatives}, for instance the \emph{Clarke Jacobian} $J_f^c$ (cf. Definition~\ref{def_Clarke_Jacobian}). In this case, Lemma~\ref{lipschitz_constant_formula_smooth} can be generalized as follows:

\begin{theorem}\cite[Theorem 1]{jordan2020exactly}\label{thm:wrong}
Suppose $f:\R^n\to\R^m$ is $(\alpha,\beta)$-lipschitz over an open convex set $X$. Then the following equality holds:
\[
L^{(\alpha,\beta)}(f,X) \ = \ \sup_{x\in X, G\in J_f^c}\|G^T\|_{\alpha,\beta}
\]
\end{theorem}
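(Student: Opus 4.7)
The plan is to prove both inequalities separately. Throughout, Rademacher's theorem ensures that the $(\alpha,\beta)$-Lipschitz function $f$ is differentiable at almost every point of $X$, so the ordinary Jacobian $J_f(z)$ is defined a.e.\ and lies inside the Clarke Jacobian $J_f^c(z)$.

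For the direction ``$\ge$'', the key is to bound $\|J_f(z)\|_{\alpha,\beta}$ at an arbitrary differentiable point $z\in X$. Since $X$ is open, for any unit $\alpha$-norm direction $v\in\R^n$ we have $z+tv\in X$ for all sufficiently small $t>0$, and
\[
\|J_f(z)v\|_\beta \ = \ \lim_{t\to 0^+}\frac{\|f(z+tv)-f(z)\|_\beta}{t}\ \le\ L^{(\alpha,\beta)}(f,X).
\]
Taking the supremum over $v$ gives $\|J_f(z)\|_{\alpha,\beta}\le L^{(\alpha,\beta)}(f,X)$. Since $J_f^c(x)$ is the convex hull of limits of $\{J_f(x_k)\}$ for differentiable sequences $x_k\to x$, lower semicontinuity and convexity of the operator norm propagate this bound to every $G\in J_f^c(x)$; the transposed form in the statement is then supposed to be recovered via the standard operator-norm/dual-norm identity.

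For the direction ``$\le$'', fix $x\neq y$ in $X$ and, using convexity of $X$, confine the segment $\gamma(t)=x+t(y-x)$, $t\in[0,1]$, to $X$. The composition $\phi:=f\circ\gamma:[0,1]\to\R^m$ is Lipschitz, hence absolutely continuous, so $f(y)-f(x)=\int_0^1\phi'(t)\,dt$. At any $t$ for which $f$ is differentiable at $\gamma(t)$ one has $\phi'(t)=J_f(\gamma(t))(y-x)$ and hence $\|\phi'(t)\|_\beta\le\|J_f(\gamma(t))\|_{\alpha,\beta}\|y-x\|_\alpha$; integrating and taking suprema over $(x,y)$ then yields the claimed bound.

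The main obstacle lies precisely in this last step. Rademacher's theorem only gives differentiability of $f$ off a set of $n$-dimensional Lebesgue measure zero, while $\gamma([0,1])$ is itself a one-dimensional (hence $n$-dimensional null) set, so there is no automatic guarantee that $f$ is differentiable at $\gamma(t)$ for a.e.\ $t\in[0,1]$. The cleanest repair is a Clarke-type chain rule asserting that, for a.e.\ $t$, $\phi'(t)\in\{G(y-x):G\in J_f^c(\gamma(t))\}$, after which $\|\phi'(t)\|_\beta\le\sup_{G}\|G\|_{\alpha,\beta}\cdot\|y-x\|_\alpha$ closes the argument. A secondary point to resolve is the appearance of $G^T$ rather than $G$: the identity $\|G\|_{\alpha,\beta}=\|G^T\|_{\alpha,\beta}$ is not generally valid for arbitrary $\alpha,\beta$, so one must either invoke self-duality of the norms involved, reinterpret the indexing convention on matrix norms, or accept that the statement as written may require amendment. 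Given the label \texttt{thm:wrong}, I anticipate the authors to pinpoint exactly this discrepancy as where the proof in the cited reference breaks.
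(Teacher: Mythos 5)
Your overall strategy is sound and, crucially, you put your finger on the two real issues with~\cite[Theorem~1]{jordan2020exactly}: (i) the appearance of $G^T$ rather than $G$, which the paper's own corrected versions (Theorem~\ref{thm:main} and Corollary~\ref{Jordan_theorem_definable}) silently drop, and (ii) the fact that Rademacher's theorem only gives Fr\'echet-differentiability off an $n$-dimensional null set, which says nothing about a fixed line segment $[x,y]$. Your ``$\geq$'' direction is essentially the combination of Lemma~\ref{Easy_Lemma_Lipschitz_constant_bounds_Jacobian} and Lemma~\ref{sup_equivalence_template} in the appendix, and your FTC-along-$\gamma$ argument for ``$\leq$'' matches the skeleton of Lemma~\ref{basic_lemma_Jacobian_bounds_diff_quot}.

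Where you and the paper diverge is precisely in how the segment problem is repaired. You propose invoking a Clarke-type chain rule (equivalently, a Fubini argument over translates of the segment) so that $\phi'(t)\in J_f^c(\gamma(t))(y-x)$ for a.e.\ $t$; this works and proves the theorem for arbitrary locally Lipschitz $f$, but it pulls in the full Lebesgue/measure-theoretic machinery. The paper instead \emph{adds} the hypothesis that $f$ is definable in an o-minimal structure and uses the Path Lemma~\ref{path_lemma}: since $X\setminus\operatorname{Diff}(f)$ is then a definable set of dimension $<n$, one can replace $[x,y]$ by a polygonal path of nearly the same length whose open segments avoid the bad set entirely, after which everything closes with the Riemann integral (Lemma~\ref{basic_lemma_Jacobian_bounds_diff_quot}, Lemma~\ref{main_theorem_special_case}). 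This trades generality for elementariness, and also buys more: it yields the stronger Theorem~\ref{thm:main}/Lemma~\ref{equivalence_lemma}, where $J_f^c$ can be replaced by any a.e.-nonempty selection (``good map''), a statement that your Clarke-chain-rule route does not give and that is genuinely false without definability (see the example with $J_f^c(x)\equiv[-1,1]$). Also note the paper factors the argument differently from your two-inequality split: it first proves $L^{(\alpha,\beta)}(f,X)=\sup_{x\in\operatorname{Diff}(f)}\|J_f(x)\|_{\alpha,\beta}$ and then transports to $J_f^c$ via Lemma~\ref{sup_equivalence_template}, rather than bounding $J_f^c$ directly.

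One small correction to your ``$\geq$'' step: you do not need lower semicontinuity specifically; continuity of the norm plus the triangle inequality suffices to push the pointwise bound $\|J_f(x_j)\|_{\alpha,\beta}\leq L$ through limits and then through convex combinations, which is exactly what Lemma~\ref{basic_norm_sup_facts} packages. And on the $G^T$ question: you are right to be suspicious --- $\|G\|_{\alpha,\beta}=\|G^T\|_{\alpha,\beta}$ is not an identity for general $\alpha,\beta$ (one has $\|G^T\|_{\beta^*,\alpha^*}=\|G\|_{\alpha,\beta}$ by duality), and the paper's corrected statement confirms that the transpose should simply not be there.
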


The modulus of local continuity can be estimated using a number of approaches. The most general methods may utilize finite-difference schemes, where one samples pairs of points close by, and evaluates the difference. In neural networks, one may leverage \cite{weng2018evaluating} frameworks such as PyTorch and TensorFlow to estimate generalized derivatives, usually Clarke subgradients. 
More recent methods specific to neural networks leverage the computation graph of the neural network and either integer programming \cite{tjeng2017evaluating,anderson2020strong,zhang2022general,schwan2023stability} with linear relaxations, 
or polynomial optimization \cite{chen2020semialgebraic,chen2021semialgebraic} with SDP relaxations \cite{9301422,9301422,wang2024scalability,yang2024verifying,pauli2024lipschitz,xu2024eclipse,syed2025improvedscalablelipschitzbounds}. \cite{9993136,pauli2024lipschitz,xu2024eclipse} focus on the decomposition of the large SDP variable utilizing some notion of sparsity. 
There are a variety of other approaches as well  \cite{10164809,jordan2020exactly,Bonaert2021,jafarpour2022robustness}, including 
interval methods and branch and bound without convex relaxations   \cite{bunel2018unified,9303895,wei2022certified,NEURIPS2022_1700ad4e}, but their empirical performance is broadly speaking similar. 
Our work on the non-uniform sampling is also related (but not drawing on directly) to the Adaptive Sequential Elimination of \cite{aziz2018pure}. 
%although one could also consider \cite{NEURIPS2023_3b3a83a5}. 
Approaches such as \cite{jones1993lipschitzian,Gablonsky2001} are also related, but not directly applicable, due to their assumptions of (global) Lipschitz properties.

More broadly, the estimation of robustness properties \cite{9301422} draws upon a long history of the study of neural networks in the control theory community \cite[e.g.]{1868,parisini1995receding}, and a long history of control-theoretic contributions to machine learning \cite[e.g.]{bunel2018unified}.

\section{Our Approach}

\begin{figure}[t!]
\includegraphics[width=1.0\columnwidth]{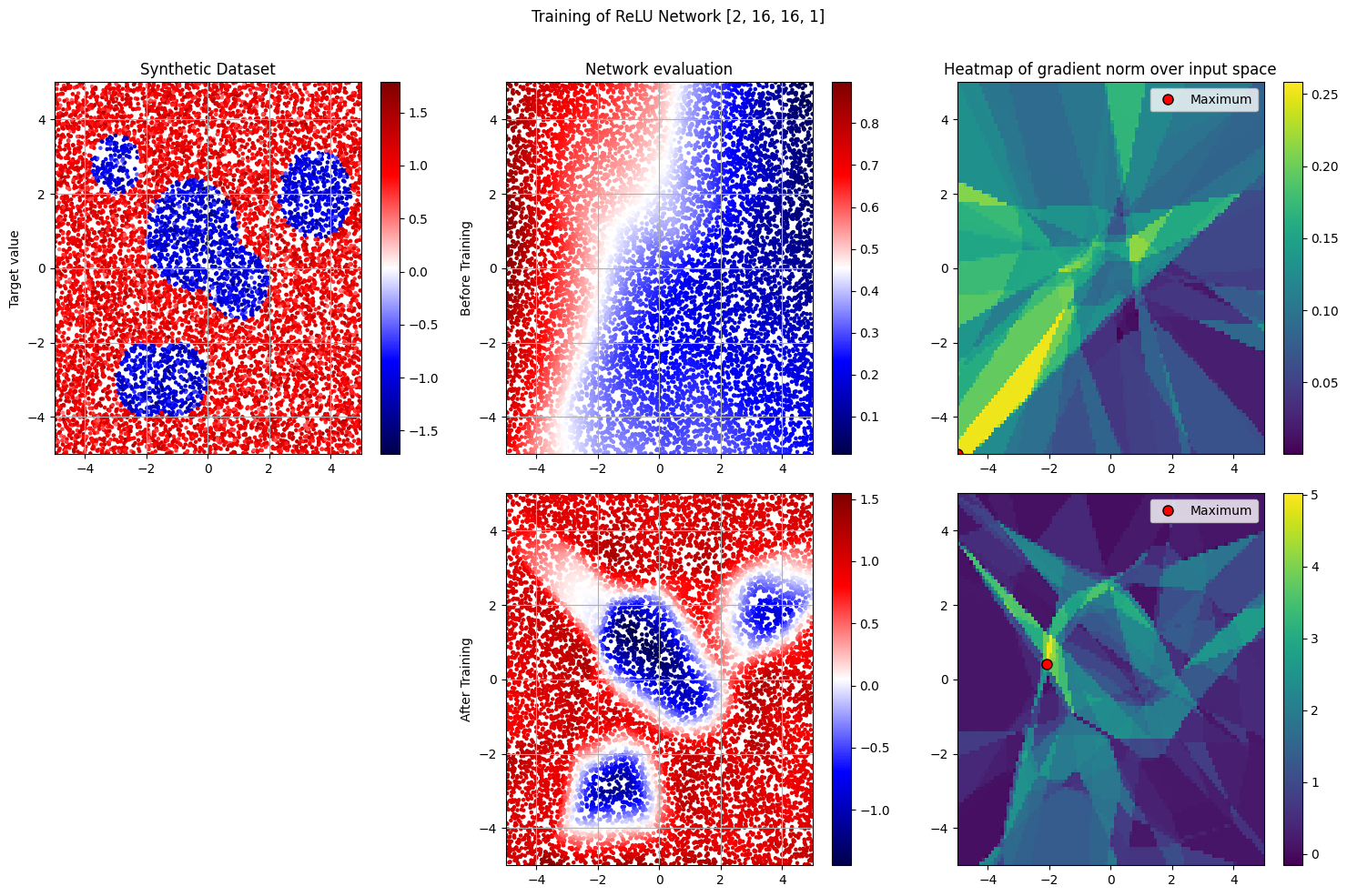}
\caption{A running example. Left:  training data. Middle: Neural network before (top) and after training (bottom). Right: Elements of Clarke subdifferential evaluated on the domain of the neural network prior to training (top) and after training (bottom). Maximum value sampled from the Clarke subdifferentials is marked with the red dot.}
\label{fig:Demo_experiment}
\end{figure}

We present a non-uniform stochastic sample approximation for estimating moduli of local continuity. First, we describe a straightforward  stochastic approximation for the modulus of (local) Lipschitz continuity \cite{weng2018evaluating} by approximating the maximum norm of the (generalized) Jacobian matrix the neural network:
$J_f(x)$. Subsequently, we extend this to non-uniform, adaptive sampling. 

\begin{remark}
Notice that many recent papers on the estimation of moduli of local continuity consider gradients and Jacobians, while they work with ReLU-based networks, which are non-smooth. 
In keeping with the literature, we will use $J_f(x)$ to denote the generalized Jacobian, computed using the Clarke subdifferential for modulus of local Lipschitz continuity, or some other generalized derivatives for other moduli of local continuity.  
\label{rem:subdifferential}
\end{remark}

\subsection{Sampling Clarke subdifferential}

Neural networks are a special case of non-smooth, non-convex functions, known as functions definable in o-minimal structures \cite{Ioffe08}. This class of functions comes with a chain rule for certain generalized derivatives, out of which the Clarke generalized derivative \cite{clarke1975generalized} is the most popular, and stability of definability, which guarantees that a composition of definable functions remains definable. (See the supplementary material.)  
This enables 
commonly used frameworks such as PyTorch and TensorFlow to have a formula for all elementary activation functions and to apply automatic differentiation of the neural network by matrix multiplication.
This way, one can evaluate elements of the Clarke subdifferential by a single call to the autograd library.  
By doing so for many inputs and looking at the maximum of the sampled values, one can estimate the modulus of local Lipschitz continuity. See Algorithm \ref{alg:SA}.

\begin{algorithm}
    \caption{Standard Stochastic Approximation, cf. \cite{weng2018evaluating} }\label{alg:SA}
    \begin{algorithmic}
        \State \textbf{Input:} Neural network $f(x)$ and its convex open domain $D$, number of samples $N$.
        \State $r \gets 0$
        \For{$1 \leq i \leq N$}
            \State Select input point $x \in D$ uniformly at random from $D$ 
            \State Evaluate $f(x)$; compute $J_f(x)$ by backpropagation, cf. Remark \ref{rem:subdifferential}
            \State Set  $r \gets \max\{r, \|J_f(x)\|_1 
 \}$
        \EndFor 
        \State \Return $r$
    \end{algorithmic}
\end{algorithm}

\subsection{Generalizations}

One can generalize Algorithm~\ref{alg:SA} in two directions: first, to other moduli of local  continuity, and second, to other forms of sampling. 
Algorithm \ref{alg:SA_static} is parametrized by the modulus of local continuity and  divides the $d$ dimensional input space into $k^d$ subregions and equally samples results from each one.
This is illustrated in Figure \ref{fig:SA_mods_partitioning}. 
Notice that the change in sampling is not a very important one, yet: Experiments of Section \ref{sec:exp} (esp. Figure \ref{fig:Equal_partitioning_error_to_budget}) show that there is no particular correlation between the number of partitions and the precision of the approximation.

\begin{algorithm}
    \caption{Generalized Stochastic Approximation with Uniform  Partitioning of the Domain}\label{alg:SA_static}
    \begin{algorithmic}
        \State \textbf{Input:} 
       Norms $\|\cdot\|_{\alpha}$ and $\|\cdot\|_{\beta}$. 
        Neural network $f(x)$ and its convex open domain $D$, number of samples $N$,
        number of divisions per dimensions $K$.
        \State $r \gets 0$
        %\State $dim \gets dimension(D)$
        \State Partion domain $D$ into subregions $S \gets $ init\_subregions$(D, K)$
        \For{$s \in S$}
            \For{$1 \leq i \leq N/|S|$}
                \State Select input point $x \in s$ uniformly at random from subregion $s \subseteq D$.
                \State Evaluate $f(x)$; compute  $\|J_f(x)\|_{\alpha,\beta}$ for a generalized Jacobian corresponding to the chosen notion of local continuity by backpropagation.Cf. Example \ref{ex:main}. 
                \State Set  $r \gets \max\{r, \|J_f(x)\|_{\alpha,\beta} 
 \}$
            \EndFor
        \EndFor
        \State \Return $r$
    \end{algorithmic}
\end{algorithm}

\subsection{Non-uniform sampling with UCB}

Non-uniform sampling, including  importance sampling   \cite{kahn1950random,robert1999monte,tokdar2010importance} and adaptive importance sampling \cite{bugallo2017adaptive}, is
a standard tool from statistics, which 
concentrates samples in regions where the estimate can be improved the most based on what has been sampled so far, without neglecting the other regions completely. 
In multi-armed bandit problems \cite{gittins2011multi,lattimore2020bandit}, the so-called upper-confidence bound policies \cite{lai1987adaptive,AuerUCB} have a long history as means of non-uniform sampling that has been shown to be asymptotically optimal in a variety of setttings. 
The infinity-armed bandit problem \cite{berry1997bandit,wang2008algorithms} can
be seen as an online sampling algorithm to estimate the maximum of a distribution.
We suggest to see the estimation of a modulus of local continuity as an infinity-armed bandit problem  \cite{berry1997bandit,wang2008algorithms}, although without the commonly considered assumptions of Lipschitz continuity \cite{Kleinberg2008,JiaYuanYu},  Bernoulli-distributed rewards \cite{berry1997bandit,NEURIPS2023_3b3a83a5}, and unique optimal arm \cite{de2021bandits}.

\begin{figure}[t!]
\includegraphics[width=1.0\columnwidth]{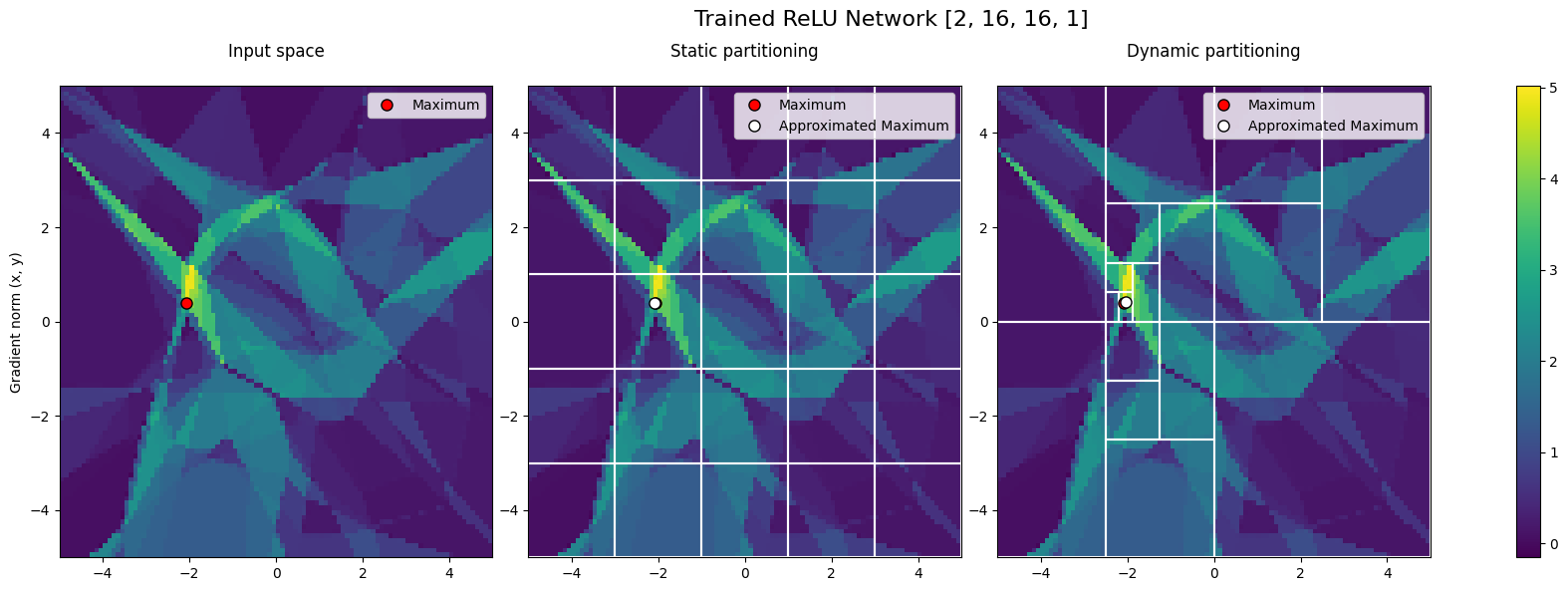}
\caption{The running example continued. Left: Elements of Clarke subdifferential evaluated on the domain of the trained neural network. Middle: uniform partitioning of the domain. Right: non-uniform partitioning of the domain utilizing the upper-confidence bounds.}
\label{fig:SA_mods_partitioning}
\end{figure}

In particular, Algorithm \ref{alg:SA_UCB} starts with an undivided domain $D$ of the neural network $f(x)$ as a single subregion. 
In each of $N$ iterations, one sample is randomly chosen from the subregion with the highest current UCB score (also known as an index), computed as: 
\[
\begin{cases}
s_{max} + c\cdot\sqrt{\frac{\ln(t+1)}{s_n}} \cdot s_{\sigma}, & \text{if } s_n > 10 \\
\infty, & \text{if } s_n \leq 10 \\
\end{cases}
\label{eq:UCB}
\]
where $s_{max}$ is the maximum encountered within the subregion so far, $s_n$ is the number of samples taken from the subregion so far, and $s_{\sigma}$ is the variance. 
After an exponentially increasing ($t_{m}$, $t_{m}^2$, $t_{m}^3, \ldots$) number of iterations,
a subregion that maximizes the UCB score \eqref{eq:UCB} is subdivided. When the domain is polyhedral, the subregions are kept polyhedral, and subdivided by axis-aligned hyperplanes. In particular, when subdividing subregion $s$, its longest side is halved by a perpendicular, axis-aligned hyperplane. $s_n = 10$ threshold is set to get initial samples for each new subregion.

\begin{algorithm}
    \caption{Generalized Stochastic Approximation with Non-uniform  Partitioning of the Domain}\label{alg:SA_UCB}
    \begin{algorithmic}
        \State \textbf{Input:} 
       Norms $\|\cdot\|_{\alpha}$ and $\|\cdot\|_{\beta}$. 
        Neural network $f(x)$ and its convex open domain $D$, number of samples $N$, UCB exploration constant $c$, subdivision time multiplier $t_{m}$.
        \State $r \gets 0$

            \State $t_{\textrm{subdivision}} \gets t_m$
        \State $S \gets \{ D \} $
        \For{$1 \leq i \leq N$}
            \State Select subregion $s \in S$ maximizing the UCB score \eqref{eq:UCB} considering exploration constant $c$. 
            \If {$i = t_{\textrm{subdivision}}$}
                \State Replace $s \in S$ in $S$ with subdivide$(s)$
                \State $t_{\textrm{subdivision}} \gets t_{\textrm{subdivision}} \cdot t_{m}$
            \EndIf    
                
        %\State Partion domain $D$ into subregions $S \gets $ init\_subregions$(D, K)$
        %\For{$s \in S$}
        %    \For{$1 \leq i \leq N/|S|$}
          
                \State Select input point $x \in s$ uniformly at random from subregion $s \subseteq D$.
                \State Evaluate $f(x)$; compute  $\|J_f(x)\|_{\alpha,\beta}$ for a generalized Jacobian corresponding to the chosen notion of local continuity by backpropagation. Cf. Example \ref{ex:main}.
                \State Set  $r \gets \max\{r, \|J_f(x)\|_{\alpha,\beta} 
 \}$
        \EndFor
        \State \Return $r$
    \end{algorithmic}
\end{algorithm}

% Stochastic Approximation: A Dynamical Systems Viewpoint

\section{An Analysis}
%\textcolor{red}{Anthony to do some proofs of consistency under any distribution of the samples whose support is the whole domain.}

It is clear by Theorem~\ref{thm:wrong} that Algorithms 1-3 will always return a lower bound for the Lipschitz constant provided that ``$J_f(x)$'' is contained in the Clarke Jacobian $J_f^c(x)$. Our analysis addresses the following two questions:
\begin{enumerate}
\item What degree of flexibility do we have in replacing $J_f^c$ in Theorem~\ref{thm:wrong} with some other set-valued ``generalized derivative''?
\item What are the statistical properties of Algorithm 3?
\end{enumerate}

\subsection{Modifying Theorem~\ref{thm:wrong}}
It has been observed many times~\cite{davis2020stochastic,bolte2021conservative} that nearly all neural networks used in practice are definable in some o-minimal structure. Under this assumption, we describe in Theorem~\ref{thm:main} below to what extent the role of the Clarke Jacobian in Theorem~\ref{thm:wrong} can be replaced with a suitable alternative.  See Appendix~\ref{appendix_Supplementary_Material} for an elaboration of the current subsection, including a proof of Theorem~\ref{thm:main}.

We make the following assumptions:
\begin{itemize}
\item $\frak{R}=(\R;<,+,\cdot,\ldots)$ is an o-minimal expansion of the real field and ``definable'' means ``definable in $\frak{R}$ with parameters''. The reader is free to focus on the special case where $\frak{R}=(\R;<,+,\cdot)$ is the usual real field, in which case ``definable'' is a synonym for ``semialgebraic''
\item $\|\cdot\|_{\alpha}$ and $\|\cdot\|_{\beta}$ are norms on $\R^n$ and $\R^m$ respectively, not necessarily definable (in the sense of $\frak{R}$)
\item $X\subseteq\R^n$ is a definable open convex set
\item $f:X\to\R^m$ is a definable $(\alpha,\beta)$-Lipschitz function
\end{itemize}

We introduce the following provisional terminology:

\begin{definition}
We say a set-valued map $D:X\rightrightarrows\R^{m\times n}$ is \textbf{good} for $f$ if:
\begin{itemize}
\item $D(x)\subseteq J_f^c(x)$ for every $x\in X$, and
\item there exists $X_0\subseteq X$ such that:
\begin{itemize}
\item $X\setminus X_0$ is a null set with respect to Lebesgue measure on $\R^n$, and
\item $D(x)$ is nonempty for every $x\in X_0$
\end{itemize}
\end{itemize}
[Note that we do not require $D$ or $X_0$ to be definable.]
\end{definition}

Under these assumptions, we have the following variation of Theorem~\ref{thm:wrong}:

\begin{theorem}
\label{thm:main}
If $D$ is good for $f$, then:
\[
L^{(\alpha,\beta)}(f,X) \ = \ \sup_{x\in X, G\in D(x)}\|G\|_{\alpha,\beta}
\]
\end{theorem}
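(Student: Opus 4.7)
The $\geq$ direction is immediate from Theorem~\ref{thm:wrong}, since $D(x)\subseteq J_f^c(x)$ for every $x$. For the reverse direction, set $M := \sup_{x\in X,\,G\in D(x)}\|G\|_{\alpha,\beta}$; the plan is to prove $\|f(y)-f(x)\|_\beta \leq M\|y-x\|_\alpha$ for all $x,y\in X$, whence $L^{(\alpha,\beta)}(f,X)\leq M$ by taking the supremum. Because $f$ is definable, $C^1$-cell decomposition in $\mathfrak{R}$ produces a definable set $S\subseteq X$ of dimension strictly less than $n$ (hence Lebesgue-null) such that $f$ is $C^1$ on $X\setminus S$; at any such point $J_f^c(x)=\{J_f(x)\}$. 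Set $E:=S\cup(X\setminus X_0)$, which is Lebesgue-null. For each $x\in X\setminus E$, $D(x)$ is nonempty and contained in $\{J_f(x)\}$, so $D(x)=\{J_f(x)\}$ and therefore $\|J_f(x)\|_{\alpha,\beta}\leq M$.

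The main obstacle is converting this almost-everywhere pointwise bound into a Lipschitz estimate between two specific points $x_0,y_0\in X$: the segment joining them could in principle meet $E$ on a set of positive $1$-dimensional measure, so the fundamental theorem of calculus applied along the segment itself gives no information. I bypass this with a Fubini perturbation. For $u\in\R^n$, define $\gamma_u(t):=x_0+t(y_0-x_0)+u$ on $[0,1]$; since $X$ is open and contains the compact segment $[x_0,y_0]$, the image of $\gamma_u$ is contained in $X$ for all sufficiently small $u$. Consider $A := \{(u,t) : \gamma_u(t)\in E\}\subseteq\R^n\times[0,1]$. Its $t$-slice is $E - x_0 - t(y_0-x_0)$, a translate of the null set $E$, so $A$ has $(n+1)$-dimensional Lebesgue measure zero. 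By Fubini the other way, for almost every (in particular, arbitrarily small) $u$, the $u$-slice $T_u:=\{t\in[0,1]:\gamma_u(t)\in E\}$ is $1$-dimensionally null.

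Pick any such $u$ small enough that $\gamma_u([0,1])\subseteq X$. Then $f\circ\gamma_u:[0,1]\to\R^m$ is Lipschitz, hence absolutely continuous, so
\[
f(\gamma_u(1))-f(\gamma_u(0)) \;=\; \int_0^1 (f\circ\gamma_u)'(t)\,dt.
\]
For almost every $t\in[0,1]$ both $(f\circ\gamma_u)'(t)$ exists and $\gamma_u(t)\notin E$; at such $t$ the function $f$ is $C^1$ at $\gamma_u(t)$ and the chain rule yields $(f\circ\gamma_u)'(t)=J_f(\gamma_u(t))(y_0-x_0)$, whence
\[
\|(f\circ\gamma_u)'(t)\|_\beta \;\leq\; \|J_f(\gamma_u(t))\|_{\alpha,\beta}\,\|y_0-x_0\|_\alpha \;\leq\; M\,\|y_0-x_0\|_\alpha.
\]
Integrating gives $\|f(y_0+u)-f(x_0+u)\|_\beta \leq M\,\|y_0-x_0\|_\alpha$. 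Letting $u\to 0$ through admissible values and invoking continuity of $f$ yields $\|f(y_0)-f(x_0)\|_\beta\leq M\,\|y_0-x_0\|_\alpha$, completing the proof. The only delicate point beyond standard absolute-continuity/chain-rule bookkeeping is the Fubini shift that sidesteps the possibility of a bad segment.
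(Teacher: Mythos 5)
Your proof is correct, but it takes a genuinely different route from the paper's. For the hard inequality $L^{(\alpha,\beta)}(f,X)\leq M$, the paper first establishes the special case $D=J_f|_{\operatorname{Diff}(f)}$ via a Path Lemma: using o-minimal dimension theory it replaces the segment $[x,y]$ by a short polygonal path whose open segments avoid the bad set entirely, so that only the Riemann integral and the Riemann FTC are needed (the authors explicitly advertise avoiding Lebesgue integration here). It then passes from this special case to an arbitrary good map $D$ through an ``Equivalence Lemma,'' whose proof relies on Warga's theorem that the Clarke Jacobian is insensitive to removing a Lebesgue-null set of base points. You instead observe directly that $D(x)=\{J_f(x)\}$ off a null set $E$ (by $C^1$-cell decomposition plus Lemma~\ref{Clarke_Jacobian_special_case}), and then dispose of the bad set by a Fubini shift of the segment, integrating $(f\circ\gamma_u)'$ with the Lebesgue FTC for absolutely continuous functions and passing to the limit $u\to 0$. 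This is more in the spirit of the classical Rademacher/Clarke arguments: it reaches the full statement for arbitrary good $D$ in one stroke, and it does not need Warga's theorem at all. What it gives up is precisely what the paper's route is designed to preserve --- a measure-theory-free, finitary argument tailored to the definable setting. Two small points worth making explicit in your write-up: you tacitly use convexity of $X$ (to know $[x_0,y_0]\subseteq X$ and that $\gamma_u([0,1])\subseteq X$ for small $u$), which is indeed an assumption of the theorem; and for the Fubini step one should note that $E$, being Lebesgue-null, sits inside a Borel null set, so the relevant set $A$ is measurable and the slicing argument is legitimate.
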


\begin{example}
Apart from $D=J_f^c$, here are some examples of $D$ satisfying conditions of Theorem \ref{thm:main}:
\begin{itemize}
\item $D$ can be the partially-defined single-valued map returning $J_f(x)$ at all points $x\in X$ for which $f$ is Fr\'{e}chet-differentiable; more generally, given $r\geq 1$, $D$ can return $J_f(x)$ at all points $x\in X$ for which $f$ is $C^r$-smooth
\item $D$ can be an arbitrary selection of $J_f^c$ defined a.e. in $X$
\item when $m=1$, $D$ can be the Fr\'{e}chet, limiting, or Clarke subdifferential; e.g.,~\cite{bolte2007clarke}. 
\end{itemize}
\label{ex:main}
\end{example}

Note that Theorem~\ref{thm:main} can fail if we do not assume that $f$ is definable:

\begin{example}
Let $f:\R\to\R$ be a Lipschitz function with Lipschitz constant $1$ such that $J_f^c(x)=[-1,1]$ for every $x\in \R$; such functions occur generically according to~\cite{Ioffe08,borwein1997distinct} although they cannot be definable (as this would violate the Smooth Monotonicity Theorem~\cite[7.2.5]{van1998tame}). Set $D(x):=\{0\}$ for every $x\in\R$; then $D$ is good for $f$, however $\sup_{x\in\R,G\in D(x)}\|G\|_{\alpha,\beta}=0\neq 1$.
\end{example}

As our idealized model for ``$J_f(x)$'' gets closer to how automatic differentiation (AD) is actually implemented by PyTorch or TensorFlow, then Theorem~\ref{thm:main} may no longer be true due to sporadic behavior on null sets:

\begin{remark}
Suppose $D$ is either (1) the output of AD as modeled by selection Jacobians~\cite{bolte2020mathematical}, or (2) a conservative mapping for $f$~\cite{bolte2021conservative}. Then $D=\{J_f(x)\}$ a.e. on $X$ and so $L^{(\alpha,\beta)}(f,X)\leq\sup_{x\in X,G\in D(x)}\|G\|_{\alpha,\beta}$. However, equality can fail in this case as~\cite{bolte2020mathematical,bolte2021conservative} provide easy examples where $f\equiv 0$, although $D(x)$ can contain nonzero vectors.
\end{remark}

%\subsection{Statistical properties of Algorithm 3}
The consistency of the estimate obtained by Algorithm \ref{alg:SA_UCB} is clear from the fact that we are sampling from any point in the domain $D$ with a positive probability and the strong law of large numbers \cite[cf. Theorem 4.7.3]{robert1999monte}. 
The estimate is not unbiased, but only 
asymptotically unbiased 
\cite{robert1999monte}.
The asymptotic optimality of Algorithm \ref{alg:SA_UCB} within a certain class of sampling schemes is less trivial, but follows from well-known analyses \cite{NIPS2011_7e889fb7} of UCB policies for the infinity-armed bandit problems.

\begin{figure}[t!]
\includegraphics[width=1.0\columnwidth]{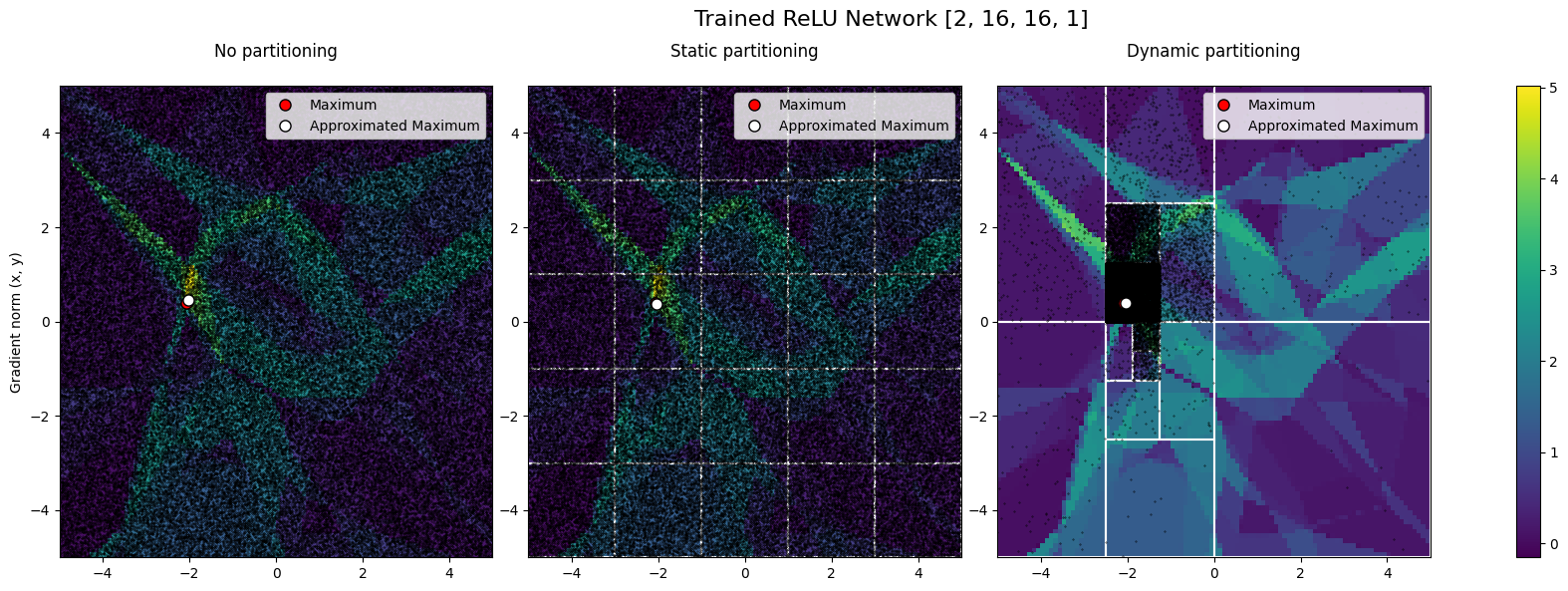}
\caption{The running example continued. Left: Samples obtained using the standard stochastic approximation (Algorithm~\ref{alg:SA}). Middle: samples obtained using uniform partitioning of the domain (Algorithm~\ref{alg:SA_static}). Right: samples obtained using non-uniform partitioning of the domain utilizing the upper-confidence bounds (Algorithm~\ref{alg:SA_UCB}).}
\label{fig:SA_mods_sampling}
\end{figure}

\begin{figure}[t!]
\includegraphics[width=0.45\columnwidth]{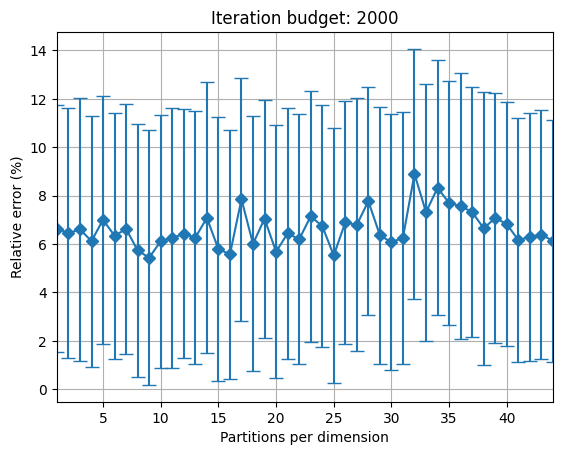}
\includegraphics[width=0.45\columnwidth]{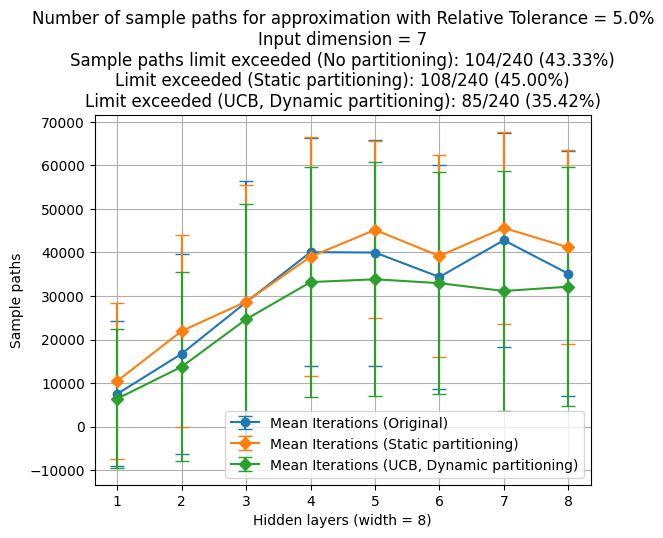}
\caption{Illustrative results. Left: relative error of Algorithm \ref{alg:SA_static} is robust with respect to the number of subregions. Right: performance of Algorithms \ref{alg:SA} and \ref{alg:SA_static} is very similar, when considering one and the same subdiffrential. Algorithm \ref{alg:SA_UCB} strictly improves upon both Algorithms \ref{alg:SA} and \ref{alg:SA_static}.}
\label{fig:Equal_partitioning_error_to_budget}
\end{figure}

\section{Experimental results}
\label{sec:exp}

As a proof of concept, we have implemented our approach in PyTorch using their automatic differentiation engine (torch.autograd). 
In our experiments, we have used ReLU-based neural nets (composed of torch.nn.Linear and torch.nn.ReLU layers). To describe their dimensions, we use the notation such as of $[2, w_1, w_2, 1]$, where there are two scalars on the inputs, a linear layer with $w_1$ output features, ReLU activation,  a linear layer with $w_1$ input features and $w_2$ output features, ReLU activation, and a linear layer with $w_2$ input features and 1 output feature. For the purposes of visualization, we often focus on the two-dimensional input. 
%An example of a net described as [2, 128, 128, 1] would be:
%\begin{lstlisting}
%ReLUNet(
%  (net): Sequential(
%    (1): Linear(in_features=2, out_features=128, bias=True)
%    (2): ReLU()
%    (3): Linear(in_features=128, out_features=128, bias=True)
%    (4): ReLU()
%    (5): Linear(in_features=128, out_features=1, bias=True)
%  )
%)
%\end{lstlisting}
We generate synthetic datasets to match, one of which is displayed on the left in Figure~\ref{fig:Demo_experiment}, with points randomly generated hyperspheres with points generated randomly with mean $-1.0$, while the remainder of the hypercube is filled with values generated at random with mean $1.0$.

In Figures \ref{fig:Demo_experiment}--\ref{fig:SA_mods_sampling}, we focus on the running-example neural network [2, 16, 16, 1] presented in Figure \ref{fig:Demo_experiment}. 

In Figure \ref{fig:Equal_partitioning_error_to_budget} (on the right), we summarize the experiments where all the generated datasets had 3 hyperspheres of random radius $0.1 \times r_{domain} \leq r_{sphere} \leq 0.4 \times r_{domain}$ randomly centered on the input space. 
For each number of hidden layers in the right plot in Figure \ref{fig:Equal_partitioning_error_to_budget}, 30 test runs were evaluated, for each test run a new neural net was trained on a random data set with 800 points. Training was performed with torch.nn.MSELoss() as loss function, torch.optim.Adam() as optimizer with learning rate $5 \times 10^{-4}$ for 500 epochs. 
All 3 algorithms were limited to 60000 evaluations, For Algorithm \ref{alg:SA_static}, 2 divisions per dimension were set, resulting in $2^7$ subregions; for Algorithm \ref{alg:SA_UCB} the exploration parameter $c$ (\ref{eq:UCB}) was set to 10 and the subdivision time multiplier was set to 2.\

In Figure \ref{fig:Equal_partitioning_error_to_budget} (on the left),
we ran Algorithm \ref{alg:SA_static} with different partition parameter on the one trained neural net of depth 4 and width 8, for each parameter value 100 runs were evaluated. 
The 2D heat maps of the gradients that are present in the figures were generated by evaluating the $400^2$ grid. The red dot in the plots presented as Maximum is within the $0.001\%$ error of the actual value computed by LipMIP.

In Figure \ref{fig:SA_scale_time}, we evaluate using untrained neural nets. Stochastic Approximation and LipSDP were run without parallelization, and 5 cores were allocated for Gurobi engine in LipMIP.

In Table \ref{table:BMnist}, 
we evaluate using neural nets trained on binary MNIST (classification between 1 and 7),
following the benchmarking procedure suggested by \cite{tjeng2017evaluating}. 
Column ``setup'' presents the dimensions of the net, and acceptable error of Algorithms \ref{alg:SA} and \ref{alg:SA_UCB} (when applicable). For each net, we ran 5 evaluations on trained net and best result in terms of relative error for each method was chosen. For the last two experiments, run time of LipMIP \cite{tjeng2017evaluating} was limited. Considering that LipSDP produces an upper bound, the lowest (best) result is presented. For Algorithms \ref{alg:SA} and \ref{alg:SA_UCB}, the highest result is presented. Rows 2 and 4 present experiments, where Algorithms \ref{alg:SA} and \ref{alg:SA_UCB} could compare the outcomes against the LipMIP result. (This is of interest, because one may often have many  similar nets and may wish to  estimate of their moduli of continuity one by one, using as few iterations as possible.) Because LipMIP computes the modulus of $\langle c, f\rangle$, modified versions of Algorithm \ref{alg:SA_UCB} were used, where the modulus of local continuity was approximated from $\langle c, f\rangle$, rather than from the whole output of $f$.
Rows 6 and 7 present experiments, where the scalability of Algorithm  \ref{alg:SA_UCB} strictly exceeds the scalability of LipMIP.
Rows 6 and 7 present experiments, where the scalability of Algorithm  \ref{alg:SA_UCB} strictly exceeds the scalability of LipMIP.
By ``?'' in the ``Relative error'' column, we indicate that we do not have the exact modulus of local continuity. The comparison against LipSDP shows that our estimates improve upon LipSDP and its variants, whose upper bound is very loose (at least 356\% higher in row 7), without requiring a longer run-time.

\begin{figure}[t!]
\includegraphics[width=0.45\columnwidth]{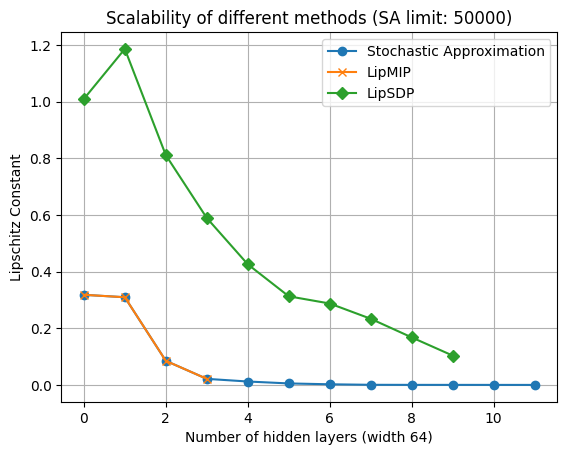}
\includegraphics[width=0.45\columnwidth]{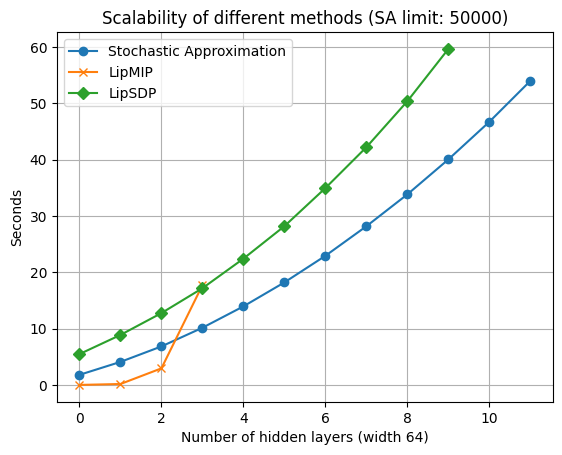}
\caption{Illustrative comparison against LipMIP and LipSDP. 
Left: with the number of hidden layers (each of width 64), the modulus of local Lipschitz continuity decreases sharply. LipSDP overestimates the modulus by a substantial margin, while stochastic approximation with 50000 samples tracks the true values computed using LipMIP. 
Right: within a given run-time limit of 60 seconds, Algorithm \ref{alg:SA_UCB} with 50000 samples scales to depth-11 networks (704 neurons), where LipSDP utilizing Mosek scales to depth 9 (576 neurons) and LipMIP utilizing Gurobi scales to depth 3 (192 neurons).}
\label{fig:SA_scale_time}
\end{figure}

\begin{table}[t!]
\centering
\caption{Comparison of methods on Binary MNIST  following the benchmarking procedure suggested in the LipMIP paper \cite{tjeng2017evaluating}. }
\label{table:BMnist}
\begin{tabular}{|c|c c c c|}
\hline\hline
Setup & Method &  Value & Time (s) & Relative Error (\%) \\
\hline
\multirow{4}{*}{[784, 8, 2]}
    & LipMIP & 77.08519 & 1.523 & 0 \\
    & LipSDP & 100.76831 & 7.610 & +30.7 \\
    & Algorithm \ref{alg:SA} (5000) & 77.07964 & 0.312 & -0.007 \\
    & Algorithm \ref{alg:SA_UCB} (5000) & 77.07964 & 1.330 & -0.007 \\
\hline
\multirow{4}{*}{\shortstack{[784, 8, 2] \\ Approximation  tolerance 1\%}}
    & LipMIP & 77.08519 & 1.523 & 0 \\
    & LipSDP & 100.76831 & 7.610 & +30.7 \\
    & Algorithm \ref{alg:SA} (5000) & 77.07964 & 0.0132 & -0.007 \\
    & Algorithm \ref{alg:SA_UCB} (5000) & 77.07964 & 0.0004 & -0.007 \\
\hline
\multirow{4}{*}{[784, 8, 8, 2]}
    & LipMIP & 79.24930 & 6.214 & 0 \\
    & LipSDP & 104.73195 & 8.987 & +32.155 \\
    & Algorithm \ref{alg:SA} (10000) & 79.01881 & 0.776 & -0.291 \\
    & Algorithm \ref{alg:SA_UCB} (10000) & 79.01880 & 2.696 & -0.291 \\
\hline
\multirow{4}{*}{\shortstack{[784, 8, 8, 2] \\ Approximation tolerance 1\%}}
    & LipMIP & 79.24930 & 6.214 & 0 \\
    & LipSDP & 104.73195 & 8.987 & +32.155 \\
    & Algorithm \ref{alg:SA} (10000) & 79.01881 & 0.051 & -0.291 \\
    & Algorithm \ref{alg:SA_UCB} (10000) & 79.01880 & 0.408 & -0.291 \\
\hline
\multirow{4}{*}{[784, 20, 20, 2]}
    & LipMIP & 349.67096 & 6.048 & 0 \\
    & LipSDP & 479.54083 & 12.48653 & +37.140 \\
    & Algorithm \ref{alg:SA} (150000) & 277.22370 & 11.931 & -20.719 \\
    & Algorithm \ref{alg:SA_UCB} (150000) & 290.12729 & 44.975 & -17.028 \\
\hline
\multirow{4}{*}{[784, 8, 8, 8, 2]}
    & LipMIP (timeout) & 114.48170 & 120.0 & ? \\
    & LipSDP & 125.98295 & 8.987 & ? \\
    & Algorithm \ref{alg:SA} (100000) & 89.87323 & 9.255 & ? \\
    & Algorithm \ref{alg:SA_UCB} (100000) & 89.87323 & 20.522 & ? \\
\hline 
\multirow{4}{*}{[784, 8, 8, 8, 8, 8, 2]}
    & LipMIP (timeout) & 519.65584 & 300.0 & ? \\
    & LipSDP & 429.07478 & 11.472 & ? \\
    & Algorithm \ref{alg:SA} (100000) & 94.14903 & 12.417 & ? \\
    & Algorithm \ref{alg:SA_UCB} (100000) & 94.14903 & 24.549 & ? \\
\hline 
\end{tabular}
\end{table}

\section{Conclusions and Limitations}

We have elaborated upon the connections between ``generalized derivatives'' and estimating Lipschitz constants in the setting of \emph{tame optimization} \cite{Ioffe08}. 

We have also presented non-uniform sampling within the context of estimating the modulus of continuity using, and showed that it improves the scalability of the stochastic sample approximations therein, improving also over some well-established methods not based on stochastic sample approximations (LipMIP, LipSDP).
The advantage of non-uniform sampling (Algorithm \ref{alg:SA_UCB}) is clear especially when there are both areas of vanishing (sub)gradients 
\cite{hochreiter1998vanishing} and areas with high variance of the subgradients in the domain. This is common \cite{hanin2018neural} such as in the case of deeper and wider nets.

A key limitation of our method, as well as any other method for the verification of properties of neural networks known presently, is scalability. While our algorithms 
can be applied to a neural network in any dimension, and 
our results (e.g., Table~
\ref{table:BMnist} and Figure \ref{fig:SA_scale_time}) suggest some improvement in run-time for instances where all methods are applicable, this is far from the scale of foundational models utilized in many applications. 
One may wish to bound the rates of convergence of the UCB-based policies and to improve the per-iteration run-time. 
In order to improve the per-iteration run-time (the number of sample paths per second), one may wish to implement the approach in JAX or Enzyme \cite{NEURIPS2020_9332c513}. %which would make it harder to use with neural networks implemented in PyTorch, though.  

The performance of the Algorithm \ref{alg:SA_UCB} is also affected by the choice of the UCB exploration parameter $c$. Poor choice may lead to a higher number of iterations required to find a close approximation because the subregion of the input domain that contains the result may not be explored for a long time. It is not clear what value to choose without a general idea of the gradient space. A possible way to deal with this is to sample a small amount of gradients before running Algorithm \ref{alg:SA_UCB} to get an idea of the mean and standard deviation of the gradients across the input domain and then choose $c$ accordingly.

Related methods have extensive applications, e.g., in the safety filters for AI systems, and 
in the design of controllers %\cite{kenefake2024multiparametric,wang2024monotone} 
for such closed loops with guarantees of ergodic behaviour.

%\begin{ack}
%Use unnumbered first level headings for the acknowledgments. All acknowledgments
%go at the end of the paper before the list of references. Moreover, you are required to declare
%funding (financial activities supporting the submitted work) and competing interests (related financial activities outside the submitted work).
%More information about this disclosure can be found at: \url{https://neurips.cc/Conferences/2025/PaperInformation/FundingDisclosure}.

%o {\bf not} include this section in the anonymized submission, only in the final paper. You can use the \texttt{ack} environment provided in the style file to automatically hide this section in the anonymized submission.
%\end{ack}

\bibliographystyle{ieeetr}
\bibliography{main}

\begin{thebibliography}{10}

\bibitem{cybenko1989approximation}
G.~Cybenko, ``Approximation by superpositions of a sigmoidal function,'' {\em
  Mathematics of control, signals and systems}, vol.~2, no.~4, pp.~303--314,
  1989.

\bibitem{barron1993universal}
A.~R. Barron, ``Universal approximation bounds for superpositions of a
  sigmoidal function,'' {\em IEEE Transactions on Information theory}, vol.~39,
  no.~3, pp.~930--945, 1993.

\bibitem{ReLU}
A.~S. Householder, ``A theory of steady-state activity in nerve-fiber networks:
  I. definitions and preliminary lemmas,'' {\em The bulletin of mathematical
  biophysics}, vol.~3, no.~2, pp.~63--69, 1941.

\bibitem{NIPS2016_980ecd05}
O.~Bastani, Y.~Ioannou, L.~Lampropoulos, D.~Vytiniotis, A.~Nori, and
  A.~Criminisi, ``Measuring neural net robustness with constraints,'' in {\em
  Advances in Neural Information Processing Systems} (D.~Lee, M.~Sugiyama,
  U.~Luxburg, I.~Guyon, and R.~Garnett, eds.), vol.~29, Curran Associates,
  Inc., 2016.

\bibitem{limitations}
T.~Huster, C.-Y.~J. Chiang, and R.~Chadha, ``Limitations of the lipschitz
  constant as a defense against adversarial examples,'' in {\em ECML PKDD 2018
  Workshops: Nemesis 2018, UrbReas 2018, SoGood 2018, IWAISe 2018, and Green
  Data Mining 2018, Dublin, Ireland, September 10-14, 2018, Proceedings},
  (Berlin, Heidelberg), p.~16–29, Springer-Verlag, 2018.

\bibitem{yang2022closer}
Z.~Yang, T.~Pang, and Y.~Liu, ``A closer look at the adversarial robustness of
  deep equilibrium models,'' {\em Advances in Neural Information Processing
  Systems}, vol.~35, pp.~10448--10461, 2022.

\bibitem{zuhlke2024adversarial}
M.-M. Z{\"u}hlke and D.~Kudenko, ``Adversarial robustness of neural networks
  from the perspective of lipschitz calculus: A survey,'' {\em ACM Computing
  Surveys}, 2024.

\bibitem{Marecek2021}
J.~Mare{\v{c}}ek, M.~Roubalik, R.~Ghosh, R.~N. Shorten, and F.~R. Wirth,
  ``Predictability and fairness in load aggregation and operations of virtual
  power plants,'' {\em Automatica}, vol.~147, p.~110743, 2023.

\bibitem{10555099}
Q.~Zhou, R.~Ghosh, R.~Shorten, and J.~Mareček, ``Closed-loop view of the
  regulation of ai: Equal impact across repeated interactions,'' in {\em 2024
  IEEE 40th International Conference on Data Engineering Workshops (ICDEW)},
  pp.~176--181, 2024.

\bibitem{nazarov2025humancompatibleinterconnecttestingpropertiesrepeated}
R.~Nazarov, A.~Quinn, R.~Shorten, and J.~Marecek,
  ``humancompatible.interconnect: Testing properties of repeated uses of
  interconnections of ai systems,'' 2025.

\bibitem{tjeng2017evaluating}
V.~Tjeng, K.~Y. Xiao, and R.~Tedrake, ``Evaluating robustness of neural
  networks with mixed integer programming,'' in {\em International Conference
  on Learning Representations}, 2019.

\bibitem{anderson2020strong}
R.~Anderson, J.~Huchette, W.~Ma, C.~Tjandraatmadja, and J.~P. Vielma, ``Strong
  mixed-integer programming formulations for trained neural networks,'' {\em
  Mathematical Programming}, vol.~183, no.~1, pp.~3--39, 2020.

\bibitem{zhang2022general}
H.~Zhang, S.~Wang, K.~Xu, L.~Li, B.~Li, S.~Jana, C.-J. Hsieh, and J.~Z. Kolter,
  ``General cutting planes for bound-propagation-based neural network
  verification,'' {\em Advances in neural information processing systems},
  vol.~35, pp.~1656--1670, 2022.

\bibitem{schwan2023stability}
R.~Schwan, C.~N. Jones, and D.~Kuhn, ``Stability verification of neural network
  controllers using mixed-integer programming,'' {\em IEEE Transactions on
  Automatic Control}, vol.~68, no.~12, pp.~7514--7529, 2023.

\bibitem{GurobiML}
{Gurobi Optimization, LLC}, ``Gurobi machine learning,''
\newblock
  https://gurobi-machinelearning.readthedocs.io/en/stable/userguide.html.

\bibitem{9301422}
M.~Fazlyab, M.~Morari, and G.~J. Pappas, ``Safety verification and robustness
  analysis of neural networks via quadratic constraints and semidefinite
  programming,'' {\em IEEE Transactions on Automatic Control}, vol.~67, no.~1,
  pp.~1--15, 2022.

\bibitem{chen2020semialgebraic}
T.~Chen, J.~B. Lasserre, V.~Magron, and E.~Pauwels, ``Semialgebraic
  optimization for lipschitz constants of relu networks,'' {\em Advances in
  Neural Information Processing Systems}, vol.~33, pp.~19189--19200, 2020.

\bibitem{latorre2020lipschitz}
F.~Latorre, P.~Rolland, and V.~Cevher, ``Lipschitz constant estimation of
  neural networks via sparse polynomial optimization,'' in {\em International
  Conference on Learning Representations}, 2020.

\bibitem{chen2021semialgebraic}
T.~Chen, J.~B. Lasserre, V.~Magron, and E.~Pauwels, ``Semialgebraic
  representation of monotone deep equilibrium models and applications to
  certification,'' {\em Advances in Neural Information Processing Systems},
  vol.~34, pp.~27146--27159, 2021.

\bibitem{pmlr-v115-dvijotham20a}
K.~D. Dvijotham, R.~Stanforth, S.~Gowal, C.~Qin, S.~De, and P.~Kohli,
  ``Efficient neural network verification with exactness characterization,'' in
  {\em Proceedings of The 35th Uncertainty in Artificial Intelligence
  Conference} (R.~P. Adams and V.~Gogate, eds.), vol.~115 of {\em Proceedings
  of Machine Learning Research}, pp.~497--507, PMLR, 22--25 Jul 2020.

\bibitem{chen2022robustness}
T.~Chen, {\em Robustness verification of neural networks using polynomial
  optimization}.
\newblock PhD thesis, Universit{\'e} Paul Sabatier-Toulouse III, 2022.

\bibitem{9993136}
A.~Xue, L.~Lindemann, A.~Robey, H.~Hassani, G.~J. Pappas, and R.~Alur,
  ``Chordal sparsity for lipschitz constant estimation of deep neural
  networks,'' in {\em 2022 IEEE 61st Conference on Decision and Control (CDC)},
  pp.~3389--3396, 2022.

\bibitem{wang2024scalability}
Z.~Wang, B.~Hu, A.~J. Havens, A.~Araujo, Y.~Zheng, Y.~Chen, and S.~Jha, ``On
  the scalability and memory efficiency of semidefinite programs for lipschitz
  constant estimation of neural networks,'' in {\em The Twelfth International
  Conference on Learning Representations}, 2024.

\bibitem{yang2024verifying}
J.~Yang, S.~{\DH}ura{\v{s}}inovi{\'c}, J.-B. Lasserre, V.~Magron, and J.~Zhao,
  ``Verifying properties of binary neural networks using sparse polynomial
  optimization,'' {\em arXiv preprint arXiv:2405.17049}, 2024.

\bibitem{pauli2024lipschitz}
P.~Pauli, D.~Gramlich, and F.~Allg{\"o}wer, ``Lipschitz constant estimation for
  general neural network architectures using control tools,'' {\em arXiv
  preprint arXiv:2405.01125}, 2024.

\bibitem{xu2024eclipse}
Y.~Xu and S.~Sivaranjani, ``{ECL}ipse: Efficient compositional lipschitz
  constant estimation for deep neural networks,'' in {\em The Thirty-eighth
  Annual Conference on Neural Information Processing Systems}, 2024.

\bibitem{azuma2025tight}
J.~Lan, Y.~Zheng, and A.~Lomuscio, ``Tight neural network verification via
  semidefinite relaxations and linear reformulations,'' in {\em AAAI},
  pp.~7272--7280, 2022.

\bibitem{syed2025improvedscalablelipschitzbounds}
U.~Syed and B.~Hu, ``Improved scalable lipschitz bounds for deep neural
  networks,'' 2025.

\bibitem{weng2018evaluating}
T.-W. Weng, H.~Zhang, P.-Y. Chen, J.~Yi, D.~Su, Y.~Gao, C.-J. Hsieh, and
  L.~Daniel, ``Evaluating the robustness of neural networks: An extreme value
  theory approach,'' in {\em International Conference on Learning
  Representations}, 2018.

\bibitem{goodfellow2018gradientmaskingcausesclever}
I.~Goodfellow, ``Gradient masking causes clever to overestimate adversarial
  perturbation size,'' 2018.

\bibitem{weng2018extensions}
T.-W. Weng, H.~Zhang, P.-Y. Chen, A.~Lozano, C.-J. Hsieh, and L.~Daniel, ``On
  extensions of clever: A neural network robustness evaluation algorithm,'' in
  {\em 2018 IEEE Global Conference on Signal and Information Processing
  (GlobalSIP)}, pp.~1159--1163, IEEE, 2018.

\bibitem{jordan2020exactly}
M.~Jordan and A.~G. Dimakis, ``Exactly computing the local lipschitz constant
  of relu networks,'' {\em Advances in Neural Information Processing Systems},
  vol.~33, pp.~7344--7353, 2020.

\bibitem{10164809}
T.~Avant and K.~A. Morgansen, ``Analytical bounds on the local lipschitz
  constants of relu networks,'' {\em IEEE Transactions on Neural Networks and
  Learning Systems}, vol.~35, no.~10, pp.~13902--13913, 2024.

\bibitem{Bonaert2021}
G.~Bonaert, D.~I. Dimitrov, M.~Baader, and M.~Vechev, ``Fast and precise
  certification of transformers,'' in {\em Proceedings of the 42nd ACM SIGPLAN
  International Conference on Programming Language Design and Implementation},
  PLDI 2021, (New York, NY, USA), p.~466–481, Association for Computing
  Machinery, 2021.

\bibitem{jafarpour2022robustness}
S.~Jafarpour, M.~Abate, A.~Davydov, F.~Bullo, and S.~Coogan, ``Robustness
  certificates for implicit neural networks: A mixed monotone contractive
  approach,'' in {\em Learning for Dynamics and Control Conference},
  pp.~917--930, PMLR, 2022.

\bibitem{bunel2018unified}
R.~R. Bunel, I.~Turkaslan, P.~Torr, P.~Kohli, and P.~K. Mudigonda, ``A unified
  view of piecewise linear neural network verification,'' {\em Advances in
  Neural Information Processing Systems}, vol.~31, 2018.

\bibitem{9303895}
B.~Karg and S.~Lucia, ``Stability and feasibility of neural network-based
  controllers via output range analysis,'' in {\em 2020 59th IEEE Conference on
  Decision and Control (CDC)}, pp.~4947--4954, 2020.

\bibitem{wei2022certified}
C.~Wei and J.~Z. Kolter, ``Certified robustness for deep equilibrium models via
  interval bound propagation,'' in {\em International Conference on Learning
  Representations}, 2022.

\bibitem{NEURIPS2022_1700ad4e}
E.~Abad~Rocamora, M.~F. Sahin, F.~Liu, G.~Chrysos, and V.~Cevher, ``Sound and
  complete verification of polynomial networks,'' in {\em Advances in Neural
  Information Processing Systems} (S.~Koyejo, S.~Mohamed, A.~Agarwal,
  D.~Belgrave, K.~Cho, and A.~Oh, eds.), vol.~35, pp.~3517--3529, Curran
  Associates, Inc., 2022.

\bibitem{aziz2018pure}
M.~Aziz, J.~Anderton, E.~Kaufmann, and J.~Aslam, ``Pure exploration in
  infinitely-armed bandit models with fixed-confidence,'' in {\em Algorithmic
  Learning Theory}, pp.~3--24, PMLR, 2018.

\bibitem{jones1993lipschitzian}
D.~R. Jones, C.~D. Perttunen, and B.~E. Stuckman, ``Lipschitzian optimization
  without the lipschitz constant,'' {\em Journal of optimization Theory and
  Applications}, vol.~79, pp.~157--181, 1993.

\bibitem{Gablonsky2001}
J.~M.~X. Gablonsky and C.~T. Kelley, {\em Modifications of the direct
  algorithm}.
\newblock PhD thesis, North Carolina State University, Raleigh, North Carolina,
  2001.

\bibitem{1868}
D.~Psaltis, A.~Sideris, and A.~Yamamura, ``A multilayered neural network
  controller,'' {\em IEEE Control Systems Magazine}, vol.~8, no.~2, pp.~17--21,
  1988.

\bibitem{parisini1995receding}
T.~Parisini and R.~Zoppoli, ``A receding-horizon regulator for nonlinear
  systems and a neural approximation,'' {\em Automatica}, vol.~31, no.~10,
  pp.~1443--1451, 1995.

\bibitem{Ioffe08}
A.~D. Ioffe, ``An invitation to tame optimization,'' {\em SIAM J. Optim.},
  vol.~19, no.~4, pp.~1894--1917, 2008.

\bibitem{clarke1975generalized}
F.~H. Clarke, ``Generalized gradients and applications,'' {\em Transactions of
  the American Mathematical Society}, vol.~205, pp.~247--262, 1975.

\bibitem{kahn1950random}
H.~Kahn, ``Random sampling (monte carlo) techniques in neutron attenuation
  problems--ii.,'' {\em Nucleonics}, vol.~6, no.~6, pp.~60--65, 1950.

\bibitem{robert1999monte}
C.~P. Robert, G.~Casella, and G.~Casella, {\em Monte Carlo statistical
  methods}, vol.~2.
\newblock Springer, 1999.

\bibitem{tokdar2010importance}
S.~T. Tokdar and R.~E. Kass, ``Importance sampling: a review,'' {\em Wiley
  Interdisciplinary Reviews: Computational Statistics}, vol.~2, no.~1,
  pp.~54--60, 2010.

\bibitem{bugallo2017adaptive}
M.~F. Bugallo, V.~Elvira, L.~Martino, D.~Luengo, J.~Miguez, and P.~M. Djuric,
  ``Adaptive importance sampling: The past, the present, and the future,'' {\em
  IEEE Signal Processing Magazine}, vol.~34, no.~4, pp.~60--79, 2017.

\bibitem{gittins2011multi}
J.~Gittins, K.~Glazebrook, and R.~Weber, {\em Multi-armed bandit allocation
  indices}.
\newblock John Wiley \& Sons, 2011.

\bibitem{lattimore2020bandit}
T.~Lattimore and C.~Szepesv{\'a}ri, {\em Bandit algorithms}.
\newblock Cambridge University Press, 2020.

\bibitem{lai1987adaptive}
T.~L. Lai, ``Adaptive treatment allocation and the multi-armed bandit
  problem,'' {\em The annals of statistics}, pp.~1091--1114, 1987.

\bibitem{AuerUCB}
P.~Auer, N.~Cesa-Bianchi, and P.~Fischer, ``Finite-time analysis of the
  multiarmed bandit problem,'' {\em Machine Learning}, vol.~47, no.~2,
  pp.~235--256, 2002.

\bibitem{berry1997bandit}
D.~A. Berry, R.~W. Chen, A.~Zame, D.~C. Heath, and L.~A. Shepp, ``Bandit
  problems with infinitely many arms,'' {\em The Annals of Statistics},
  vol.~25, no.~5, pp.~2103--2116, 1997.

\bibitem{wang2008algorithms}
Y.~Wang, J.-Y. Audibert, and R.~Munos, ``Algorithms for infinitely many-armed
  bandits,'' {\em Advances in Neural Information Processing Systems}, vol.~21,
  2008.

\bibitem{Kleinberg2008}
R.~Kleinberg, A.~Slivkins, and E.~Upfal, ``Multi-armed bandits in metric
  spaces,'' in {\em Proceedings of the Fortieth Annual ACM Symposium on Theory
  of Computing}, STOC '08, (New York, NY, USA), p.~681–690, Association for
  Computing Machinery, 2008.

\bibitem{JiaYuanYu}
S.~Bubeck, G.~Stoltz, and J.~Y. Yu, ``Lipschitz bandits without the lipschitz
  constant,'' in {\em Algorithmic Learning Theory} (J.~Kivinen,
  C.~Szepesv{\'a}ri, E.~Ukkonen, and T.~Zeugmann, eds.), (Berlin, Heidelberg),
  pp.~144--158, Springer Berlin Heidelberg, 2011.

\bibitem{NEURIPS2023_3b3a83a5}
E.~X.-Y. Gong and M.~Sellke, ``Asymptotically optimal quantile pure exploration
  for infinite-armed bandits,'' in {\em Advances in Neural Information
  Processing Systems} (A.~Oh, T.~Naumann, A.~Globerson, K.~Saenko, M.~Hardt,
  and S.~Levine, eds.), vol.~36, pp.~18551--18581, Curran Associates, Inc.,
  2023.

\bibitem{de2021bandits}
R.~De~Heide, J.~Cheshire, P.~M{\'e}nard, and A.~Carpentier, ``Bandits with many
  optimal arms,'' {\em Advances in Neural Information Processing Systems},
  vol.~34, pp.~22457--22469, 2021.

\bibitem{davis2020stochastic}
D.~Davis, D.~Drusvyatskiy, S.~Kakade, and J.~D. Lee, ``Stochastic subgradient
  method converges on tame functions,'' {\em Foundations of computational
  mathematics}, vol.~20, no.~1, pp.~119--154, 2020.

\bibitem{bolte2021conservative}
J.~Bolte and E.~Pauwels, ``Conservative set valued fields, automatic
  differentiation, stochastic gradient methods and deep learning,'' {\em
  Mathematical Programming}, vol.~188, pp.~19--51, 2021.

\bibitem{bolte2007clarke}
J.~Bolte, A.~Daniilidis, A.~Lewis, and M.~Shiota, ``Clarke subgradients of
  stratifiable functions,'' {\em SIAM Journal on Optimization}, vol.~18, no.~2,
  pp.~556--572, 2007.

\bibitem{borwein1997distinct}
J.~Borwein and X.~Wang, ``Distinct differentiable functions may share the same
  clarke subdifferential at all points,'' {\em Proceedings of the American
  Mathematical Society}, vol.~125, no.~3, pp.~807--813, 1997.

\bibitem{van1998tame}
L.~Van~den Dries, {\em Tame topology and o-minimal structures}, vol.~248.
\newblock Cambridge university press, 1998.

\bibitem{bolte2020mathematical}
J.~Bolte and E.~Pauwels, ``A mathematical model for automatic differentiation
  in machine learning,'' {\em Advances in Neural Information Processing
  Systems}, vol.~33, pp.~10809--10819, 2020.

\bibitem{NIPS2011_7e889fb7}
R.~Munos, ``Optimistic optimization of a deterministic function without the
  knowledge of its smoothness,'' in {\em Advances in Neural Information
  Processing Systems} (J.~Shawe-Taylor, R.~Zemel, P.~Bartlett, F.~Pereira, and
  K.~Weinberger, eds.), vol.~24, Curran Associates, Inc., 2011.

\bibitem{hochreiter1998vanishing}
S.~Hochreiter, ``The vanishing gradient problem during learning recurrent
  neural nets and problem solutions,'' {\em International Journal of
  Uncertainty, Fuzziness and Knowledge-Based Systems}, vol.~6, no.~02,
  pp.~107--116, 1998.

\bibitem{hanin2018neural}
B.~Hanin, ``Which neural net architectures give rise to exploding and vanishing
  gradients?,'' {\em Advances in neural information processing systems},
  vol.~31, 2018.

\bibitem{NEURIPS2020_9332c513}
W.~Moses and V.~Churavy, ``Instead of rewriting foreign code for machine
  learning, automatically synthesize fast gradients,'' in {\em Advances in
  Neural Information Processing Systems} (H.~Larochelle, M.~Ranzato,
  R.~Hadsell, M.~F. Balcan, and H.~Lin, eds.), vol.~33, pp.~12472--12485,
  Curran Associates, Inc., 2020.

\bibitem{ADAMTT}
M.~Aschenbrenner, L.~van~den Dries, and J.~van~der Hoeven, {\em Asymptotic
  differential algebra and model theory of transseries}, vol.~195 of {\em
  Annals of Mathematics Studies}.
\newblock Princeton University Press, Princeton, NJ, 2017.

\bibitem{rockafellar2009variational}
R.~T. Rockafellar and R.~J.-B. Wets, {\em Variational analysis}, vol.~317.
\newblock Springer Science \& Business Media, 2009.

\bibitem{fischer2005peano}
A.~Fischer, {\em Peano-differentiable functions in o-minimal structures}.
\newblock PhD thesis, Universit{\"a}t Passau, 2005.

\bibitem{Ross2013}
K.~A. Ross, {\em Elementary analysis}.
\newblock Undergraduate Texts in Mathematics, Springer, New York, second~ed.,
  2013.
\newblock The theory of calculus, In collaboration with Jorge M. L\'opez.

\bibitem{Clarke1990}
F.~H. Clarke, {\em Optimization and nonsmooth analysis}, vol.~5 of {\em
  Classics in Applied Mathematics}.
\newblock Society for Industrial and Applied Mathematics (SIAM), Philadelphia,
  PA, second~ed., 1990.

\bibitem{rademacher1919partielle}
H.~Rademacher, ``{\"U}ber partielle und totale differenzierbarkeit von
  funktionen mehrerer variabeln und {\"u}ber die transformation der
  doppelintegrale,'' {\em Mathematische Annalen}, vol.~79, no.~4, pp.~340--359,
  1919.

\bibitem{Warga81}
J.~Warga, ``Fat homeomorphisms and unbounded derivate containers,'' {\em J.
  Math. Anal. Appl.}, vol.~81, no.~2, pp.~545--560, 1981.

\end{thebibliography}

%%%%%%%%%%%%%%%%%%%%%%%%%%%%%%%%%%%%%%%%%%%%%%%%%%%%%%%%%%%%
\clearpage
\appendix

\section{Technical Appendices and Supplementary Material}\label{appendix_Supplementary_Material}
%Technical appendices with additional results, figures, graphs and proofs may be %submitted with the paper submission before the full submission deadline (see %above), or as a separate PDF in the ZIP file below before the supplementary %material deadline. There is no page limit for the technical appendices.

\noindent
In this appendix we give a proof of Theorem~\ref{thm:main} (Theorem~\ref{main_thm_proved} below) which is an o-minimal variant of~\cite[Theorem 1]{jordan2020exactly} (Theorem~\ref{thm:wrong} above). Theorem~\ref{thm:main} follows from the following nontrivial but well-known facts about functions $f:\R^n\to\R^m$:
\begin{enumerate}
\item if $f$ is definable in an o-minimal expansion of the real field (in short: $f$ \emph{is definable}), then $f$ is $C^1$ almost everywhere (cf. Lemma~\ref{diff_facts_for_definable_functions})
\item (Rademacher's Theorem) if $f$ is locally Lipschitz, then $f$ is Fr\'{e}chet-dfferentiable almost everywhere (cf. Lemma~\ref{Rademachers_theorem})
\item (Warga's Theorem) for locally Lipschitz $f$, the Clarke Jacobian $J_f^c$ of $f$ is ``blind'' to sets of measure zero (cf. Lemma~\ref{Clarke_Jacobian_blind_null})
\end{enumerate}

\noindent
Taking these facts for granted, here we otherwise give a self-contained account in order to see where the assumption of definability is relevant and where it is not. In particular, via a Path Lemma~\ref{path_lemma} we are able to avoid the Lebesgue integral in favor of the Riemann integral in the proof of~\cite[Theorem 1]{jordan2020exactly} (Corollary~\ref{Jordan_theorem_definable} below) in the case that $f$ is definable.

\medskip\noindent
We set the following assumptions/conventions throughout the appendix (more assumptions will be added along the way):
\begin{itemize}
\item $\frak{R}=(\R;<,+,\cdot,\ldots)$ is an o-minimal expansion of the real field and ``definable'' means ``definable in $\frak{R}$ with parameters'' 
\item we assume the reader is familiar with the concepts of \emph{o-minimality} and \emph{definability}; our main references for these concepts include~\cite{van1998tame} and~\cite[Appendix B]{ADAMTT}.
The reader is free to consider the special case where $\frak{R}=(\R;<,+,\cdot)$ is the real field, and thus ``definable'' means ``semialgebraic''
\item $f:\R^n\rightharpoonup\R^m$ is a partial function
\item $X\subseteq \operatorname{dom}(f)\subseteq\R^n$
\item for points $x,y\in\R^n$, we denote the line segment from $x$ to $y$ by:
\[
[x,y] \ := \ \{(1-t)x+ty:t\in[0,1]\}
\]
\item recall~\cite[Chapter 5]{rockafellar2009variational} that a \emph{set-valued map} $D:A\rightrightarrows B$ is by definition a function $D:A\to\mathcal{P}(B)$ (where $\mathcal{P}(B)$ denotes the powerset of $B$). When convenient, we identify $D$ with its \emph{graph} $\operatorname{gph}D:=\{(a,b):b\in D(a)\}\subseteq A\times B$. If $A\subseteq\R^m$ and $B\subseteq \R^n$, then we say $D$ is \emph{definable} if $\operatorname{gph}D\subseteq\R^{m+n}$ is definable.
\item we roughly follow the notation and definitions as in~\cite{jordan2020exactly}
\end{itemize}

\subsection*{Norms on vector spaces over $\R$} The concept of \emph{Lipschitz constant} is relative to a choice of norms on the spaces $\R^m,\R^n$. Here we recall some basic definitions and properties concerning norms.

\begin{definition}
Suppose $V$ is a vector space over $\R$. A \textbf{norm} $\|\cdot\|$ on $V$ is a function:
\[
\|\cdot\| \ : \ V\to [0,+\infty)
\]
which satisfies for all $x,y\in V$ and $\lambda\in\R$:
\begin{enumerate}
\item (Positive definiteness) $\|x\|=0$ iff $x=0$
\item (Absolute homogeneity) $\|\lambda x\|=|\lambda|\|x\|$
\item (Triangle inequality) $\|x+y\|\leq \|x\|+\|y\|$
\end{enumerate}
\end{definition}

\begin{lemma}\label{basic_norm_sup_facts}
Suppose $\|\cdot\|$ is a norm on $\R^n$. 
\begin{enumerate}
\item If $x_i$ is a sequence in $\R^n$ such that $x_i\to x$, then:
\[
\|x\| \ = \ \lim_{i\to\infty}\|x_i\| \ \leq \ \sup\{\|x_i\|:i\geq 0\}
\]
\item If $X\subseteq\R^n$, then:
\[
\sup_{x\in X}\|x\| \ = \ \sup_{x\in\operatorname{conv}(X)}\|x\|
\]
\end{enumerate}
\end{lemma}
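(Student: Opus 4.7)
For part (1), the plan is to use the reverse triangle inequality
\[
\bigl|\|x\| - \|x_i\|\bigr| \ \leq \ \|x - x_i\|,
\]
which follows immediately from the triangle inequality applied to $x = (x - x_i) + x_i$ and $x_i = (x_i - x) + x$ together with absolute homogeneity. A minor subtlety: the statement only says ``$x_i \to x$'' without specifying with respect to which norm. I would resolve this by first noting the well-known fact that all norms on the finite-dimensional space $\R^n$ are equivalent, so convergence in any background norm on $\R^n$ (e.g.\ the Euclidean norm) is the same as convergence in $\|\cdot\|$. Thus $\|x - x_i\| \to 0$, and the reverse triangle inequality yields $\|x_i\| \to \|x\|$. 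The inequality with the supremum is then automatic, since the limit of a convergent sequence is bounded above by the supremum of its terms.

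For part (2), the inclusion $X \subseteq \operatorname{conv}(X)$ gives the easy direction $\sup_{x\in X}\|x\| \leq \sup_{x\in\operatorname{conv}(X)}\|x\|$. For the reverse, I would write an arbitrary $y \in \operatorname{conv}(X)$ as a finite convex combination $y = \sum_{i=1}^{k}\lambda_i x_i$ with $x_i \in X$, $\lambda_i \geq 0$, and $\sum_{i=1}^{k}\lambda_i = 1$. Applying the triangle inequality and absolute homogeneity yields
\[
\|y\| \ \leq \ \sum_{i=1}^{k}\lambda_i \|x_i\| \ \leq \ \Bigl(\sum_{i=1}^{k}\lambda_i\Bigr)\sup_{x\in X}\|x\| \ = \ \sup_{x\in X}\|x\|.
\]
Taking the supremum over $y \in \operatorname{conv}(X)$ gives the matching inequality, completing the proof.

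The lemma is essentially a standard warm-up about norms, and neither step presents a real obstacle. The only point that might warrant a sentence of justification is the norm-equivalence remark invoked in part (1), and the passage to a \emph{finite} convex combination in part (2) — which uses the standard definition of $\operatorname{conv}(X)$ as the union over all finite convex combinations of elements of $X$, so no Carath\'eodory-type result is needed.
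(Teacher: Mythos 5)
Your proof is correct and takes essentially the same approach as the paper: for part (1) the paper simply invokes continuity of the norm $\|\cdot\|:\R^n\to\R$, which is exactly what your reverse-triangle-inequality (plus norm-equivalence) argument establishes, and for part (2) both proofs expand an element of $\operatorname{conv}(X)$ as a finite convex combination and bound it via the triangle inequality and absolute homogeneity. The only cosmetic difference in (2) is that the paper first bounds by the finite maximum $\|x_{i_0}\|$ while you bound directly by $\sup_{x\in X}\|x\|$.
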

\begin{proof}
(1) Follows from the fact that $\|\cdot\|:\R^n\to\R$ is a continuous function.

(2) The direction ``$\leq$'' is clear. For the ``$\geq$'' direction, suppose $x\in\operatorname{conv}(X)$, then there is $k\geq 1$ and $x_1,\ldots,x_k\in X$ and $\lambda_1,\ldots,\lambda_k\in[0,1]$ such that $x=\sum_{1\leq i\leq k}\lambda_ix_i$ and $\sum_{1\leq i\leq k}\lambda_i=1$. Suppose $i_0\in\{1,\ldots,k\}$ satisfies $\|x_{i_0}\|=\max\{\|x_i\|:i=1,\ldots,k\}$. Then:
\[
\textstyle \|\sum_{1\leq i\leq k}\lambda_ix_i\| \ \leq \ \sum_{1\leq i\leq k}\lambda_i\|x_i\| \ \leq \ \sum_{1\leq i\leq k}\lambda_i\|x_{i_0}\| \ = \ \|x_{i_0}\|\qedhere
\]
\end{proof}

\noindent
The following lemma includes a template for the type of construction which occurs in the definition of \emph{Clarke Jacobian}:

\begin{lemma}\label{sup_equivalence_template}
Given $S\subseteq X$ and $J:S\to\R^m$, define:
\[
D_J \ : \ X\rightrightarrows\R^m,\quad x \ \mapsto \ D_J(x) \ := \ \{\lim_{i\to\infty}J(x_i):x_i\to x, x_i\in S\}
\]
Then:
\[
\sup_{x\in S}\|J(x)\|_{\beta} \ = \ \sup_{x\in X, G\in \operatorname{conv}(D_J(x))}\|G\|_{\beta}
\]
\end{lemma}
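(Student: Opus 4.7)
The plan is to establish the equality by separately proving both inequalities, each of which reduces to one part of Lemma~\ref{basic_norm_sup_facts}. Let me denote the left-hand supremum by $L$ and the right-hand supremum by $R$.

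For the inequality $L \leq R$, I would argue directly from the definitions. Suppose $x \in S$. Since $S \subseteq X$, we have $x \in X$, and the constant sequence $x_i := x$ (which lies in $S$ and converges to $x$) witnesses that $J(x) \in D_J(x) \subseteq \operatorname{conv}(D_J(x))$. Thus $\|J(x)\|_{\beta}$ is among the quantities over which the right-hand supremum is taken, yielding $\|J(x)\|_{\beta} \leq R$. Taking the supremum over $x \in S$ gives $L \leq R$.

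For the reverse inequality $L \geq R$, the plan is to peel off the convex hull and then use continuity of the norm. First I would apply Lemma~\ref{basic_norm_sup_facts}(2) pointwise to each $D_J(x)$, obtaining
\[
\sup_{G \in \operatorname{conv}(D_J(x))}\|G\|_{\beta} \ = \ \sup_{G \in D_J(x)}\|G\|_{\beta}
\]
so that $R = \sup_{x \in X, G \in D_J(x)}\|G\|_{\beta}$. Now fix $x \in X$ and $G \in D_J(x)$. By definition of $D_J(x)$, there exists a sequence $x_i \in S$ with $x_i \to x$ and $J(x_i) \to G$. Applying Lemma~\ref{basic_norm_sup_facts}(1) to the sequence $J(x_i)$ in $\R^m$ with the norm $\|\cdot\|_{\beta}$, we obtain
\[
\|G\|_{\beta} \ = \ \lim_{i\to\infty}\|J(x_i)\|_{\beta} \ \leq \ \sup\{\|J(x_i)\|_{\beta} : i \geq 0\} \ \leq \ \sup_{y \in S}\|J(y)\|_{\beta} \ = \ L
\]
since each $x_i \in S$. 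Taking the supremum over all such $x, G$ yields $R \leq L$.

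This proof is essentially bookkeeping, so I do not anticipate a major obstacle; the only subtlety is keeping clear that $D_J$ is defined on all of $X$ (not just $S$), so that taking the supremum over $x \in X$ in $R$ makes sense, while the constant-sequence argument in the $L \leq R$ direction still works because $S \subseteq X$. No definability or o-minimality hypotheses are used here, reflecting the lemma's purely topological content.
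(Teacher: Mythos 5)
Your proof is correct and follows essentially the same route as the paper: you prove $L\leq R$ directly (the paper merely calls this direction "clear", whereas you supply the constant-sequence witness), and for $R\leq L$ you first apply Lemma~\ref{basic_norm_sup_facts}(2) pointwise to strip off the convex hull and then Lemma~\ref{basic_norm_sup_facts}(1) to pass the norm through the limit, exactly as the paper does. No issues.
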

\begin{proof}
It is clear that the direction ``$\leq$'' holds. Conversely, first note that by Lemma~\ref{basic_norm_sup_facts}(2) we have:
\[
\sup_{x\in X, G\in \operatorname{conv}(D_J(x))}\|G\|_{\beta} \ = \ \sup_{x\in X}\sup_{G\in\operatorname{conv}(D_J(x))}\|G\|_{\beta} \ = \ \sup_{x\in X}\sup_{G\in D_J(x)}\|G\|_{\beta}
\]
Next, let $x\in X$ and $G\in D_J(x)$ be arbitrary, and take $x_i\in S$ such that $x_i\to x$ and $J(x_i)\to G$. Then by Lemma~\ref{basic_norm_sup_facts}(1) we have
\[
\|G\|_{\beta} \ \leq \ \sup\{\|J(x_i)\|_{\beta}:i\geq 0\} \ \leq \ \sup_{x\in S}\|J(x)\|_{\beta}
\]
Since $x,G$ were arbitrary, this yields the ``$\geq$'' direction.
\end{proof}

\noindent
As is common, we shall identify $\R^n$ with its dual space. A norm $\|\cdot\|$ on $\R^n$ induces a \textbf{dual norm} $\|\cdot\|_*$ on the dual space $\R^n$:
\[
\|\cdot\|_* \ : \ \R^n\to[0,+\infty),\quad y \ \mapsto \ \|y\|_* \ := \ \sup_{\|x\|\leq 1}\langle x,y\rangle
\]
In our finite-dimensional setting we have $\|\cdot\|_{\ast\ast}=\|\cdot\|$. We will also make use H\"{o}lder's inequality for dual norms:

\begin{lemma}\cite[Proposition 1]{jordan2020exactly}\label{Holder_inequality_dual_norm}
Suppose $\|\cdot\|$ is a norm on $\R^n$. Then for every $x,y\in\R^n$ we have:
\[
\langle x,y\rangle \ \leq \ \|x\|\cdot\|y\|_*
\]
\end{lemma}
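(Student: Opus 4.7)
The plan is to reduce the inequality directly to the definition of $\|\cdot\|_*$ using absolute homogeneity of $\|\cdot\|$. First I would dispose of the degenerate case: if $x=0$, then $\langle x,y\rangle=0$ while $\|x\|\cdot\|y\|_*=0$ as well, so the inequality holds trivially. The main case is therefore $x\neq 0$, in which $\|x\|>0$ by positive definiteness of the norm.

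Next I would normalize $x$ by setting $u:=x/\|x\|$. By absolute homogeneity we have $\|u\|=\|x\|/\|x\|=1$, so $u$ lies in the closed unit ball of $\|\cdot\|$. Unfolding the defining supremum
\[
\|y\|_* \ = \ \sup_{\|z\|\leq 1}\langle z,y\rangle
\]
at the particular admissible vector $z=u$ then gives $\langle u,y\rangle\leq \|y\|_*$. Multiplying through by the positive scalar $\|x\|$ and invoking bilinearity of the inner product yields
\[
\langle x,y\rangle \ = \ \|x\|\cdot\langle u,y\rangle \ \leq \ \|x\|\cdot\|y\|_*,
\]
which is exactly the claim.

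There is essentially no obstacle to overcome: the lemma is little more than a rescaled restatement of the definition of the dual norm. I would only remark in passing that the one-sided form stated in the paper is genuinely weaker than the usual two-sided H\"{o}lder-type inequality $|\langle x,y\rangle|\leq \|x\|\cdot\|y\|_*$, but the latter follows at once by applying the argument to both $x$ and $-x$ and using $\|-x\|=\|x\|$.
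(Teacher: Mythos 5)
Your proof is correct and is the standard normalization argument; the paper itself does not include a proof of this lemma (it simply cites \cite[Proposition~1]{jordan2020exactly}), so there is nothing to contrast against, but your treatment of the degenerate case $x=0$ and the reduction to the unit ball via $u:=x/\|x\|$ is exactly the expected route.
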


\noindent
Using the double dual norm and H\"{o}lder's inequality, we get:

\begin{lemma}[Continuous triangle inequality]\label{norm_integral_triangle_inequality}
Suppose $g:[0,1]\to\R^m$ is Riemann integrable and $\|\cdot\|$ is a norm on $\R^m$. Then $\|g\|:[0,1]\to\R$ is Riemann integrable and:
\[
\textstyle \|\int_0^1g(t)dt\| \ \leq \ \int_0^1\|g(t)\|dt
\]
\end{lemma}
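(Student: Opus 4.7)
The plan is to reduce the inequality to H\"{o}lder's inequality (Lemma~\ref{Holder_inequality_dual_norm}) via double duality $\|\cdot\|=\|\cdot\|_{\ast\ast}$, and to handle integrability separately as a preliminary step.

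First I would verify that $\|g\|:[0,1]\to\R$ is Riemann integrable. By the reverse triangle inequality $\bigl|\|x\|-\|y\|\bigr|\leq\|x-y\|$, the norm $\|\cdot\|$ is continuous on $\R^m$ (with respect to any fixed norm, all norms on $\R^m$ being equivalent). Since $g$ is Riemann integrable, each component of $g$ is bounded and continuous almost everywhere on $[0,1]$, hence $g$ itself is bounded and continuous at almost every $t\in[0,1]$. Composing with the continuous function $\|\cdot\|$ preserves both properties, so $\|g\|$ is bounded and continuous a.e., which by the Lebesgue criterion gives Riemann integrability.

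For the main inequality, set $v:=\int_0^1 g(t)\,dt\in\R^m$. Because $\R^m$ is finite-dimensional we have $\|v\|=\|v\|_{\ast\ast}=\sup_{\|y\|_{\ast}\leq 1}\langle v,y\rangle$, and this supremum is attained at some $y^{\ast}\in\R^m$ with $\|y^{\ast}\|_{\ast}\leq 1$ by compactness of the unit ball of $(\R^m,\|\cdot\|_{\ast})$. Using linearity of the Riemann integral on each component (i.e., $\langle \int_0^1 g(t)\,dt,\,y^{\ast}\rangle=\int_0^1\langle g(t),y^{\ast}\rangle\,dt$) and applying Lemma~\ref{Holder_inequality_dual_norm} pointwise, we then obtain
\[
\|v\| \;=\; \langle v, y^{\ast}\rangle \;=\; \int_0^1 \langle g(t),y^{\ast}\rangle\,dt \;\leq\; \int_0^1 \|g(t)\|\cdot\|y^{\ast}\|_{\ast}\,dt \;\leq\; \int_0^1 \|g(t)\|\,dt,
\]
which is the desired inequality.

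There is no real obstacle here; the only subtlety worth pausing on is the use of the attained supremum in the duality step, which is where finite-dimensionality (and hence compactness of the dual unit ball) is invoked — in an infinite-dimensional setting one would instead have to appeal to Hahn-Banach to produce a norming functional, but in our setting Lemma~\ref{Holder_inequality_dual_norm} together with compactness suffices.
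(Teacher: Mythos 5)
Your proof is correct and follows essentially the same route as the paper: reduce via double duality to pairings $\langle y,g(t)\rangle$ with $\|y\|_*\leq 1$, push the integral through by linearity, and conclude pointwise with H\"{o}lder's inequality (Lemma~\ref{Holder_inequality_dual_norm}). The only cosmetic differences are that you explicitly invoke compactness to produce a norming $y^{\ast}$ where the paper simply bounds the integrand for all admissible $y$ and takes the supremum afterwards, and that you spell out the Lebesgue-criterion argument for the integrability of $\|g\|$, which the paper dismisses as clear.
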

\begin{proof}
The first statement is clear. Using the double dual norm and linearity, it suffices to show:
\[
\textstyle \|\int_0^1g(t)dt\| \ = \ \sup_{\|y\|_*\leq 1}\langle y, \int_0^1g(t)dt\rangle \ = \ \sup_{\|y\|_*\leq 1}\int_0^1\langle y, g(t)\rangle dt \ \leq \ \int_0^1\|g(t)\|dt
\]
Let $y$ be arbitrary such that $\|y\|_*\leq 1$, then it suffices to show:
\[
\textstyle \int_0^1\langle y,g(t)\rangle dt \ \leq \ \int_0^1\|g(t)\|dt
\]
For this, it suffices to show for every $t\in[0,1]$:
\[
\langle y,g(t)\rangle \ \leq \ \|g(t)\|
\]
However this follows from H\"{o}lder's inequality~\ref{Holder_inequality_dual_norm} for dual norms:
\[
\textstyle \langle y, g(t)\rangle \ \leq \ \|y\|_*\|g(t)\| \ \leq \ \|g(t)\| 
\]
since $\|y\|_*\leq 1$. 
\end{proof}

For the rest of this appendix, we further assume:
\begin{itemize}
\item $\|\cdot\|_{\alpha}$ and $\|\cdot\|_{\beta}$ are norms on $\R^n$ and $\R^m$ respectively (we don't require these norms to be definable, e.g., the graph of the function $\|\cdot\|_{\alpha}:\R^n\to\R$, as a subset of $\R^{n+1}$, is not assumed to be definable in $\frak{R}$, likewise for $\|\cdot\|_{\beta}$)
\end{itemize}

We may also define the \textbf{matrix norm} $\|\cdot\|_{\alpha,\beta}$ on $\R^{m\times n}$ induced by $\|\cdot\|_{\alpha}$ and $\|\cdot\|_{\beta}$:
\[
\|\cdot\|_{\alpha,\beta} \ : \ \R^{m\times n}\to[0,+\infty),\quad A \ \mapsto \ \sup_{\|x\|_{\alpha}\leq 1}\|Ax\|_{\beta}
\]
There is also an analogous H\"{o}lder-type inequality for matrix norms:

\begin{lemma}\cite[Proposition A.2]{jordan2020exactly}\label{Jordan_Prop_A_2}
For every $A\in\R^{m\times n}$ and $x\in\R^n$ we have:
\[
\|Ax\|_{\beta} \ \leq \ \|A\|_{\alpha,\beta}\|x\|_{\alpha}
\]
\end{lemma}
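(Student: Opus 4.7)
The plan is to reduce everything to the defining supremum of $\|A\|_{\alpha,\beta}$ using a standard normalization argument. The inequality is essentially a restatement of the fact that an induced operator norm majorizes the operator on vectors of unit length, extended to arbitrary vectors by homogeneity.

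First I would dispose of the degenerate case $x=0$. Here $Ax=0$ by linearity, so $\|Ax\|_\beta=0$ by positive definiteness of $\|\cdot\|_\beta$, and the inequality holds trivially (both sides are $0$).

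For the remaining case $x\neq 0$, positive definiteness of $\|\cdot\|_\alpha$ gives $\|x\|_\alpha>0$, so the vector $u:=x/\|x\|_\alpha$ is well-defined. By absolute homogeneity of $\|\cdot\|_\alpha$ we have $\|u\|_\alpha=1$, hence $u$ belongs to the set over which the supremum defining $\|A\|_{\alpha,\beta}$ is taken, so
\[
\|Au\|_\beta \ \leq \ \sup_{\|y\|_\alpha\leq 1}\|Ay\|_\beta \ = \ \|A\|_{\alpha,\beta}.
\]
On the other hand, by linearity of $A$ and absolute homogeneity of $\|\cdot\|_\beta$,
\[
\|Au\|_\beta \ = \ \bigl\|A\bigl(\tfrac{1}{\|x\|_\alpha}x\bigr)\bigr\|_\beta \ = \ \tfrac{1}{\|x\|_\alpha}\|Ax\|_\beta.
\]
Multiplying through by the nonnegative scalar $\|x\|_\alpha$ yields the desired bound $\|Ax\|_\beta\leq \|A\|_{\alpha,\beta}\|x\|_\alpha$.

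There is really no substantive obstacle here — this is a routine one-line verification once one unpacks the definition of the induced norm. The only subtlety worth flagging is that the supremum defining $\|A\|_{\alpha,\beta}$ is taken over the closed unit ball $\{y:\|y\|_\alpha\leq 1\}$, which is why $u$ (with $\|u\|_\alpha=1$) is a legitimate test vector; no compactness or attainment argument is needed, since we are deriving a bound rather than asserting the supremum is attained.
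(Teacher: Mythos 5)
Your proof is correct and follows essentially the same normalization argument as the paper's (commented-out) proof: set $u = x/\|x\|_\alpha$, observe that $u$ is a unit vector and hence a valid test vector for the supremum defining $\|A\|_{\alpha,\beta}$, and then recover the general bound by homogeneity. The only cosmetic difference is that you handle the $x=0$ case explicitly, whereas the paper simply says ``we may assume $x$ is nonzero.''
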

\iffalse
\begin{proof}
We may assume that $x$ is nonzero and set $y:=x/\|x\|_{\alpha}$. Since $\|y\|_{\alpha}=1$, we have:
\[
\|Ax\|_{\beta} \ = \ \|x\|_{\alpha}\|Ay\|_{\beta} \ \leq \ \sup_{\|y'\|_{\alpha}\leq 1}\|x\|_{\alpha}\|Ay'\|_{\beta} \ = \ \|x\|_{\alpha}\|A\|_{\alpha,\beta} \qedhere
\]
\end{proof}
\fi

\subsection*{The Lipschitz property}

\begin{definition}\cite[Definition 1]{jordan2020exactly}
The \textbf{local $(\alpha,\beta)$-Lipschitz constant} of  $f$ over  $X$ is the (possibly infinite) quantity:
\[
L^{(\alpha,\beta)}(f,X) \ := \ \sup_{x,y\in X}\frac{\|f(x)-f(y)\|_{\beta}}{\|x-y\|_{\alpha}} \quad (x\neq y)
\]
Moreover, if $L^{(\alpha,\beta)}(f,X)$ is finite, we say that $f$ is \textbf{$(\alpha,\beta)$-Lipschitz} over $X$.
We say that $f$ is \textbf{locally $(\alpha,\beta)$-Lipschitz} over $X$ if for every $x\in X$, there exists a neighbourhood $U$ of $x$ such that $f$ is $(\alpha,\beta)$-Lipschitz over $X\cap U$.
\end{definition}

\noindent
The following is well-known:

\begin{lemma}\label{locally_lipschitz_compact}
If $X$ is compact and $f$ is locally $(\alpha,\beta)$-Lipschitz over $X$, then $f$ is $(\alpha,\beta)$-Lipschitz over $X$.
\end{lemma}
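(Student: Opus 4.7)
The plan is to combine a Lebesgue number argument with the boundedness of $f(X)$ coming from compactness, in order to split the supremum defining $L^{(\alpha,\beta)}(f,X)$ into two regimes: pairs $x,y\in X$ that are nearby in the $\|\cdot\|_{\alpha}$-metric, and pairs that are far apart.

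First, using local Lipschitzness, for each $x\in X$ pick an open neighbourhood $U_x\subseteq\R^n$ and a constant $L_x\geq 0$ such that $f$ is $(\alpha,\beta)$-Lipschitz on $X\cap U_x$ with constant $L_x$. Since $\{U_x:x\in X\}$ is an open cover of the compact set $X$, extract a finite subcover $U_{x_1},\ldots,U_{x_N}$ and put $L_0:=\max_{1\leq i\leq N}L_{x_i}$. Because all norms on $\R^n$ are equivalent, $\|\cdot\|_{\alpha}$ induces the usual topology on $\R^n$, so $(X,\|\cdot\|_{\alpha})$ is a compact metric space and the Lebesgue number lemma yields $\delta>0$ with the property that every subset of $X$ of $\|\cdot\|_{\alpha}$-diameter less than $\delta$ is contained in some $U_{x_i}$.

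Next, observe that local Lipschitzness implies $f$ is continuous on $X$, so the image $f(X)\subseteq\R^m$ is compact and hence bounded in $\|\cdot\|_{\beta}$; let $C:=\sup_{x\in X}\|f(x)\|_{\beta}<\infty$. For distinct $x,y\in X$ I split into cases. If $\|x-y\|_{\alpha}<\delta$, then $\{x,y\}\subseteq U_{x_i}$ for some $i$, so $\|f(x)-f(y)\|_{\beta}\leq L_{x_i}\|x-y\|_{\alpha}\leq L_0\|x-y\|_{\alpha}$. If instead $\|x-y\|_{\alpha}\geq\delta$, then $\|f(x)-f(y)\|_{\beta}\leq 2C\leq (2C/\delta)\|x-y\|_{\alpha}$. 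In either case the ratio is bounded by $\max(L_0,2C/\delta)<\infty$, yielding $L^{(\alpha,\beta)}(f,X)<\infty$.

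There is no real obstacle here; the only slightly delicate point is that the paper's notion of ``locally Lipschitz'' does not presuppose that $X$ is connected or convex, which is precisely why the far-apart case cannot be handled by chaining overlapping Lipschitz neighbourhoods and must instead be controlled via the diameter-plus-boundedness estimate above.
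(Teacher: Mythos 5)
The paper states this lemma as ``well-known'' and supplies no proof, so there is no in-text argument to compare against. Your proof is correct. The decomposition via the Lebesgue number lemma into ``nearby'' pairs (bounded by the finite Lipschitz cover constant $L_0$) and ``far'' pairs (bounded by $2C/\delta$, using that $f(X)$ is compact hence bounded) is the standard way to prove this in full generality, and your closing remark is apt: the alternative chaining argument, which telescopes $\|f(y)-f(x)\|_{\beta}$ along a chain of overlapping Lipschitz neighbourhoods, requires $X$ to be (chain-)connected, while the lemma as stated assumes only compactness, so the boundedness step is genuinely needed. One minor remark on your Lebesgue number step: strictly speaking the lemma applies to the relatively open cover $\{X\cap U_{x_i}\}$ of the compact metric space $(X,\|\cdot\|_{\alpha})$, but since any subset of $X$ contained in $U_{x_i}$ is automatically contained in $X\cap U_{x_i}$, this is the same thing, and the Lipschitz estimate you invoke is indeed for $f$ restricted to $X\cap U_{x_i}$, exactly as the paper's definition of locally Lipschitz provides. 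It is also worth noting that the paper only ever invokes this lemma with $X$ a compact line segment $[x,y]$ (in the proof of Lemma~\ref{basic_lemma_Jacobian_bounds_diff_quot}), where chaining would have sufficed, but your argument proves the stated generality.
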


\subsection*{Directionally differentiable, Fr\'{e}chet differentiable, and $C^1$} For the rest of this appendix, we further assume:
\begin{itemize}
\item $X$ is open
\end{itemize}

\begin{definition}
Given $x\in X$ and $v\in \R^n$, we say that $f$ is \textbf{directionally differentiable} at $x$ in the direction $v$ if there exists $\ell\in\R^m$ such that:
\[
\lim_{t\downarrow0}(f(x+tv)-f(x))/t \ = \ \ell
\]
in which case we define the \textbf{directional derivative} $f'(x;v)$ of $f$ at $x$ in the direction $v$ to be $f'(x;v):=\ell$. We say that $f$ is \textbf{directionally differentiable} at $x$ if $f$ is directionally differentiable at $x$ in the direction $v$ for every $v\in\R^n$.
\end{definition}

\noindent
Here is the basic relationship between a directional derivative and the Lipschitz constant:

\begin{lemma}\label{lipschitz_constant_bounds_dir_der}
If $f$ is directionally differentiable at $x$ in the direction $v$, then:
\[
\|f'(x;v)\|_{\beta} \ \leq \ L^{(\alpha,\beta)}(f,X)\|v\|_{\alpha}
\]
\end{lemma}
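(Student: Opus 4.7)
The plan is to apply the definition of the directional derivative directly, combined with the defining inequality for the Lipschitz constant, and to pass to the limit using the continuity of the norm $\|\cdot\|_\beta$. No appeal to differentiability structure beyond the definition is needed.

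First I would dispose of the trivial case $v=0$: in that case $f(x+tv)=f(x)$ for all $t>0$, so $f'(x;0)=0$ and the inequality reduces to $0\leq L^{(\alpha,\beta)}(f,X)\cdot 0$, which is immediate. I would also note that if $L^{(\alpha,\beta)}(f,X)=+\infty$ the inequality is trivial, so we may assume it is finite (though this is not strictly necessary for the argument). Henceforth assume $v\neq 0$.

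Next, since $X$ is open and $x\in X$, there exists $t_0>0$ such that $x+tv\in X$ for every $t\in(0,t_0)$. For any such $t$, the pair $(x+tv,x)$ lies in $X\times X$ with $x+tv\neq x$, so by the definition of the local Lipschitz constant:
\[
\frac{\|f(x+tv)-f(x)\|_\beta}{\|tv\|_\alpha} \ \leq \ L^{(\alpha,\beta)}(f,X).
\]
Using absolute homogeneity of $\|\cdot\|_\alpha$ (so $\|tv\|_\alpha = t\|v\|_\alpha$) and rearranging yields:
\[
\Bigl\|\tfrac{f(x+tv)-f(x)}{t}\Bigr\|_\beta \ \leq \ L^{(\alpha,\beta)}(f,X)\,\|v\|_\alpha,
\]
for every $t\in(0,t_0)$.

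Finally, by the hypothesis of directional differentiability, $(f(x+tv)-f(x))/t \to f'(x;v)$ as $t\downarrow 0$, so by continuity of the norm $\|\cdot\|_\beta$ (as invoked in Lemma~\ref{basic_norm_sup_facts}(1)) we may pass to the limit on the left-hand side to obtain $\|f'(x;v)\|_\beta \leq L^{(\alpha,\beta)}(f,X)\|v\|_\alpha$. There is no real obstacle here; the only subtlety is making sure to use openness of $X$ to guarantee that $x+tv\in X$ for small $t$, which is needed to legitimately invoke the Lipschitz bound over $X$.
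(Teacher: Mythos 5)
Your proof is correct and takes essentially the same approach as the paper: bound the difference quotient by $L^{(\alpha,\beta)}(f,X)\|v\|_\alpha$ using the defining supremum and the homogeneity of $\|\cdot\|_\alpha$, then let $t\downarrow 0$ (the paper phrases this with an $\varepsilon$-approximation rather than explicit continuity of the norm, but the content is identical). You are slightly more explicit than the paper about the $v=0$ case and about invoking openness of $X$ to ensure $x+tv\in X$, both of which are fine additions.
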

\begin{proof}
We may suppose $v\neq 0$. For every $\varepsilon>0$ we can find $t>0$ sufficiently small such that $\|f'(x;v)\|_{\beta}$ is within $\varepsilon$ from the quantity:
\[
\frac{\|f(x+tv)-f(x)\|_{\beta}}{t} \ = \ \|v\|_{\alpha}\frac{\|f(x+tv)-f(x)\|_{\beta}}{\|(x+tv)-x\|_{\alpha}}
\]
Since the quantity on the righthand side contributes to the definition of $L^{(\alpha,\beta)}(f,X)\|v\|_{\alpha}$ as a certain supremum, the inequality follows.
\end{proof}

\begin{definition}
We say that $f$ is \textbf{Fr\'{e}chet-differentiable} at $x$ if there exists a linear map $L:\R^n\to\R^m$ such that $f(x)+L(y-x)$ is a first-order approximation of $f$ at $x$, i.e., we have:
\[
\lim_{y\to x}\frac{\|f(y)-f(x)-L(y-x)\|_{\beta}}{\|y-x\|_{\alpha}} \ = \ 0
\]
in which case we define the \textbf{Jacobian} $J_f(x)$ of $f$ at $x$ to be the (necessarily unique) linear map $J_f(x)=L$; we shall identify the Jacobian $J_f(x)$ with a matrix in $\R^{m\times n}$ which represents $L$ with respect to the standard basis.
\end{definition}

\begin{lemma}\label{Jacobian_gives_directional_derivative}
If $f$ is Fr\'{e}chet-differentiable at $x$, then $f$ is directionally differentiable at $x$ and for every $v\in\R^n$, we have:
\[
J_f(x)v \ = \ f'(x;v)
\]
\end{lemma}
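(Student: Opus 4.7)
The plan is to specialize the Fr\'{e}chet-differentiability limit along the one-parameter family $y = x + tv$ and read off the directional derivative. The argument is a direct unpacking of definitions and takes only a few lines, so the main task is to write it cleanly rather than overcome any genuine obstacle.

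First I would handle the trivial case $v = 0$: here $f(x + t\cdot 0) - f(x) = 0$ for every $t > 0$, so the limit defining $f'(x;0)$ equals $0$, which coincides with $J_f(x)\cdot 0 = 0$. So from now on assume $v \neq 0$.

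Since $X$ is open and $f$ is defined on $X$, for all sufficiently small $t > 0$ the point $y := x + tv$ lies in $X$, and $y \to x$ as $t \downarrow 0$. Applying the Fr\'{e}chet-differentiability hypothesis along this net, and using $\|tv\|_{\alpha} = t\|v\|_{\alpha}$ (since $t > 0$) together with the linearity $J_f(x)(tv) = t\,J_f(x)v$, I would write
\[
0 \ = \ \lim_{t \downarrow 0}\frac{\|f(x+tv) - f(x) - J_f(x)(tv)\|_{\beta}}{\|tv\|_{\alpha}} \ = \ \frac{1}{\|v\|_{\alpha}}\lim_{t \downarrow 0}\left\| \frac{f(x+tv)-f(x)}{t} - J_f(x)v \right\|_{\beta}.
\]
Multiplying through by $\|v\|_{\alpha} > 0$, this says exactly that $(f(x+tv)-f(x))/t \to J_f(x)v$ in the norm $\|\cdot\|_{\beta}$, which is the same as convergence in $\R^m$ since all norms on a finite-dimensional space are equivalent (or, more directly, because $\|\cdot\|_{\beta}$ is a norm and so separates points and induces the standard topology).

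By the definition of directional derivative, this precisely says $f$ is directionally differentiable at $x$ in direction $v$ with $f'(x;v) = J_f(x)v$. Since $v \in \R^n$ was arbitrary, $f$ is directionally differentiable at $x$ and the stated identity holds for every $v$. The only subtlety worth flagging in the write-up is the restriction to $t > 0$ (so that $\|tv\|_\alpha = t\|v\|_\alpha$ without an absolute value) and the elementary fact that the full limit $y \to x$ in the Fr\'{e}chet definition implies convergence along any particular curve approaching $x$; there is no real difficulty here.
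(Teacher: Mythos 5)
Your proof is correct and is the standard argument. The paper itself states this lemma without proof (treating it as an immediate consequence of the definitions), and what you have written is exactly the unpacking one would supply: specialize the Fr\'{e}chet limit to the curve $y = x + tv$ with $t \downarrow 0$, use absolute homogeneity of $\|\cdot\|_\alpha$ and $\|\cdot\|_\beta$ together with linearity of $J_f(x)$ to pull out the factors of $t$, and read off that $(f(x+tv)-f(x))/t \to J_f(x)v$. Your handling of the $v=0$ case and your remark about restricting $t>0$ so that $\|tv\|_\alpha = t\|v\|_\alpha$ are both appropriate; no gaps.
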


Here is a relationship between the Jacobian at a point and the Lipschitz constant:

\begin{lemma}\label{Easy_Lemma_Lipschitz_constant_bounds_Jacobian}
If $f$ is Fr\'{e}chet-differentiable at $z\in X$, then:
\[
\|J_f(z)\|_{\alpha,\beta} \ \leq \ L^{(\alpha,\beta)}(f,X)
\]
\end{lemma}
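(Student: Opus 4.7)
The plan is to unfold the definition of the induced matrix norm $\|J_f(z)\|_{\alpha,\beta}$ and bound it pointwise using the two preceding lemmas, which connect the Jacobian to directional derivatives and directional derivatives to the Lipschitz constant.

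First I would fix $v \in \R^n$ with $\|v\|_\alpha \leq 1$ and note that since $f$ is Fréchet-differentiable at $z$, Lemma~\ref{Jacobian_gives_directional_derivative} applies, giving $J_f(z) v = f'(z; v)$. In particular $f$ is directionally differentiable at $z$ in the direction $v$, so Lemma~\ref{lipschitz_constant_bounds_dir_der} yields
\[
\|J_f(z) v\|_\beta \ = \ \|f'(z; v)\|_\beta \ \leq \ L^{(\alpha,\beta)}(f, X)\, \|v\|_\alpha \ \leq \ L^{(\alpha,\beta)}(f, X).
\]

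Then I would take the supremum over all $v$ with $\|v\|_\alpha \leq 1$ on the left-hand side; by the definition of the induced matrix norm,
\[
\|J_f(z)\|_{\alpha,\beta} \ = \ \sup_{\|v\|_\alpha \leq 1} \|J_f(z) v\|_\beta \ \leq \ L^{(\alpha,\beta)}(f, X),
\]
which is exactly the inequality to be proved.

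There is essentially no obstacle here: the lemma is a one-line consequence of stringing together Lemmas~\ref{Jacobian_gives_directional_derivative} and~\ref{lipschitz_constant_bounds_dir_der}. The only subtlety is that one should handle $v = 0$ trivially (both sides vanish), which is subsumed by the inequality above since $L^{(\alpha,\beta)}(f,X)\, \|v\|_\alpha = 0$ in that case. No compactness, integrability, or definability assumptions are needed, and we do not even require $X$ to be convex at this stage.
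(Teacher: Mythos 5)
Your proof is correct and is essentially identical to the paper's: both unfold $\|J_f(z)\|_{\alpha,\beta}$ as a supremum over $\|v\|_\alpha\leq 1$, use Lemma~\ref{Jacobian_gives_directional_derivative} to replace $J_f(z)v$ with $f'(z;v)$, and then apply Lemma~\ref{lipschitz_constant_bounds_dir_der}. No meaningful difference in route or reliance on additional hypotheses.
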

\begin{proof}
By Lemmas~\ref{Jacobian_gives_directional_derivative} and~\ref{lipschitz_constant_bounds_dir_der} we have:
\[
\|J_f(z)\|_{\alpha,\beta} \ = \ \sup_{\|v\|_{\alpha}\leq 1}\|J_f(z)v\|_{\beta} \ = \ \sup_{\|v\|_{\alpha}\leq 1}\|f'(z;v)\|_{\beta} \ \leq \ \sup_{\|v\|_{\alpha}\leq1}L^{(\alpha,\beta)}(f,X)\|v\|_{\alpha} \ = \ L^{(\alpha,\beta)}(f,X) \qedhere
\]
\end{proof}

\begin{definition}
Given $x\in X$, we say that $f$ is $C^1$ at $x$ if there exists an open neighbourhood $U$ of $x$ in $X$ such that $f$ is Fr\'{e}chet-differentiable at each $y\in U$ and moreover, the function $y\mapsto J_f(y):U\to\R^{m\times n}$ is continuous.
\end{definition}

We let $\operatorname{Diff}(f)\subseteq X$ denote the set of points $x\in X$ such that $f$ is Fr\'{e}chet-differentiable at $x$ and we let $\operatorname{Diff}^1(f)$ denote the set of points $x\in X$ such that $f$ is $C^1$ at $x$. Clearly $\operatorname{Diff}^1(f)\subseteq\operatorname{Diff}(f)$.

\medskip\noindent
Here is the main fact we will use about definable functions:

\begin{lemma}\label{diff_facts_for_definable_functions}
If $f:X\to\R^m$ is definable, then:
\begin{enumerate}
\item $\operatorname{Diff}(f)$ and $\operatorname{Diff}^1(f)$ are definable
\item $\dim(X\setminus\operatorname{Diff}^1(f))<n$, and thus $X\setminus\operatorname{Diff}^1(f)$ is nowhere dense and has Lebesgue measure zero
\end{enumerate}
\end{lemma}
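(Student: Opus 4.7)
The plan is to split into the two parts of the lemma, handling definability first and then the dimension bound.

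For (1), the crucial observation is that Fr\'{e}chet-differentiability and the resulting Jacobian are \emph{independent} of the choice of norms on $\R^n$ and $\R^m$: any two norms on a finite-dimensional space are equivalent, so the $\varepsilon$-$\delta$ limit defining Fr\'{e}chet-differentiability is equivalent under rescaling. Consequently, in the definitions of $\operatorname{Diff}(f)$ and $J_f$ we may replace the possibly non-definable norms $\|\cdot\|_{\alpha}$, $\|\cdot\|_{\beta}$ by the Euclidean norms $\|\cdot\|_2$, which are semialgebraic and hence definable in $\frak{R}$. Thus $x\in\operatorname{Diff}(f)$ iff
\[
\exists L\in\R^{m\times n}\ \forall\varepsilon>0\ \exists\delta>0\ \forall y\in X:\ (0<\|y-x\|_2<\delta)\Rightarrow \|f(y)-f(x)-L(y-x)\|_2<\varepsilon\|y-x\|_2.
\]
All quantifiers range over (tuples of) reals, so this defines $\operatorname{Diff}(f)$ as a definable subset of $X$; the graph of $J_f$ is cut out by the same formula with $L$ promoted to a free variable (uniqueness of the limit making $J_f$ a well-defined partial map). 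Finally, $\operatorname{Diff}^1(f)$ is cut out by
\[
\exists\delta>0:\ B(x,\delta)\subseteq\operatorname{Diff}(f)\ \wedge\ J_f\text{ is continuous on }B(x,\delta),
\]
where continuity of the definable map $J_f$ with respect to any definable norm on $\R^{m\times n}$ (for concreteness, the Frobenius norm) is again a first-order condition over $\frak{R}$.

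For (2), I would invoke the $C^p$-cell decomposition theorem from o-minimal topology (see~\cite[Ch.~7]{van1998tame}) with $p=1$: since $f:X\to\R^m$ is definable, there is a finite decomposition of $X$ into definable $C^1$-cells such that the restriction of $f$ to each cell is $C^1$. Let $U\subseteq X$ be the union of the $n$-dimensional (hence open) cells. For every $x\in U$, the cell containing $x$ is an open neighbourhood on which $f$ is $C^1$, so $U\subseteq\operatorname{Diff}^1(f)$. The remaining cells partition $X\setminus U$ and all have dimension strictly less than $n$, whence $\dim(X\setminus\operatorname{Diff}^1(f))\leq\dim(X\setminus U)<n$. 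The topological and measure-theoretic consequences now follow from the standard o-minimal fact that every definable subset of $\R^n$ of dimension $<n$ has empty interior and Lebesgue measure zero (cf.~\cite[Chs.~4,~7]{van1998tame}).

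The main obstacle is the dimension bound in (2), which ultimately rests on the nontrivial $C^1$-cell decomposition theorem; this is however a well-established piece of the o-minimal toolkit and is taken as a black box. The other potentially subtle point, namely that the ambient norms $\|\cdot\|_{\alpha}$, $\|\cdot\|_{\beta}$ are not assumed definable, is cleanly resolved by passing to the Euclidean norms via norm equivalence, which preserves both the differentiability notion and its Jacobian exactly.
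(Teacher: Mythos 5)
Your proof is correct and follows essentially the same route as the paper: part (2) is handled via the $C^1$-cell decomposition theorem (the paper cites it as ``Smooth Cell Decomposition,'' \cite[7.3.2]{van1998tame}), together with the standard o-minimal facts that definable sets of dimension $<n$ are nowhere dense and Lebesgue-null. Your treatment of part (1) is more explicit than the paper's ``easy exercise'' remark, and your observation that one must pass from the possibly non-definable norms $\|\cdot\|_{\alpha},\|\cdot\|_{\beta}$ to the Euclidean norm via norm equivalence is exactly the right way to see why first-order definability goes through.
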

\begin{proof}[Remark about proof] (1) is an easy exercise in definability. (2) is nontrivial and follows from \emph{Smooth Cell Decomposition}~\cite[7.3.2]{van1998tame}. Here the dimension $\dim$ is taken in the sense of definable sets in an o-minimal structure~\cite[4.1]{van1998tame} which agrees with and generalizes the usual notion of \emph{dimension} for $C^1$ manifolds, at least in the case that the manifold is presented as an embedded submanifold of $\R^n$ and the underlying set of the manifold is a definable subset of $\R^n$.
The claim about Lebesgue measure follows from the fact that for any connected embedded $C^1$ submanifold $M\subseteq\R^n$ (definable or not), if $\dim M<n$, then $M$ has Lebesgue measure zero.
\end{proof}

\subsection*{A supremum of Jacobians bound a difference quotient}

\noindent
The lemma here is routine although we show how to use the Riemann integral instead of the Lebesgue integral when we are in the definable setting. Here is the setup:
\begin{itemize}
\item Fix distinct points $x,y\in\R^n$ such that $[x,y]\subseteq X$
\item let $L:=[x,y]\setminus\{x,y\}$ be the ``open'' line segment from $x$ to $y$
\item $f:X\to\R^m$ is  $(\alpha,\beta)$-Lipschitz
\item $f$ is either $C^1$ on $L$, or definable and Fr\'{e}chet-differentiable on $L$
\end{itemize}

\begin{lemma}\label{basic_lemma_Jacobian_bounds_diff_quot}
In the above setup we have:
\[
\|f(y)-f(x)\|_{\beta} \ \leq \ \sup_{z\in L}\|J_f(z)\|_{\alpha,\beta}\cdot\|y-x\|_{\alpha}
\]
\end{lemma}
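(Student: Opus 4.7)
The plan is to pull $f$ back along the line segment $[x,y]$ to reduce to a one-dimensional FTC computation, and then bound the resulting integrand using the continuous triangle inequality (Lemma~\ref{norm_integral_triangle_inequality}) together with the matrix-norm inequality (Lemma~\ref{Jordan_Prop_A_2}).

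Concretely, I would set $\gamma(t):=x+t(y-x)$ and $g:=f\circ\gamma:[0,1]\to\R^m$. Because $f$ is $(\alpha,\beta)$-Lipschitz on $X$, the composition $g$ is continuous on $[0,1]$. For any $t\in(0,1)$ at which $f$ is Fr\'{e}chet-differentiable at $\gamma(t)$, the chain rule for Fr\'{e}chet derivatives (applied to the smooth path $\gamma$) gives
\[
g'(t) \ = \ J_f(\gamma(t))(y-x),
\]
so under either hypothesis $g$ is differentiable on all of $(0,1)$ with this derivative, and the combination of Lemma~\ref{Easy_Lemma_Lipschitz_constant_bounds_Jacobian} and Lemma~\ref{Jordan_Prop_A_2} bounds $\|g'(t)\|_\beta$ uniformly on $(0,1)$ by the finite quantity $L^{(\alpha,\beta)}(f,X)\cdot\|y-x\|_\alpha$. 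The next step is to establish the identity $g(1)-g(0)=\int_0^1 g'(t)\,dt$ as a Riemann integral: in the $C^1$ case this is the classical FTC on any closed subinterval of $(0,1)$, extended to $[0,1]$ using continuity of $g$ at the endpoints together with the uniform bound on $g'$; in the definable case $g$ is itself definable, so by Smooth Cell Decomposition (cf. Lemma~\ref{diff_facts_for_definable_functions}) there exist finitely many $0=t_0<t_1<\cdots<t_k=1$ such that $g$ is $C^1$ on each $(t_{i-1},t_i)$, and the previous $C^1$ argument applies piecewise and telescopes. Applying Lemma~\ref{norm_integral_triangle_inequality} to the resulting identity and then Lemma~\ref{Jordan_Prop_A_2} pointwise gives
\[
\|f(y)-f(x)\|_\beta \ \leq \ \int_0^1\|J_f(\gamma(t))(y-x)\|_\beta\,dt \ \leq \ \sup_{z\in L}\|J_f(z)\|_{\alpha,\beta}\cdot\|y-x\|_\alpha,
\]
which is the required bound.

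The main technical obstacle is the second case, where $f$ is assumed only Fr\'{e}chet-differentiable (rather than $C^1$) on $L$: a general Fr\'{e}chet-differentiable function of one variable need not be absolutely continuous, so the FTC for the Riemann integral can fail without further hypotheses. What rescues us here is definability, which via the o-minimal cell decomposition promotes pointwise Fr\'{e}chet-differentiability of $g$ on $(0,1)$ to $C^1$-smoothness outside a finite exceptional set, and so keeps the argument entirely within the Riemann-integral framework advertised in the appendix's preamble.
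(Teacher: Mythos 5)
Your proposal is correct and follows essentially the same route as the paper's proof: parametrize the segment, compute the one-variable derivative via the chain rule, argue it is bounded and Riemann integrable (via the $C^1$ hypothesis or o-minimality), apply the Fundamental Theorem of Calculus, and then bound using the continuous triangle inequality (Lemma~\ref{norm_integral_triangle_inequality}) and the matrix-norm inequality (Lemma~\ref{Jordan_Prop_A_2}). The only cosmetic difference is that you cite Smooth Cell Decomposition (via Lemma~\ref{diff_facts_for_definable_functions}) to obtain finitely many exceptional points of the one-variable function, whereas the paper invokes the Monotonicity Theorem directly; for definable functions of one variable these give the same conclusion.
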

\begin{proof}
First define the function:
\[
h \ : \ [0,1]\to\R^m,\quad t \ \mapsto \ f((1-t)x+ty)
\]
We know that:
\begin{itemize}
\item $h$ is continuous
\item $h$ is differentiable on $(0,1)$ with derivative $h'(t)=J_f((1-t)x+ty)(y-x)$, computed via the chain-rule for Fr\'{e}chet-differentiable functions 
\item in particular, by Lemma~\ref{Easy_Lemma_Lipschitz_constant_bounds_Jacobian}, $h'$ is bounded because $f$ is Lipschitz over the compact set $[x,y]$; c.f. Lemma~\ref{locally_lipschitz_compact}
\item on $(0,1)$, $h'$ is continuous at all but finitely many points: by assumption either $f$ is $C^1$, or if $f$ is definable then this follows by the \emph{Monotonicity Theorem}~\cite[3.1.2]{van1998tame} (in fact, $h'$ is actually continuous on $(0,1)$ by~\cite[5.7]{fischer2005peano})
\end{itemize}
Next, define the function:
\[
g \ : \ [0,1]\to\R^m,\quad t \ \mapsto \ g(t) \ := \ \begin{cases}
J_f((1-t)x+ty)(y-x) & \text{if $t\in(0,1)$} \\
0 & \text{if $t=0,1$}
\end{cases}
\]
Then $g$ is bounded and continuous at all but finitely many points, hence $g$ is Riemann integrable.
Moreover, $h'(t)=g(t)$ on $(0,1)$ and so by the \emph{Fundamental Theorem of Calculus}~\cite[34.1]{Ross2013} we have:
\[
\textstyle f(y)-f(x) \ = \ h(1)-h(0) \ = \ \int_0^1g(t)dt
\]
Next, note that we have the following bound on $g(t)$, for $t\in (0,1)$, by Lemma~\ref{Jordan_Prop_A_2}:
\[
\|g(t)\|_{\beta} \ = \ \|J_f((1-t)x+ty)(y-x)\|_{\beta} \ \leq \ \|J_f((1-t)x+ty)\|_{\alpha,\beta}\|y-x\|_{\alpha} \ \leq \ \sup_{z\in L}\|J_f(z)\|_{\alpha,\beta}\cdot\|y-x\|_{\alpha}
\]
Moreover, the overall bound holds for all $t\in[0,1]$.
The main inequality now proceeds as follows:
\begin{align*}
\|f(y)-f(x)\|_{\beta} \ &= \ \textstyle \|\int_0^1g(t)dt\|_{\beta} \\
&\leq \ \textstyle \int_0^1\|g(t)\|_{\beta}dt \quad\text{by Lemma~\ref{norm_integral_triangle_inequality}}\\
&\leq \ \textstyle \int_0^1\sup_{z\in L}\|J_f(z)\|_{\alpha,\beta}\cdot\|y-x\|_{\alpha}dt \\
&= \ \sup_{z\in L}\|J_f(z)\|_{\alpha,\beta}\cdot\|y-x\|_{\alpha} \qedhere
\end{align*}
\end{proof}

\noindent
The following is now immediate from Lemmas~\ref{Easy_Lemma_Lipschitz_constant_bounds_Jacobian} and~\ref{basic_lemma_Jacobian_bounds_diff_quot}:

\begin{cor}[Lemma~\ref{lipschitz_constant_formula_smooth}]\label{cor_lipschitz_constant_formula_smooth}
If $f:\R^n\to\R^m$ is $C^1$ and $(\alpha,\beta)$-Lipschitz over an open convex set $X$, then:
\[
L^{(\alpha,\beta)}(f,X) \ = \ \sup_{x\in X}\| J_f(x)\|_{\alpha,\beta}
\]
\end{cor}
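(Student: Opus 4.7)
The plan is to prove the two inequalities $L^{(\alpha,\beta)}(f,X)\leq \sup_{x\in X}\|J_f(x)\|_{\alpha,\beta}$ and $L^{(\alpha,\beta)}(f,X)\geq \sup_{x\in X}\|J_f(x)\|_{\alpha,\beta}$ separately, invoking Lemmas~\ref{Easy_Lemma_Lipschitz_constant_bounds_Jacobian} and~\ref{basic_lemma_Jacobian_bounds_diff_quot} respectively. Since the corollary is advertised as ``immediate'' from these two lemmas, the proof is essentially bookkeeping; there is no genuine obstacle, only the need to verify hypotheses.

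For the ``$\geq$'' direction, I would fix an arbitrary $z\in X$. Since $f$ is $C^1$ on $X$, it is in particular Fr\'{e}chet-differentiable at $z$, so Lemma~\ref{Easy_Lemma_Lipschitz_constant_bounds_Jacobian} applies and yields $\|J_f(z)\|_{\alpha,\beta}\leq L^{(\alpha,\beta)}(f,X)$. Taking the supremum over $z\in X$ gives $\sup_{x\in X}\|J_f(x)\|_{\alpha,\beta}\leq L^{(\alpha,\beta)}(f,X)$.

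For the ``$\leq$'' direction, I would fix distinct $x,y\in X$. Convexity of $X$ ensures $[x,y]\subseteq X$, and $C^1$-ness on $X$ certainly implies $C^1$-ness on the open segment $L:=[x,y]\setminus\{x,y\}$, so the hypotheses of Lemma~\ref{basic_lemma_Jacobian_bounds_diff_quot} are satisfied (using the first of its two alternative smoothness options). That lemma therefore gives
\[
\|f(y)-f(x)\|_{\beta} \ \leq \ \sup_{z\in L}\|J_f(z)\|_{\alpha,\beta}\cdot\|y-x\|_{\alpha} \ \leq \ \sup_{z\in X}\|J_f(z)\|_{\alpha,\beta}\cdot\|y-x\|_{\alpha},
\]
and dividing by $\|y-x\|_{\alpha}>0$ yields $\|f(y)-f(x)\|_{\beta}/\|y-x\|_{\alpha}\leq\sup_{z\in X}\|J_f(z)\|_{\alpha,\beta}$. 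Taking the supremum over distinct $x,y\in X$ delivers $L^{(\alpha,\beta)}(f,X)\leq\sup_{x\in X}\|J_f(x)\|_{\alpha,\beta}$, and combining the two inequalities proves the corollary. The only subtlety worth flagging is that the ``$\leq$'' bound needs convexity of $X$ to guarantee $[x,y]\subseteq X$ (the very same assumption which Footnote 1 of the excerpt notes was implicitly used in~\cite{jordan2020exactly}); otherwise the proof is a mechanical assembly of the two cited lemmas.
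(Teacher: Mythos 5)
Your proof is correct and follows exactly the route the paper indicates: the paper simply remarks that the corollary is ``immediate from Lemmas~\ref{Easy_Lemma_Lipschitz_constant_bounds_Jacobian} and~\ref{basic_lemma_Jacobian_bounds_diff_quot},'' and you have filled in that bookkeeping precisely, using the former for the ``$\geq$'' inequality and the latter (together with convexity of $X$ to guarantee $[x,y]\subseteq X$) for the ``$\leq$'' inequality.
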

In the case that $f$ is definable, then we may weaken the $C^1$ assumption in Corollary~\ref{cor_lipschitz_constant_formula_smooth} to Fr\'{e}chet-differentiable and the same argument works; although in this case, this statement will be superseded by Lemma~\ref{main_theorem_special_case} below.

\subsection*{The Clarke Jacobian} In this subsection we assume:
\begin{itemize}
\item $f:X\to\R^m$ is locally $(\alpha,\beta)$-Lipschitz
\end{itemize}

\begin{definition}\label{def_Clarke_Jacobian}\cite[\S2.6]{Clarke1990}
Define the \textbf{Clarke Jacobian} $J_f^c$ of $f$ over $X$ to be the set-valued map:
\[
J_f^c \ : \ X\rightrightarrows\R^{m\times n},\quad z \ \mapsto \ J_f^c(z) \ := \ \operatorname{conv}\{\lim_{j\to\infty}J_f(z_j):z_j\to z, z_j\in \operatorname{Diff}(f)\}
\]
\end{definition}

\noindent
The following is immediate from the definition:

\begin{lemma}\label{Clarke_Jacobian_special_case}
If $x\in\operatorname{Diff}(f)$, then $\{J_f(x)\}\subseteq J_f^c(x)$; moreover, if $f$ is $C^1$ at $x$ then equality holds.
\end{lemma}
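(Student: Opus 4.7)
The plan is to handle the two assertions separately by directly unpacking the definition of $J_f^c$.

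For the first assertion, I would show that $J_f(x)$ already appears as one of the limits in the set whose convex hull defines $J_f^c(x)$. The natural witness is the constant sequence $z_j := x$ for every $j$. Since $x \in \operatorname{Diff}(f)$ by hypothesis, this sequence lies in $\operatorname{Diff}(f)$, converges to $x$, and has $J_f(z_j) = J_f(x)$ for every $j$, hence trivially converges to $J_f(x)$. Thus $J_f(x)$ lies in the set of accumulation matrices, and so in its convex hull $J_f^c(x)$.

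For the second assertion, under the additional hypothesis that $f$ is $C^1$ at $x$, I would upgrade the above inclusion to equality by exploiting continuity of $y \mapsto J_f(y)$ on a neighbourhood. By definition of $C^1$-ness at $x$, there is an open neighbourhood $U \subseteq X$ of $x$ on which $f$ is Fr\'{e}chet-differentiable and $J_f : U \to \R^{m \times n}$ is continuous. For any sequence $z_j \to x$ with $z_j \in \operatorname{Diff}(f)$, openness of $U$ guarantees that $z_j \in U$ for all sufficiently large $j$, and continuity of $J_f$ at $x$ then forces $J_f(z_j) \to J_f(x)$. Therefore the set
\[
\{\lim_{j\to\infty} J_f(z_j) : z_j \to x,\ z_j \in \operatorname{Diff}(f)\}
\]
equals $\{J_f(x)\}$, whose convex hull is itself. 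Combined with the first part, this yields $J_f^c(x) = \{J_f(x)\}$.

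The argument is essentially a definition chase, so I do not anticipate any substantive obstacle; the only subtlety is noting that the constant sequence is admissible in the definition of $J_f^c$ (which it is, as nothing in Definition~\ref{def_Clarke_Jacobian} forbids it) and that $C^1$-ness at a point is defined on a whole neighbourhood, so that every tail of an arbitrary approaching sequence eventually enters that neighbourhood.
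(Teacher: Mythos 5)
Your proof is correct and is precisely the definition-chase the paper has in mind when it states the lemma is ``immediate from the definition'': the constant sequence witnesses $\{J_f(x)\}\subseteq J_f^c(x)$, and $C^1$-continuity of $J_f$ on a neighbourhood collapses the accumulation set to the singleton, giving equality.
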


\noindent
Note that if $f$ is definable, then Lemma~\ref{diff_facts_for_definable_functions} guarantees that $\dim(X\setminus\operatorname{Diff}(X))<n$ and thus $X\setminus\operatorname{Diff}(f)$ has Lebesgue measure zero. The general case is handled by:

\begin{lemma}[Rademacher]\cite{rademacher1919partielle}\label{Rademachers_theorem}
The set $X\setminus\operatorname{Diff}(f)\subseteq\R^n$ has Lebesgue measure zero.
\end{lemma}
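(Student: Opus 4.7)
The plan is to prove Rademacher's theorem by the classical four-step route. First I would note that since all norms on $\R^n$ (resp.\ $\R^m$) are equivalent, the set $\operatorname{Diff}(f)$ is independent of the particular choice of $\|\cdot\|_\alpha, \|\cdot\|_\beta$, so I may work throughout with Euclidean norms. I would then reduce to the case $m=1$: writing $f=(f_1,\ldots,f_m)$, the function $f$ is Fr\'{e}chet-differentiable at $x$ iff each coordinate $f_i$ is, and a finite union of null sets is null. So henceforth assume $f\colon X\to\R$ is locally Lipschitz.

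Second, for each fixed $v\in S^{n-1}$ I would show that $f'(x;v)$ exists for a.e.\ $x\in X$. Foliate $\R^n$ by lines parallel to $v$; on each such line the restriction of $f$ is locally Lipschitz, hence absolutely continuous on compact subintervals, hence differentiable a.e.\ by Lebesgue's theorem for monotone functions. Fubini then lifts this one-dimensional a.e.-statement to an a.e.-statement on $X$. In particular, taking $v=e_1,\ldots,e_n$, the partial derivatives $\partial_i f$ exist a.e., so I may set $\nabla f(x) := (\partial_1 f(x),\ldots,\partial_n f(x))$ on the full-measure set where they are defined.

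Third, for each fixed $v$ I would show $f'(x;v) = \langle \nabla f(x),v\rangle$ a.e. The line-by-line absolute continuity together with Fubini yield, for every test $\varphi\in C_c^\infty(X)$, the integration-by-parts identity
\[
\int_X f'(x;v)\,\varphi(x)\,dx \;=\; -\int_X f(x)\,\langle v,\nabla \varphi(x)\rangle\,dx,
\]
and specializing this identity to $v=e_i$ shows the right-hand side equals $\int_X \langle \nabla f(x),v\rangle\,\varphi(x)\,dx$ after expanding $\langle v,\nabla\varphi\rangle = \sum v_i\partial_i\varphi$. Since $\varphi$ was arbitrary and both integrands are locally bounded (hence locally integrable), $f'(x;v) = \langle \nabla f(x),v\rangle$ off a null set $N_v\subseteq X$.

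The main obstacle is the fourth step, where one upgrades directional a.e.-differentiability (one direction at a time) to joint Fr\'{e}chet a.e.-differentiability. I would pick a countable dense set $\{v_k\}\subseteq S^{n-1}$, let $N := \bigcup_k N_{v_k}$ (null by countable subadditivity), and fix $x_0\in X\setminus N$ with $L := \nabla f(x_0)$ and local Lipschitz constant $K$ near $x_0$. Given $\varepsilon>0$, pick a finite $\eta$-net $v_{k_1},\ldots,v_{k_M}$ of $S^{n-1}$ with $\eta := \varepsilon/(K+\|L\|)$, and choose a common $\delta>0$ such that $|f(x_0+tv_{k_j})-f(x_0)-t\langle L,v_{k_j}\rangle| \leq \varepsilon t$ for all $j$ and $|t|\leq\delta$. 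For an arbitrary small nonzero $h$, write $t := \|h\|$, $v := h/t$, and choose $v_{k_j}$ with $\|v-v_{k_j}\|\leq\eta$: the Lipschitz bound gives $|f(x_0+tv)-f(x_0+tv_{k_j})|\leq Kt\eta$, and directly $|t\langle L,v-v_{k_j}\rangle|\leq\|L\|t\eta$, so combining the three error terms yields the uniform estimate $|f(x_0+h)-f(x_0)-\langle L,h\rangle| \leq 3\varepsilon\|h\|$ for $\|h\|\leq\delta$. What makes step four the principal obstacle is that pointwise-in-$v$ directional differentiability a.e.\ is strictly weaker than Fr\'{e}chet differentiability a.e.; what rescues us is that the local Lipschitz constant converts the uncountable quantifier ``for all $v\in S^{n-1}$'' into a finite-$\eta$-net quantifier, at the cost of a controllable error.
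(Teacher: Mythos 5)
The paper does not prove this lemma: it is stated as a classical black box with a citation to Rademacher's 1919 paper, and the surrounding text explicitly says the appendix is ``taking these facts for granted.'' There is therefore no in-paper argument to compare against. What you have written out is the standard modern proof of Rademacher's theorem, as found for instance in Evans and Gariepy's \emph{Measure Theory and Fine Properties of Functions}: reduce to scalar-valued, Euclidean-norm, locally Lipschitz $f$; use Fubini together with one-dimensional absolute continuity to get a.e.\ existence of each directional derivative; identify $f'(\cdot\,;v)$ with $\langle\nabla f,\,v\rangle$ a.e.\ via the distributional identity; and then bootstrap from a countable dense set of directions to full Fr\'{e}chet differentiability by exploiting the local Lipschitz constant to control the error on a finite $\eta$-net. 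The argument is correct.

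Two small remarks. First, the Fubini step tacitly requires that the set of points at which the directional derivative in a fixed direction $v$ fails to exist is Lebesgue measurable; this is true because $f$ is continuous and the set can be written in terms of $\limsup$ and $\liminf$ of countably many continuous difference quotients, hence is Borel, but this deserves a word since Fubini's theorem for null sets needs a measurable set to act on. Second, your net estimate actually gives the sharper bound $2\varepsilon\|h\|$ rather than $3\varepsilon\|h\|$ (the two terms controlled by $\eta$ sum to $(K+\|L\|)\eta\,t=\varepsilon t$), though of course either constant suffices for the conclusion.
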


\begin{lemma}
For every $x\in X$, $J_f^c(x)$ is nonempty, compact, and convex.
\end{lemma}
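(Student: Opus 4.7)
The plan is to verify the three properties in turn, with convexity being automatic, nonemptiness requiring Rademacher plus a compactness argument, and compactness of $J_f^c(x)$ reducing to showing that the set of limit-Jacobians $L(x) := \{\lim_{j\to\infty}J_f(z_j) : z_j\to x,\ z_j\in\operatorname{Diff}(f)\}$ is itself a compact subset of $\R^{m\times n}$. Convexity is immediate because $J_f^c(x)=\operatorname{conv}L(x)$ by definition.

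For nonemptiness and the uniform boundedness needed later, I would first fix, by local Lipschitzness, a bounded open neighborhood $U\subseteq X$ of $x$ on which $f$ is $(\alpha,\beta)$-Lipschitz with some constant $L<\infty$. By Rademacher's Lemma~\ref{Rademachers_theorem}, $U\setminus\operatorname{Diff}(f)$ has Lebesgue measure zero, so $U\cap\operatorname{Diff}(f)$ is dense in $U$; pick any sequence $z_j\in U\cap\operatorname{Diff}(f)$ with $z_j\to x$. Applying Lemma~\ref{Easy_Lemma_Lipschitz_constant_bounds_Jacobian} (with $U$ in place of $X$) gives $\|J_f(z_j)\|_{\alpha,\beta}\leq L$ for every $j$. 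Since the closed $L$-ball in $\R^{m\times n}$ is compact, a subsequence $J_f(z_{j_k})$ converges to some $G$ with $\|G\|_{\alpha,\beta}\leq L$, witnessing $G\in L(x)\subseteq J_f^c(x)$; the same computation shows every element of $L(x)$, and hence (by convex combination) every element of $J_f^c(x)$, lies in this $L$-ball, so $J_f^c(x)$ is bounded.

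The main step is closedness of $L(x)$, which I would prove by a standard diagonal extraction. Let $G_k\in L(x)$ with $G_k\to G$; by definition, for each $k$ there is a sequence $z^{(k)}_j\to x$ in $\operatorname{Diff}(f)$ with $J_f(z^{(k)}_j)\to G_k$. Choose $j(k)$ so large that both $\|z^{(k)}_{j(k)}-x\|_{\alpha}<1/k$ and $\|J_f(z^{(k)}_{j(k)})-G_k\|_{\alpha,\beta}<1/k$; setting $w_k:=z^{(k)}_{j(k)}$ we get $w_k\in\operatorname{Diff}(f)$, $w_k\to x$, and $J_f(w_k)\to G$, so $G\in L(x)$. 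Combined with boundedness from the previous paragraph, $L(x)$ is compact in the finite-dimensional space $\R^{m\times n}$. Finally, since the convex hull of a compact subset of a finite-dimensional Euclidean space is compact (by Carathéodory's theorem: $\operatorname{conv}L(x)$ is the continuous image of the compact set $\Delta_{mn}\times L(x)^{mn+1}$, where $\Delta_{mn}$ is the standard simplex), $J_f^c(x)=\operatorname{conv}L(x)$ is compact, completing the proof.

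The only subtlety I anticipate is the diagonal extraction for closedness of $L(x)$—one has to be mindful that the $G_k$'s are merely limits, not values, so the witnesses $z^{(k)}_j$ must be chosen carefully; but this is entirely routine. Every other ingredient (local Lipschitz giving a uniform bound on the Jacobians that do exist, Rademacher giving density of $\operatorname{Diff}(f)$, and finite-dimensionality making closed-and-bounded equivalent to compact and making convex hulls of compact sets compact) has already been set up in the appendix.
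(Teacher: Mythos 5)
Your proof is correct and takes essentially the same approach as the paper's one-line argument (Rademacher's Theorem~\ref{Rademachers_theorem} for nonemptiness, the Lipschitz bound of Lemma~\ref{Easy_Lemma_Lipschitz_constant_bounds_Jacobian} for boundedness, convex hull by definition for convexity). You additionally spell out the closedness of the set of limit Jacobians via diagonal extraction and invoke Carath\'{e}odory for compactness of the convex hull---routine details that the paper's terse proof leaves implicit, since boundedness alone does not give compactness.
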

\begin{proof}
Convexity is clear and compactness follows from the fact that lipschitz functions have bounded Jacobian (Lemma~\ref{Easy_Lemma_Lipschitz_constant_bounds_Jacobian}).  Nonemptiness follows from Rademacher's Theorem.
\end{proof}

It is also convenient to consider the following ``variant'' of $J_f^c$; suppose $N\subseteq X$ has Lebesgue measure zero and define:
\[
J_f^{c,N} \ : \ X\rightrightarrows\R^{m\times n},\quad z \ \mapsto \ J_f^{c,N}(z) \ := \ \operatorname{conv}\{\lim_{j\to\infty}J_f^c(z_j):z_j\to z, z_j\in \operatorname{Diff}(f)\setminus N\}
\]

\begin{lemma}[Warga]\cite[Theorem 4]{Warga81}\label{Clarke_Jacobian_blind_null}
For any set $N\subseteq X$ of Lebesgue measure zero, we have $J_f^{c,N}=J_f^c$.
\end{lemma}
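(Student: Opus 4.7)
The plan is to prove $J_f^{c,N}=J_f^c$ by the two inclusions. The inclusion $J_f^{c,N}(z)\subseteq J_f^c(z)$ is immediate from the definitions, since $\operatorname{Diff}(f)\setminus N\subseteq\operatorname{Diff}(f)$, so any sequence witnessing membership in $J_f^{c,N}(z)$ automatically witnesses membership in $J_f^c(z)$. For the reverse inclusion $J_f^c(z)\subseteq J_f^{c,N}(z)$, Carath\'{e}odory's theorem applied inside the convex set $J_f^{c,N}(z)\subseteq\R^{m\times n}$ reduces the task to showing that every individual sequential limit $G=\lim_{j\to\infty}J_f(z_j)$ (with $z_j\to z$ and $z_j\in\operatorname{Diff}(f)$) already lies in $J_f^{c,N}(z)$.

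The key analytic ingredient I would establish is an ``automatic'' Lebesgue-point property at Fr\'{e}chet-differentiable points: for \emph{every} $w\in\operatorname{Diff}(f)$ (not merely almost every),
\[
J_f(w) \ = \ \lim_{r\to 0}\frac{1}{|B(w,r)|}\int_{B(w,r)}J_f(y)\, dy.
\]
This follows from applying the divergence theorem componentwise (on each scalar component $f_i$) on $B(w,r)$: the Fr\'{e}chet expansion $f(y)=f(w)+J_f(w)(y-w)+o(\|y-w\|_{\alpha})$ evaluated on $\partial B(w,r)$, combined with the identity $\int_{\partial B(w,r)}(y-w)\otimes\nu\, dS=|B(w,r)|\cdot I$, gives $\tfrac{1}{|B(w,r)|}\int_{B(w,r)}J_f\, dy=J_f(w)+o(1)$ as $r\to 0$. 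Given this lemma, for each $j$ I choose $r_j>0$ small enough that $B(z_j,r_j)\subseteq B(z,1/j)$ and the average $M_j:=\tfrac{1}{|B(z_j,r_j)|}\int_{B(z_j,r_j)}J_f(y)\, dy$ satisfies $\|M_j-J_f(z_j)\|_{\alpha,\beta}<1/j$, so that $M_j\to G$. Since $N$ is a null set, the integral defining $M_j$ is unaffected by excising $N$, and Jensen's inequality (with the normalized Lebesgue density on $B(z_j,r_j)$ as weights) places $M_j$ inside $\overline{\conv}\{J_f(y):y\in B(z,1/j)\cap(\operatorname{Diff}(f)\setminus N)\}$. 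Fixing $k$: for $j\geq k$ we have $B(z,1/j)\subseteq B(z,1/k)$, so $M_j\in\overline{\conv}\{J_f(y):y\in B(z,1/k)\cap(\operatorname{Diff}(f)\setminus N)\}$; this set is closed, so the limit $G=\lim_j M_j$ lies in it. Intersecting over $k$ and invoking the equivalent shrinking closed-convex-hulls characterization of $J_f^{c,N}(z)$ yields $G\in J_f^{c,N}(z)$.

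The main obstacle I anticipate is establishing the Lebesgue-point lemma at \emph{every} $w\in\operatorname{Diff}(f)$ rather than at almost every such $w$ --- the a.e.\ version follows trivially from Lebesgue's differentiation theorem (since $J_f\in L^{\infty}_{\mathrm{loc}}$), but it is insufficient here because the given sequence $z_j$ could in principle consist entirely of non-Lebesgue points of $J_f$. The divergence-theorem route bypasses this gap by leveraging the full strength of Fr\'{e}chet (as opposed to merely directional or Gateaux) differentiability, which gives uniform-in-direction control of the $o(\|y-w\|_{\alpha})$ error on the boundary sphere and converts it into an $o(1)$ error in the bulk average; all remaining steps are then routine packaging of Carath\'{e}odory, Jensen, and the shrinking-balls characterization.
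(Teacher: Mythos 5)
Your proposal is correct, but there is no proof in the paper to compare it against: the paper attributes this lemma to Warga and cites \cite[Theorem 4]{Warga81} as a black box, then poses (as an \emph{Exercise}) a dimension-theoretic, measure-free argument for the special case where $f$ and $N$ are definable. What you supply is a genuine, self-contained proof of the general measure-theoretic statement. The easy inclusion is fine, and your reduction of the hard inclusion to showing that each sequential limit lies in $J_f^{c,N}(z)$ needs only convexity of $J_f^{c,N}(z)$ (Carath\'{e}odory is a bit more than is required at that step). The crux of your argument is the Lebesgue-point identity at \emph{every} Fr\'{e}chet-differentiability point of a Lipschitz map, and you are right that the a.e.\ version furnished by the Lebesgue differentiation theorem would not close the argument here, since the sequence $z_j$ need not avoid the exceptional set. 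The Gauss--Green identity is valid for $W^{1,\infty}$ functions on balls, and your computation $\int_{\partial B(w,r)}(y-w)\otimes\nu\,dS = |B(w,r)|\,I$ correctly converts the uniform-in-direction $o(r)$ boundary error coming from Fr\'{e}chet differentiability into an $o(1)$ error in the volume average. The remaining steps --- averages lie in closed convex hulls by Hahn--Banach, excising a null set does not change a Lebesgue integral (after Rademacher guarantees the domain of $J_f$ has full measure), and the shrinking-balls characterization of $J_f^{c,N}(z)$ obtained from Carath\'{e}odory together with boundedness of $J_f$ --- are all standard and correct. Relative to the definable exercise the paper suggests, your route is strictly more general (arbitrary null sets, arbitrary locally Lipschitz $f$), at the cost of invoking Rademacher, Gauss--Green, and Lebesgue measure, which are precisely the tools the exercise asks the reader to avoid.
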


\begin{exercise}
Assuming $f$ and $N$ are definable, give a proof of Lemma~\ref{Clarke_Jacobian_blind_null} which does not use measure theory (use the fact that $\dim N<n$).
\end{exercise}

\subsection*{The path lemma}

Here is the setup:
\begin{itemize}
\item $X\subseteq\R^n$ is a definable open convex set 
\end{itemize}

\begin{lemma}[Path Lemma]\label{path_lemma}
Suppose $x,y\in X$ and $B\subseteq\R^n$ is a definable set such that $\dim B<n$.
For every $\varepsilon>0$, there exists $k\geq 0$ and points $x_0=x,x_1,\ldots,x_k,x_{k+1}=y$ in $X$ such that:
\begin{itemize}
\item $\sum_{0\leq i\leq k}\|x_{i+1}-x_{i}\|_{\alpha}-\|y-x\|_{\alpha}\leq\varepsilon$, and
\item $\{(1-t)x_i+tx_{i+1}:t\in(0,1)\}\cap B=\varnothing$ for each $i=0,\ldots,k$
\end{itemize}
\end{lemma}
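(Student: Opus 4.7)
The plan is to find a single via-point $z\in X$, close to the midpoint $m := (x+y)/2$ of $[x,y]$, such that each of the two straight segments $[x,z]$ and $[z,y]$ meets $B$ in only finitely many points. The remaining vertices $x_1,\ldots,x_k$ are then the enumeration of these two finite intersection sets in order along the respective segments. Because the auxiliary vertices lie collinearly on $[x,z]$ or $[z,y]$, the polygonal length telescopes to exactly $\|z-x\|_\alpha + \|z-y\|_\alpha$, which by the triangle inequality is at most $\|y-x\|_\alpha + 2\|z-m\|_\alpha$; choosing $\|z-m\|_\alpha<\varepsilon/2$ delivers the length estimate.

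The main obstacle is the existence of such a $z$ arbitrarily close to $m$, and I would argue it via o-minimal fiber dimension. Consider the definable set
\[
E_x \ := \ \{(z,t)\in X\times(0,1) : (1-t)x+tz\in B\},
\]
the preimage of $B$ under the definable map $(z,t)\mapsto(1-t)x+tz$. For each fixed $t\in(0,1)$, the $z$-slice is the image of $B\cap((1-t)x+tX)$ under an affine bijection, so has dimension at most $\dim B<n$; integrating over $t$ gives $\dim E_x\leq(n-1)+1=n$. Projecting $E_x\to X$ and applying the fiber dimension theorem, the set
\[
Z_x \ := \ \{z\in X : \{t\in(0,1):(1-t)x+tz\in B\}\text{ is infinite}\}
\]
satisfies $\dim Z_x\leq n-1$, where ``infinite'' coincides with ``has dimension $1$'' because definable subsets of $\R$ are finite unions of points and intervals by o-minimality. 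The symmetric argument for segments based at $y$ yields $Z_y$ with $\dim Z_y \leq n-1$.

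Since definable subsets of $\R^n$ of dimension $<n$ are nowhere dense, the bad set $Z_x\cup Z_y$ is nowhere dense in $X$. I pick any $\|\cdot\|_\alpha$-open ball around $m$ of radius $\min(\varepsilon/2,\|y-x\|_\alpha/4)$ contained in $X$ (possible since $X$ is open and $m\in X$ by convexity), and select $z$ in that ball outside $Z_x\cup Z_y$. The second summand in the radius forces $z\neq x,y$, so the finite sets $(x,z)\cap B$ and $(z,y)\cap B$, enumerated in order along the respective segments, supply the interior vertices $x_1,\ldots,x_k$. Disjointness of each open segment from $B$ and the length bound are then immediate from the construction; all vertices lie in $X$ by convexity.

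The degenerate case $x=y$ is handled trivially with $k=0$. I do not anticipate hidden subtleties: the two pillars, namely o-minimal dimension theory for the genericity of $z$ and the triangle inequality for the length control, are both direct, and the only bookkeeping is the explicit radius bound ensuring that $z$ is strictly separated from $x$ and $y$.
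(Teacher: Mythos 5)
Your proof is correct and reaches the same conclusion by a genuinely different route through the o-minimal Dimension Formula. The paper restricts the via-point $z$ to the affine hyperplane $P$ through the midpoint orthogonal to $y-x$ (of dimension $n-1$), considers the union $[x,z]\cup[z,y]$, and builds a single definable surjection $f:B'\to P_1$ with one-dimensional fibers to conclude $\dim P_1 \leq n-2 < \dim P$; the subtle point there is that $z$ is uniquely determined by a point $b\in B'$ precisely because $P$ separates $x$ from $y$. You instead let $z$ range over all of $X$, treat the two segments independently through the incidence sets $E_x,E_y\subseteq X\times(0,1)$, and apply the fiber-dimension argument twice to get $\dim Z_x,\dim Z_y\leq n-1<n=\dim X$. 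Your version avoids the well-definedness check for the paper's map $f$ at the cost of running the dimension count twice; it also makes the final step (pick $z$ near $m$ outside a nowhere dense definable set, then telescope the collinear sub-segments and invoke the triangle inequality) explicit, which the paper leaves implicit. Two small points: the ball radius should also be capped by whatever $\delta>0$ puts the ball inside $X$ (you gesture at this with ``contained in $X$'' but do not name $\delta$), and your remark that $x=y$ is handled by $k=0$ fails if $x\in B$, since the open segment then degenerates to $\{x\}$; this does not matter here because the lemma is only invoked with $x\neq y$, but the degenerate case is not actually covered as stated.
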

\begin{proof}
Let $v=(x+y)/2$ denote the midpoint between $x$ and $y$, and consider the definable set:
\[
P \ := \ \{v+w:\langle w,y-x\rangle=0\}\cap X \ \subseteq \ \R^n
\]
So $P$ is the intersection of $X$ with the affine space passing through the midpoint $v$ and orthogonal to the segment $y-x$; it follows $\dim P=n-1$ since $X$ is open. We want to show that there are enough points of $P$ which are arbitrarily close to $v$ such that the pair of line segments $[x,z]$ and $[z,y]$ each have finite intersection with $B$. For this, we can consider the ``bad'' points of $P$:
\[
P_1 \ := \ \{z\in P:\dim(([x,z]\cup[z,y])\cap B)=1\}
\]
Then $P_1$ is a definable set by \emph{Definability of dimension}~\cite[4.1.5]{van1998tame}; it suffices to show that $\dim P_1<n-1$. For this, first trim the set $B$ down:
\[
B' \ := \ B\cap \bigcup_{z\in P_1}[x,z]\cup [z,y]
\]
and consider the definable surjection:
\[
f \ : \ B'\to P_1, \quad b \ \mapsto \ f(b) \ := \ \text{the unique $z\in P_1$ such that $b\in [x,z]\cup [z,y]$}
\]
Note that by construction we have:
\[
P_1 \ = \ \{z\in P_1:\dim f^{-1}(z)=1\}
\]
and thus by the \emph{Dimension Formula}~\cite[4.1.6(ii)]{van1998tame} we have:
\[
1+\dim P_1 \ = \ \dim f^{-1}[P_1] \ = \ \dim B' \ < \ n
\]
and thus $\dim P_1<n-1$.
\end{proof}

Note: the lemma is still true without assuming that $X$ is definable.

\subsection*{A special case of the main theorem}

\begin{lemma}\label{main_theorem_special_case}
If $f:X\to\R^m$ is definable, then:
\[
L^{(\alpha,\beta)}(f,X) \ = \ \sup_{x\in \operatorname{Diff}(f)}\|J_f(x)\|_{\alpha,\beta}
\]
\end{lemma}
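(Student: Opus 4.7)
The plan is to prove two inequalities. The direction
\[
\sup_{x\in\operatorname{Diff}(f)}\|J_f(x)\|_{\alpha,\beta} \ \leq \ L^{(\alpha,\beta)}(f,X)
\]
is immediate: Lemma~\ref{Easy_Lemma_Lipschitz_constant_bounds_Jacobian} already gives the bound $\|J_f(z)\|_{\alpha,\beta}\leq L^{(\alpha,\beta)}(f,X)$ pointwise at each $z\in\operatorname{Diff}(f)$, so the supremum is bounded as well.

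For the nontrivial inequality, the idea is to bound the difference quotient $\|f(y)-f(x)\|_\beta/\|y-x\|_\alpha$ for arbitrary distinct $x,y\in X$, and the natural tool is Lemma~\ref{basic_lemma_Jacobian_bounds_diff_quot}, which requires $f$ to be $C^1$ on an open line segment. To arrange this, I would invoke definability: by Lemma~\ref{diff_facts_for_definable_functions}, the set $B:=X\setminus\operatorname{Diff}^1(f)$ is definable with $\dim B<n$. Fix $\varepsilon>0$ and apply the Path Lemma~\ref{path_lemma} to $x,y,B$; this yields points $x=x_0,x_1,\ldots,x_{k+1}=y$ in the convex set $X$ such that the open segments $L_i:=\{(1-t)x_i+tx_{i+1}:t\in(0,1)\}$ each avoid $B$, i.e., $f$ is $C^1$ on each $L_i$, and such that $\sum_{i=0}^k\|x_{i+1}-x_i\|_\alpha \leq \|y-x\|_\alpha+\varepsilon$.

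Now apply Lemma~\ref{basic_lemma_Jacobian_bounds_diff_quot} to each segment $[x_i,x_{i+1}]\subseteq X$ (using convexity of $X$) to obtain
\[
\|f(x_{i+1})-f(x_i)\|_\beta \ \leq \ \sup_{z\in L_i}\|J_f(z)\|_{\alpha,\beta}\cdot\|x_{i+1}-x_i\|_\alpha \ \leq \ M\cdot\|x_{i+1}-x_i\|_\alpha,
\]
where $M:=\sup_{z\in\operatorname{Diff}(f)}\|J_f(z)\|_{\alpha,\beta}$ (recall $\operatorname{Diff}^1(f)\subseteq\operatorname{Diff}(f)$). Summing the telescoping estimate with the triangle inequality and using the path length bound,
\[
\|f(y)-f(x)\|_\beta \ \leq \ \sum_{i=0}^k\|f(x_{i+1})-f(x_i)\|_\beta \ \leq \ M\cdot(\|y-x\|_\alpha+\varepsilon).
\]
Letting $\varepsilon\downarrow 0$, dividing by $\|y-x\|_\alpha$, and taking the supremum over distinct $x,y\in X$ yields $L^{(\alpha,\beta)}(f,X)\leq M$, completing the proof.

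The main conceptual step is the bridge from the pointwise Fr\'{e}chet-differentiability of definable Lipschitz functions to a global inequality, and the crux is the replacement of measure-theoretic arguments (integration along almost-every line segment, as in the classical proof of Theorem~\ref{thm:wrong}) by the definable Path Lemma, which provides a concrete piecewise-linear approximant of $[x,y]$ lying in $\operatorname{Diff}^1(f)$ up to finitely many vertices. I expect no genuine obstacle beyond the bookkeeping of combining the segment-wise bounds and sending $\varepsilon\to 0$; the assumption of definability of $f$ (used via Lemma~\ref{diff_facts_for_definable_functions}) is essential precisely to ensure that $B$ has dimension strictly less than $n$ so that the Path Lemma is applicable.
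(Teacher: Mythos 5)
Your proof is correct and follows essentially the same route as the paper: establish the easy inequality via Lemma~\ref{Easy_Lemma_Lipschitz_constant_bounds_Jacobian}, then use the Path Lemma~\ref{path_lemma} to replace $[x,y]$ by a polygonal path whose open segments avoid the low-dimensional bad set, apply Lemma~\ref{basic_lemma_Jacobian_bounds_diff_quot} segment by segment, and send $\varepsilon\downarrow 0$. The only (inessential) difference is that you take $B:=X\setminus\operatorname{Diff}^1(f)$ so that each open segment lies in $\operatorname{Diff}^1(f)$ and the $C^1$ branch of Lemma~\ref{basic_lemma_Jacobian_bounds_diff_quot} applies directly, whereas the paper takes $B:=X\setminus\operatorname{Diff}(f)$ and invokes the ``definable and Fr\'{e}chet-differentiable'' branch; both are valid since $\dim B<n$ in either case by Lemma~\ref{diff_facts_for_definable_functions}.
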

\begin{proof}
Set $L_0:=L^{^{(\alpha,\beta)}}(f,X)$ and $L_1:=\sup_{x\in\operatorname{Diff}(f)}\|J_f(x)\|_{\alpha,\beta}$. The inequality $L_0\geq L_1$ is clear by Lemma~\ref{Easy_Lemma_Lipschitz_constant_bounds_Jacobian}. To show $L_0\leq L_1$, we may assume that $L_1$ is finite. Let $x,y\in X$ be arbitrary distinct points and let $\varepsilon>0$ be arbitrary; it suffices to show:
\[
\frac{\|f(y)-f(x)\|_{\beta}}{\|y-x\|_{\alpha}} \ \leq \ L_1+\varepsilon
\]
Set $\varepsilon':=\varepsilon\|y-x\|_{\alpha}/L_1$ and apply the Path Lemma~\ref{path_lemma} to obtain $k\geq 0$, points $x_0=x,x_1,\ldots,x_k,x_{k+1}=y$ in $X$ such that:
\begin{itemize}
\item $\sum_{0\leq i\leq k}\|x_{i+1}-x_i\|_{\alpha}-\|y-x\|_{\alpha}<\varepsilon'$, and
\item $\{(1-t)x_i+tx_{i+1}:t\in (0,1)\}\cap B=\varnothing$ for each $i=0,\ldots,k$, where $B:=X\setminus \operatorname{Diff}(f)$; note that $\dim B<n$.
\end{itemize}
Then we have:
\begin{align*}
\|f(y)-f(x)\|_{\beta} \ &\leq \ \sum_{0\leq i\leq k}\|f(x_{i+1})-f(x_i)\|_{\beta} \quad\text{by Triangle inequality}\\
&\leq \ \sum_{0\leq i\leq k}L_1\|x_{i+1}-x_i\|_{\alpha} \quad\text{by Lemma~\ref{basic_lemma_Jacobian_bounds_diff_quot}}\\
&< \ L_1\|y-x\|_{\alpha}+L_1\varepsilon' \quad\text{by choice of $x_i$'s}
\end{align*}
Thus:
\[
\frac{\|f(y)-f(x)\|_{\beta}}{\|y-x\|_{\alpha}} \ < \ L_1+\frac{L_1\varepsilon'}{\|y-x\|_{\alpha}} \ = \ L_1+\varepsilon \qedhere
\]
\end{proof}

\noindent
As an application, can now prove Theorem~\ref{thm:wrong} in the case that $f$ is definable:

\begin{cor}\label{Jordan_theorem_definable}
If $f$ is definable, then:
\[
L^{(\alpha,\beta)}(f,X) \ = \ \sup_{x\in X,G\in J_f^c(x)}\|G\|_{\alpha,\beta}
\]
\end{cor}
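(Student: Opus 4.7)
The plan is to combine the ``definable special case'' Lemma~\ref{main_theorem_special_case} with the abstract ``sup template'' Lemma~\ref{sup_equivalence_template}, noting that the Clarke Jacobian is defined exactly so as to match the right-hand side of the latter.

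First, I would invoke Lemma~\ref{main_theorem_special_case} to reduce the problem to showing
\[
\sup_{x\in\operatorname{Diff}(f)}\|J_f(x)\|_{\alpha,\beta} \ = \ \sup_{x\in X,\,G\in J_f^c(x)}\|G\|_{\alpha,\beta}.
\]
This replaces the a priori inaccessible quantity $L^{(\alpha,\beta)}(f,X)$ with a supremum of Jacobian matrix norms over points where $f$ is Fr\'{e}chet-differentiable.

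Second, I would apply (the proof of) Lemma~\ref{sup_equivalence_template}. Although that lemma is stated for a map $J:S\to\R^m$ and norm $\|\cdot\|_{\beta}$, the argument uses only that $\|\cdot\|$ is a norm on a finite-dimensional $\R$-vector space and that the norm is continuous; hence it applies verbatim with $S:=\operatorname{Diff}(f)\subseteq X$, the single-valued map $J:=J_f\restriction_{\operatorname{Diff}(f)}:S\to\R^{m\times n}$, and the matrix norm $\|\cdot\|_{\alpha,\beta}$ on $\R^{m\times n}$. This yields
\[
\sup_{x\in S}\|J_f(x)\|_{\alpha,\beta} \ = \ \sup_{x\in X,\,G\in\operatorname{conv}(D_J(x))}\|G\|_{\alpha,\beta},
\]
where $D_J(x)=\{\lim_i J_f(x_i):x_i\to x,\, x_i\in\operatorname{Diff}(f)\}$.

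Third, I would observe that by Definition~\ref{def_Clarke_Jacobian} one has exactly $\operatorname{conv}(D_J(x)) = J_f^c(x)$ for every $x\in X$. Chaining the two equalities gives the Corollary. I do not anticipate a main obstacle here: the definable hypothesis enters only through Lemma~\ref{main_theorem_special_case}, and once that is in hand the remainder is a bookkeeping exercise combining the definition of $J_f^c$ with the template lemma; the only small check is that Lemma~\ref{sup_equivalence_template} is norm-agnostic enough to be applied to the matrix norm $\|\cdot\|_{\alpha,\beta}$ rather than to $\|\cdot\|_{\beta}$.
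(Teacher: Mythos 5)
Your proof is correct and takes exactly the route the paper intends: the paper's own proof is the one-liner ``Immediately follows from Lemmas~\ref{main_theorem_special_case} and~\ref{sup_equivalence_template},'' which unpacks to precisely the chain you give, including the identification $\operatorname{conv}(D_J(x))=J_f^c(x)$ for $S=\operatorname{Diff}(f)$ and the observation that Lemma~\ref{sup_equivalence_template} applies with the matrix norm $\|\cdot\|_{\alpha,\beta}$ on $\R^{m\times n}$ in place of $\|\cdot\|_{\beta}$ on $\R^m$.
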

\begin{proof}
Immediately follows from Lemmas~\ref{main_theorem_special_case} and~\ref{sup_equivalence_template}.
\end{proof}

\subsection*{Good maps and the equivalence lemma} In this subsection, to simplify notation, given a set-valued map $D:X\rightrightarrows\R^{m\times n}$, we set:
\[
s(D) \ := \ \sup_{x\in X,G\in D(x)}\|G\|_{\alpha,\beta}
\]

\begin{definition}
We say a set-valued map $D:X\rightrightarrows\R^{m\times n}$ is \textbf{good} for $f$ if:
\begin{itemize}
\item $D(x)\subseteq J_f^c(x)$ for every $x\in X$, and
\item $D(x)$ is nonempty for almost every $x\in X$
\end{itemize}
\end{definition}

\begin{example}\label{good_map_examples}
Here are the two main examples of good maps for $f$:
\begin{itemize}
\item the full Clarke Jacobian $J_f^c:X\rightrightarrows\R^{m\times n}$ is good for $f$
\item the usual Jacobian:
\[
J_f \ : \ X\rightrightarrows\R^{m\times n},\quad x \ \mapsto \ J_f(x) \ := \ \begin{cases}
\{J_f(x)\} & \text{if $x\in\operatorname{Diff}(f)$} \\
\varnothing & \text{otherwise}
\end{cases}
\]
is also good for $f$, by Rademacher's Theorem~\ref{Rademachers_theorem}
\end{itemize}
\end{example}

\medskip\noindent
If $D_0,D_1$ are good maps for $f$, then we define:
\[
D_0 \ \leq \ D_1 \quad :\Longleftrightarrow \quad D_0(x) \ \subseteq \ D_1(x)\quad\text{for every $x\in X$}
\]
The binary relation $\leq$ endows the collection of all good maps for $f$ with the structure of a \emph{partial order}; in fact, this partial order is always a \emph{join-semilattice} (with join operation given by the union $D_0\vee D_1:=D_0\cup D_1$). The following is obvious:

\begin{lemma}\label{good_maps_trivial_inequality}
If $D_0\leq D_1$ are good maps for $f$, then $s(D_0)\leq s(D_1)$.
\end{lemma}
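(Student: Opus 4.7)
The plan is to reduce the inequality $s(D_0)\leq s(D_1)$ to the elementary monotonicity of suprema with respect to inclusion of indexing sets. Since each $s(D_i)$ is, by definition, a supremum of the real quantity $\|G\|_{\alpha,\beta}$ as $(x,G)$ ranges over the graph $\operatorname{gph}D_i=\{(x,G):x\in X,\ G\in D_i(x)\}$, and the hypothesis $D_0\leq D_1$ is exactly the assertion $\operatorname{gph}D_0\subseteq \operatorname{gph}D_1$, the inequality should fall out immediately.

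First I would unpack the definitions and rewrite
\[
s(D_0) \ = \ \sup\bigl\{\|G\|_{\alpha,\beta}:(x,G)\in\operatorname{gph}D_0\bigr\}, \qquad s(D_1) \ = \ \sup\bigl\{\|G\|_{\alpha,\beta}:(x,G)\in\operatorname{gph}D_1\bigr\}.
\]
Next, I would translate the relation $D_0\leq D_1$ (i.e.\ $D_0(x)\subseteq D_1(x)$ for every $x\in X$) into the pointwise statement that every pair $(x,G)$ contributing to the supremum defining $s(D_0)$ also contributes to the supremum defining $s(D_1)$. Finally, applying the standard fact that $\sup A\leq \sup B$ whenever $A\subseteq B\subseteq [0,+\infty]$ yields $s(D_0)\leq s(D_1)$.

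There is no real obstacle; this is pure bookkeeping. The only minor point worth flagging is the edge case in which $D_0(x)$ is empty for some (or all) $x\in X$. Under the convention $\sup\varnothing=-\infty$, or under the convention that an empty collection contributes nothing to the supremum of a set of nonnegative reals, the argument is unaffected: an empty fiber of $D_0$ simply removes those $(x,G)$ pairs from consideration on the left-hand side without altering the right-hand side. This lemma is presumably recorded here so that downstream one can sandwich an arbitrary good map $D$ between $J_f$ and $J_f^c$ (cf.\ Example~\ref{good_map_examples}) and thereby reduce Theorem~\ref{thm:main} to the case $D=J_f^c$ already handled by Corollary~\ref{Jordan_theorem_definable}.
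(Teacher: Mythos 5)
Your argument is correct and is precisely the elementary monotonicity-of-suprema observation the paper has in mind (the paper labels the lemma ``obvious'' and gives no proof). The only thing I would adjust is the final sentence of commentary: the actual downstream sandwiching in the Equivalence Lemma~\ref{equivalence_lemma} squeezes $D$ between $D'=D\cap\{J_f(x)\}$ and $J_f^c$, and the ultimate reduction in Theorem~\ref{main_thm_proved} is to the selection map $\{J_f(x)\}$ via Lemma~\ref{main_theorem_special_case}, not directly to Corollary~\ref{Jordan_theorem_definable}.
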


\noindent
Under the assumption of definability, good maps have the following property:

\begin{lemma}
If $f$ is definable and $D$ is good for $f$, then $D(x)=\{J_f(x)\}$ for almost every $x\in X$.
\end{lemma}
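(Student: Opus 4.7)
The plan is to combine the three ingredients already assembled in the appendix: (i) definability of $f$ forces $C^1$-smoothness off a lower-dimensional (hence null) subset of $X$; (ii) at $C^1$ points the Clarke Jacobian collapses to a singleton; and (iii) the goodness hypothesis pins $D(x)$ inside $J_f^c(x)$ and makes it nonempty almost everywhere. Putting these together on the intersection of three full-measure sets should give the claim directly, with no quantitative estimate or integration required.

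Concretely, the first step I would take is to set $N_1 := X \setminus \operatorname{Diff}^1(f)$ and invoke Lemma~\ref{diff_facts_for_definable_functions}(2), which says $\dim N_1 < n$ since $f$ is definable; this immediately makes $N_1$ a Lebesgue null subset of $X$. Next, for any point $x \in X \setminus N_1 = \operatorname{Diff}^1(f)$, the function $f$ is $C^1$ at $x$, so by the ``moreover'' clause of Lemma~\ref{Clarke_Jacobian_special_case} we have the exact equality $J_f^c(x) = \{J_f(x)\}$. In particular, at every such $x$ the inclusion $D(x) \subseteq J_f^c(x) = \{J_f(x)\}$ forces $D(x)$ to be either $\varnothing$ or $\{J_f(x)\}$.

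Finally, let $N_2 := \{x \in X : D(x) = \varnothing\}$; by the second clause in the definition of a good map, $N_2$ is a null set. Then on the full-measure set $X \setminus (N_1 \cup N_2)$, the value $D(x)$ is both nonempty and a subset of $\{J_f(x)\}$, so $D(x) = \{J_f(x)\}$, which is exactly the conclusion.

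I do not anticipate any real obstacle here: all the heavy lifting (Smooth Cell Decomposition behind Lemma~\ref{diff_facts_for_definable_functions}, and the computation of the Clarke Jacobian at $C^1$ points behind Lemma~\ref{Clarke_Jacobian_special_case}) has already been done earlier in the appendix, and Rademacher's theorem is not even needed since definability gives a much stronger statement ($\dim$-based nullity on a $C^1$ locus, not just a.e. Fr\'{e}chet-differentiability). The only mild subtlety is to observe that $N_2$ being null is part of the very definition of ``good,'' so it may be worth stating explicitly that ``almost every'' in the definition is unpacked here as ``the complement has Lebesgue measure zero'' before taking the union $N_1 \cup N_2$.
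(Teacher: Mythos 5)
Your proof is correct and follows essentially the same route as the paper's: both invoke Lemma~\ref{diff_facts_for_definable_functions}(2) to get $C^1$-smoothness off a null set, then Lemma~\ref{Clarke_Jacobian_special_case} to collapse $J_f^c$ to a singleton there, and finally the goodness hypothesis to pin $D(x)$ down. You have merely made explicit the two null sets $N_1, N_2$ whose union the paper's one-line a.e.~argument implicitly takes; this is a faithful, slightly more verbose unpacking, not a different approach.
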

\begin{proof}
Since $f$ is $C^1$ at $x$ for almost every $x\in X$ by Lemma~\ref{diff_facts_for_definable_functions}, it follows that $J_f^c(x)=\{J_f(x)\}$ for almost every $x\in X$ by Lemma~\ref{Clarke_Jacobian_special_case}, and thus $D(x)=\{J_f(x)\}$ for almost every $x\in X$ by the definition of \emph{good map}.
\end{proof}

\noindent
It follows immediately that when $f$ is definable, then the partial order $\leq$ is a lattice:

\begin{lemma}\label{good_maps_form_lattice}
If $f$ is definable and $D_0,D_1$ are good maps for $f$, then $D:=D_0\cap D_1$ is a good map for $f$.
\end{lemma}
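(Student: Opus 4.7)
The plan is to verify the two bullet-point conditions in the definition of \emph{good map} directly for $D := D_0 \cap D_1$ (defined pointwise, i.e.\ $D(x) := D_0(x) \cap D_1(x)$). The containment condition is immediate and the nonemptiness condition will follow from the previous lemma via a standard ``finite union of null sets is null'' argument.

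First I would observe that for every $x \in X$ we have
\[
D(x) \ = \ D_0(x) \cap D_1(x) \ \subseteq \ D_0(x) \ \subseteq \ J_f^c(x),
\]
using only that $D_0$ is good for $f$; this verifies the first bullet in the definition of \emph{good map} and does not use definability of $f$.

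For the second bullet, I would invoke the lemma immediately preceding the statement: since $f$ is definable and $D_0, D_1$ are each good for $f$, there exist Lebesgue null sets $N_0, N_1 \subseteq X$ such that $D_0(x) = \{J_f(x)\}$ for all $x \in X \setminus N_0$ and $D_1(x) = \{J_f(x)\}$ for all $x \in X \setminus N_1$. Set $N := N_0 \cup N_1$; this is a finite union of null sets, hence itself a Lebesgue null set. For every $x \in X \setminus N$ both equalities hold simultaneously, so
\[
D(x) \ = \ D_0(x) \cap D_1(x) \ = \ \{J_f(x)\} \cap \{J_f(x)\} \ = \ \{J_f(x)\},
\]
which is nonempty. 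Thus $D(x)$ is nonempty for all $x$ outside the null set $N$, verifying the second bullet.

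The argument has no genuine obstacle: the only subtle point is that one must use the \emph{definability} of $f$ (via the preceding lemma) to guarantee that the ``almost everywhere nonemptiness'' of $D_0$ and $D_1$ coheres into almost everywhere nonemptiness of their intersection. Without definability, one could imagine good maps $D_0, D_1$ that agree with each other nowhere, in which case $D_0 \cap D_1$ could be empty everywhere; the lemma is really leveraging the fact that, in the definable setting, any good map is essentially pinned down to the ordinary Jacobian on a full-measure set.
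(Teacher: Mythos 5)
Your proof is correct and is exactly the argument the paper has in mind: the paper gives no explicit proof, asserting only that the lattice property ``follows immediately'' from the preceding lemma (that any good map for a definable $f$ equals $\{J_f(x)\}$ almost everywhere). You have simply spelled out the standard finite-union-of-null-sets step that makes ``immediately'' precise, along with the (trivial) containment check, so this is the same approach.
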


\begin{lemma}[Equivalence lemma]\label{equivalence_lemma}
If $f$ is definable and $D_0,D_1$ are two set-valued maps which are good for $f$, then $s(D_0)=s(D_1)$.
\end{lemma}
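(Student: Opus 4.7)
The strategy will be to show that $s(D) = s(J_f^c)$ for every map $D$ that is good for $f$; once this is established, applying it to $D_0$ and $D_1$ immediately yields $s(D_0) = s(J_f^c) = s(D_1)$.

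One direction is easy: since $D(x) \subseteq J_f^c(x)$ for every $x \in X$ by the definition of goodness, we have $D \leq J_f^c$ in the partial order introduced earlier, so Lemma~\ref{good_maps_trivial_inequality} gives $s(D) \leq s(J_f^c)$ for free. The work is in showing the reverse inequality $s(J_f^c) \leq s(D)$, and this is where I would use definability of $f$ together with Warga's Theorem.

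Here is the plan. Let $N := \{x \in X : D(x) \neq \{J_f(x)\}\}$. Since $f$ is definable and $D$ is good for $f$, the unnamed lemma preceding Lemma~\ref{good_maps_form_lattice} says that $D(x) = \{J_f(x)\}$ for almost every $x \in X$, so $N$ is Lebesgue null. Warga's Theorem (Lemma~\ref{Clarke_Jacobian_blind_null}) then gives $J_f^c = J_f^{c,N}$, which means that for every $z \in X$ and every $G \in J_f^c(z)$, $G$ lies in the convex hull of matrices of the form $\lim_{j\to\infty} J_f(z_j)$ with $z_j \to z$ and $z_j \in \operatorname{Diff}(f) \setminus N$. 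For any such $z_j$, the condition $z_j \notin N$ forces $J_f(z_j) \in D(z_j)$, so $\|J_f(z_j)\|_{\alpha,\beta} \leq s(D)$; Lemma~\ref{basic_norm_sup_facts}(1) then transfers this bound to $\|\lim_{j\to\infty} J_f(z_j)\|_{\alpha,\beta}$, and Lemma~\ref{basic_norm_sup_facts}(2) transfers it to the convex hull, giving $\|G\|_{\alpha,\beta} \leq s(D)$. Since $z \in X$ and $G \in J_f^c(z)$ were arbitrary, $s(J_f^c) \leq s(D)$, as desired.

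The only delicate point is ensuring that the ``bad'' set $N$ where $D$ disagrees with the Jacobian selection can be harmlessly excised from the construction of the Clarke Jacobian; this is exactly what Warga's Theorem guarantees, and it is the only nontrivial input beyond the bookkeeping already developed in the preceding lemmas. Everything else is a continuity-plus-convex-hull argument of the same template as Lemma~\ref{sup_equivalence_template}.
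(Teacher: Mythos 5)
Your proof is correct and follows essentially the same route as the paper: both reduce to showing $s(D)=s(J_f^c)$ for an arbitrary good $D$, invoke the lemma preceding Lemma~\ref{good_maps_form_lattice} to obtain a null set $N$ outside which $D(x)=\{J_f(x)\}$, and then use Warga's theorem (Lemma~\ref{Clarke_Jacobian_blind_null}) to replace $J_f^c$ by $J_f^{c,N}$ so that every Clarke limit sequence can be taken to lie in the domain of $D$. The only difference is one of packaging: the paper channels the limit-and-convex-hull bookkeeping through the auxiliary map $D':=D\cap\{J_f\}$ and the prepackaged Lemma~\ref{sup_equivalence_template}, whereas you unfold that same argument directly from Lemma~\ref{basic_norm_sup_facts}.
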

\begin{proof}
It suffices to show that $s(D)=s(J_f^c)$, where $D$ is an arbitrary good map for $f$. Define $D':=D\cap\{J_f(x)\}$, which is also good map for $f$ by Lemma~\ref{good_maps_form_lattice} and Example~\ref{good_map_examples}; it suffices to show $s(D')=s(J_f^c)$, by Lemma~\ref{good_maps_trivial_inequality}. To set notation, consider:
\begin{itemize}
\item $S:=\operatorname{domain}(D')=\{x\in X:D'(x)\neq\varnothing\}$, so $S\subseteq\operatorname{Diff}(f)$
\item $N:=X\setminus S$, so $N$ has Lebesgue measure zero
\item $J:=J_f|_S$; thus $J:S\to\R^{m\times n}$ fits the setup of Lemma~\ref{sup_equivalence_template}
\item thus we may further define $D_J:X\rightrightarrows\R^{m\times n}$ to be $D_J(x):=\{\lim_{i\to\infty}J(x_i):x_i\to x,x_i\in S\}$
\end{itemize}
Then:
\begin{align*}
s(D') \ &= \ \sup_{x\in S}\|J(x)\|_{\alpha,\beta} \quad\text{by definition} \\ 
&= \ \sup_{x\in X, G\in\operatorname{conv}(D_J(x))}\|G\|_{\alpha,\beta} \quad\text{by Lemma~\ref{sup_equivalence_template}} \\ 
&= \ s(J_f^{c,N}) \quad\text{by definition} \\ 
&= \ s(J_f^c) \quad\text{by Lemma~\ref{Clarke_Jacobian_blind_null}} \qedhere
\end{align*}
\end{proof}

\subsection*{The main theorem}

We may now prove the main Theorem~\ref{thm:main}:

\begin{thm}\label{main_thm_proved}
If $f$ is definable and $D$ is good for $f$, then:
\[
L^{(\alpha,\beta)}(f,X) \ = \ \sup_{x\in X, G\in D(x)}\|G\|_{\alpha,\beta}
\]
\end{thm}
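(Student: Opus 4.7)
The plan is to observe that essentially all the work has already been done in the preceding lemmas; Theorem~\ref{main_thm_proved} is just the combination of two facts already proved. The first is Corollary~\ref{Jordan_theorem_definable}, which identifies $L^{(\alpha,\beta)}(f,X)$ with $s(J_f^c)$ under the standing assumption that $f$ is definable. The second is the Equivalence Lemma~\ref{equivalence_lemma}, which says that $s(D)$ takes the same value for every good map $D$ for $f$.

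First I would note that $J_f^c$ itself is a good map for $f$, as recorded in Example~\ref{good_map_examples}. Then, given an arbitrary good map $D$ for $f$, the Equivalence Lemma~\ref{equivalence_lemma} gives $s(D) = s(J_f^c)$. Combining with Corollary~\ref{Jordan_theorem_definable}, which states $L^{(\alpha,\beta)}(f,X) = s(J_f^c)$, we obtain
\[
L^{(\alpha,\beta)}(f,X) \ = \ s(J_f^c) \ = \ s(D) \ = \ \sup_{x\in X, G\in D(x)}\|G\|_{\alpha,\beta},
\]
which is precisely the conclusion of the theorem.

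There is essentially no obstacle at this final step; the real work was done earlier in three places: (i) Lemma~\ref{main_theorem_special_case}, where the Path Lemma~\ref{path_lemma} together with Lemma~\ref{basic_lemma_Jacobian_bounds_diff_quot} was used to show that one can estimate $L^{(\alpha,\beta)}(f,X)$ using only Jacobians at points of Fr\'{e}chet-differentiability, (ii) Corollary~\ref{Jordan_theorem_definable}, where Lemma~\ref{sup_equivalence_template} was invoked to pass from Jacobians to the Clarke Jacobian, and (iii) the Equivalence Lemma~\ref{equivalence_lemma}, where Warga's Theorem~\ref{Clarke_Jacobian_blind_null} together with the fact (consequence of Lemma~\ref{diff_facts_for_definable_functions}) that $J_f^c(x) = \{J_f(x)\}$ almost everywhere for definable $f$ allowed the comparison of $s(D)$ with $s(J_f^c)$ through an intermediate map $D' = D \cap \{J_f(\cdot)\}$. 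Once these are in hand, the present theorem is just the statement that applying these results in sequence yields the desired equality for every good map $D$.
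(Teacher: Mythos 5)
Your proof is correct and takes essentially the same route as the paper: both rest on the Equivalence Lemma~\ref{equivalence_lemma}, which reduces the problem to verifying the equality for a single good map. The only cosmetic difference is the choice of that representative good map: you anchor the argument at $J_f^c$ via Corollary~\ref{Jordan_theorem_definable}, whereas the paper anchors it directly at the partial map $x\mapsto\{J_f(x)\}$ via Lemma~\ref{main_theorem_special_case} (a marginally shorter route, since Corollary~\ref{Jordan_theorem_definable} is itself derived from that lemma).
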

\begin{proof}
By the Equivalence Lemma~\ref{equivalence_lemma} it suffices to prove the stated equality for at least one $D$ which is good for $f$. 
Let $D$ be the good map $\{J_f(x)\}$ (cf. Example~\ref{good_map_examples});
then the equality holds for this $D$ by Lemma~\ref{main_theorem_special_case}.
\end{proof}

\end{document}